\newcommand{\thickhline}{%
    \noalign {\ifnum 0=`}\fi \hrule height 1pt
    \futurelet \reserved@a \@xhline
}
\newcolumntype{"}{@{\hskip\tabcolsep\vrule width 1pt\hskip\tabcolsep}}
\algrenewcommand\algorithmicrequire{\textbf{Input:}}
\algrenewcommand\algorithmicensure{\textbf{Output:}}
\algnewcommand{\NULL}{\textsc{null}}
\newtheorem{definition}{Definition}
\title{Spectral Feature Augmentation for Graph Contrastive Learning and Beyond}
\author{
    Yifei Zhang$^{1}$, Hao Zhu$^{2,3}$, Zixing Song$^{1}$, Piotr Koniusz$^{3,2,}$\thanks{Corresponding author. PK was primarily concerned with the theoretical analysis (\eg, Prop.~\ref{prop:PowerIter1} \& \ref{prop:PowerIterVar}).}, Irwin King$^{1}$\\
}
\newcites{latex}{References}
\newtheorem{proposition}{Proposition}
\newtheorem{theorem}{Theorem}
\newtheorem{lemma}[theorem]{Lemma}
\DeclareRobustCommand\onedot{\futurelet\@let@token\bmv@onedotaux}
\def\bmv@onedotaux{\ifx\@let@token.\else.\null\fi\xspace}
\def\eg{\emph{e.g}\onedot} 
\def\ie{\emph{i.e}\onedot} 
\def\cf{\emph{c.f}\onedot} 
\def\etc{\emph{etc}\onedot} \def\vs{\emph{vs}\onedot}
\def\wrt{w.r.t\onedot}
\def\etal{\emph{et al}\onedot}
\newcommand{\settitle}{\@maketitle}
\def\ps@myheadings{%
    \let\@oddfoot\@empty\let\@evenfoot\@empty
    \def\@evenhead{\thepage\hfil\slshape\leftmark}%
    \def\@oddhead{{\slshape\rightmark}\hfil\thepage}%
    \let\@mkboth\@gobbletwo
    \let\sectionmark\@gobble
    \let\subsectionmark\@gobble
    }
  \renewcommand\maketitle{\begin{titlepage}%
  \let\footnotesize\small
  \let\footnoterule\relax
  \let \footnote \thanks
  \null\vfil
  \vskip 60\p@
  \begin{center}%
    {\LARGE \@title \par}%
    \vskip 3em%
    {\large
     \lineskip .75em%
      \begin{tabular}[t]{c}%
        \@author
      \end{tabular}\par}%
      \vskip 1.5em%
    {\large \@date \par}
  \end{center}\par
  \@thanks\@notice
  \vfil\null
  \end{titlepage}%
  \setcounter{footnote}{0}%
}
\renewcommand\maketitle{\par
  \begingroup
    \renewcommand\thefootnote{\@fnsymbol\c@footnote}%
    \def\@makefnmark{\rlap{\@textsuperscript{\normalfont\color{black}\@thefnmark}}}%
    \long\def\@makefntext##1{\parindent 1em\noindent
            \hb@xt@1.8em{%
                \hss\@textsuperscript{\normalfont\@thefnmark}}##1}%
    \if@twocolumn
      \ifnum \col@number=\@ne
        \@maketitle
      \else
        \twocolumn[\@maketitle]%
      \fi
    \else
      \newpage
      \global\@topnum\z@   
      \@maketitle
    \fi
    \thispagestyle{plain}
    \@thanks
    \if T\copyright@on\insert\footins{\noindent\footnotesize\copyright@text}\fi%
  \endgroup
  \setcounter{footnote}{0}%
}
\begin{document}
\maketitle
\begin{abstract}
%
%
Although  augmentations (\eg, perturbation of graph edges, image crops)  boost the efficiency of Contrastive Learning (CL), feature level augmentation is another plausible, complementary yet not well researched strategy. 
%
%
%
Thus, we present a novel spectral feature argumentation for contrastive learning on graphs (and images). To this end, for each data view, we estimate a low-rank approximation per feature map and  subtract that approximation from the map to obtain its complement. This is achieved by the proposed herein  incomplete power iteration, a non-standard power iteration regime which enjoys two valuable byproducts (under mere one or two iterations): (i) it partially balances spectrum of the feature map, and (ii) it injects the noise into rebalanced singular values of the feature map (spectral augmentation). 
For two views, we align these rebalanced feature maps as such an improved alignment step can focus more on less dominant singular values of matrices of both views, whereas the spectral augmentation does not affect the spectral angle alignment (singular vectors are not perturbed). 
We derive the analytical form for: (i) the incomplete power iteration to capture its spectrum-balancing effect, and (ii) the variance of singular values  augmented implicitly by the noise. We also show that the spectral augmentation improves the generalization bound. 
%
%
%
%
Experiments on graph/image datasets show that our spectral feature augmentation  outperforms 
baselines, and is complementary with other  augmentation strategies and compatible with various contrastive losses. 

\end{abstract}

\section{Introduction}
\begin{figure}[t]
\vspace{-0.7cm}
\centering
\includegraphics[width=0.35\textwidth]{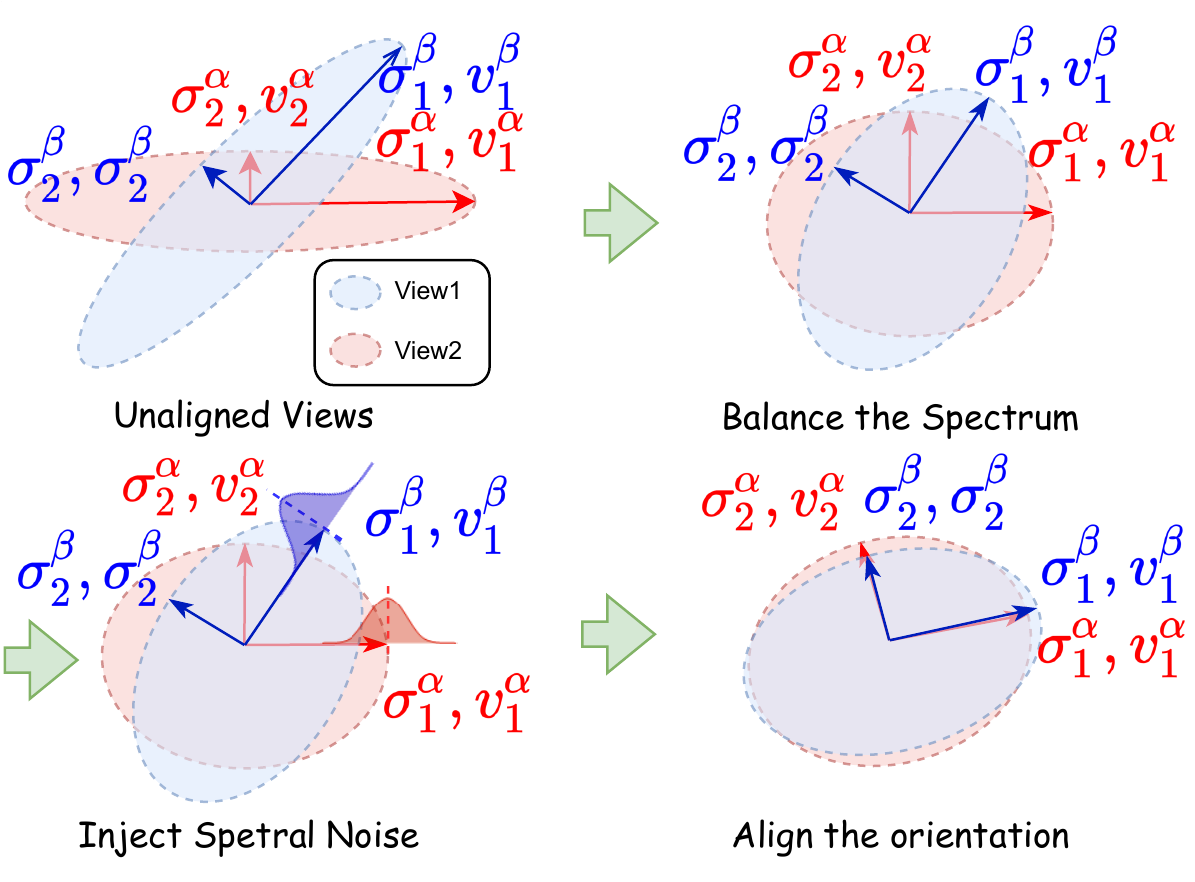}
\caption{Our GCL with spectral feature augmentation by the incomplete power iteration implicitly performs three steps. Let blue and red ellipses represent spectra of feature maps $\mathbf{H}^\alpha$ and $\mathbf{H}^\beta$  of two views. Singular values of unaligned views (top left) are firstly rebalanced (top right) and the noise is injected into singular values (bottom left). Given rebalanced singular values, corresponding singular vectors are aligned with equalized emphasis (leading unbalanced singular values would  emphasize the alignment of leading singular vectors, sacrificing the quality of alignment of other singular vectors).
\label{fig:implicit}
}
\vspace{-0.6cm}
\end{figure}
Semi-supervised and supervised Graph Neural Networks (GNNs) 
 \citep{velivckovic2017graph,hamilton2017inductive, song2022towards,zhang2022graph} require full access to class labels. However, unsupervised GNNs  \cite{klicpera2018predict,wu2019simplifying,zhu2021simple} and recent  Self-Supervised Learning (SSL) models do not  require  labels~\citep{song2021semi, DBLP:conf/ijcai/arvga} to train embeddings. Among SSL methods, Contrastive Learning (CL) achieves comparable performance with its supervised counterparts on many  tasks~\citep{chen2020simple,gao2021simcse}. CL has also been applied recently to the graph domain. A typical Graph Contrastive Learning (GCL) method forms multiple graph views via stochastic augmentation of the input to learn representations by contrasting so-called positive samples with negative samples~\citep{zhu2020deep,peng2020graph,zhu2021contrastive,zhu2021contrastive2,zhang2022costa}. As an indispensable part of GCL, the significance of graph augmentation has been well studied~\citep{hafidi2020graphcl,zhu2021graph,DBLP:journals/corr/autogcl}. 
%
%
Popular random data augmentations are just one strategy to construct views, and their noise may affect adversely downstream tasks~\citep{DBLP:journals/corr/AdversaGA,DBLP:conf/nips/whatGoodVeiw}. Thus, some works ~\citep{DBLP:journals/corr/autogcl, DBLP:conf/nips/whatGoodVeiw,DBLP:journals/corr/AdversaGA} learn graph augmentations but they require supervision.

The above issue motivates us to propose a simple/efficient data augmentation model 
which is complementary with existing augmentation strategies. 
We target Feature Augmentation (FA) as scarcely any FA works  exist in 
the context of CL and GCL.
In the image domain, a simple FA~\citep{DBLP:conf/cvpr/UpchurchGPPSBW17,DBLP:conf/icml/BengioMDR13} showed that perturbing feature representations of an image results in a representation of another image where both images share some semantics~\citep{wang2019implicit}. 
However, perturbing features randomly ignores covariance of feature representations, and ignores semantics correlations. 
Hence, we opt for injecting  random noise into the singular values of feature maps as such a spectral feature augmentation does not alter the orthogonal bases of feature maps by much, thus helping preserve semantics correlations. 

 \begin{figure*}
 \centering
 \begin{subfigure}[b]{0.65\textwidth}
   \includegraphics[width=0.9\textwidth]{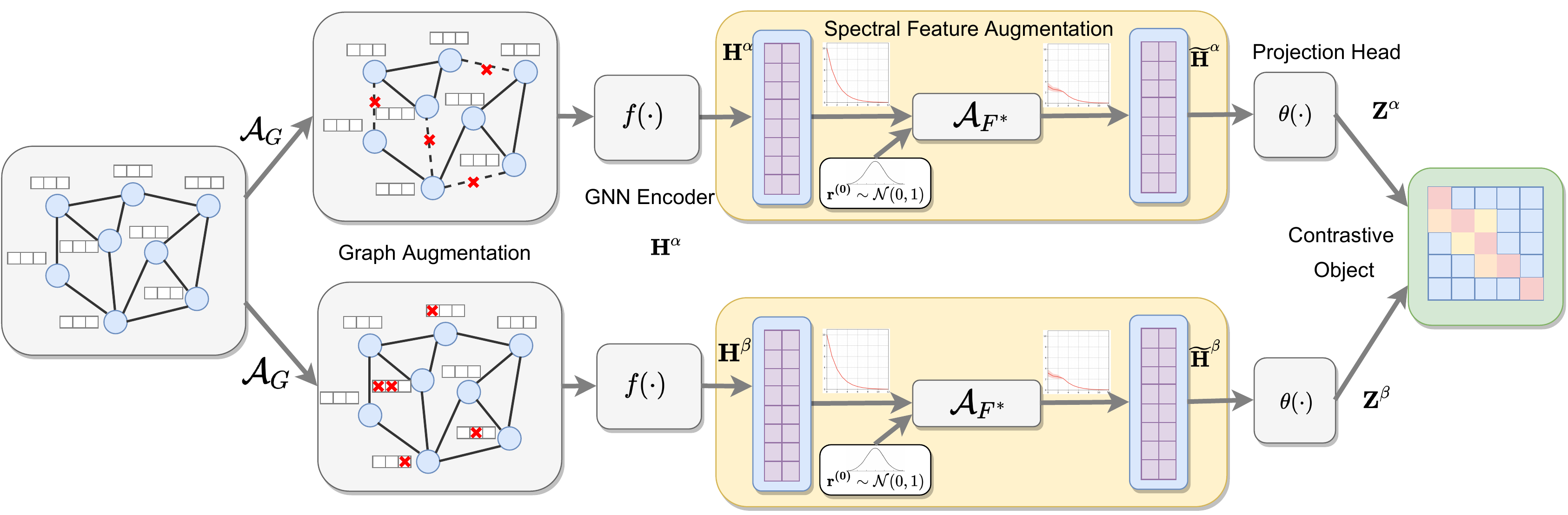}
    \caption{Our pipeline.\label{fig:framework1}}
    \end{subfigure}
    \begin{subfigure}[b]{0.25\textwidth}
    \includegraphics[width=\textwidth,height=3cm]{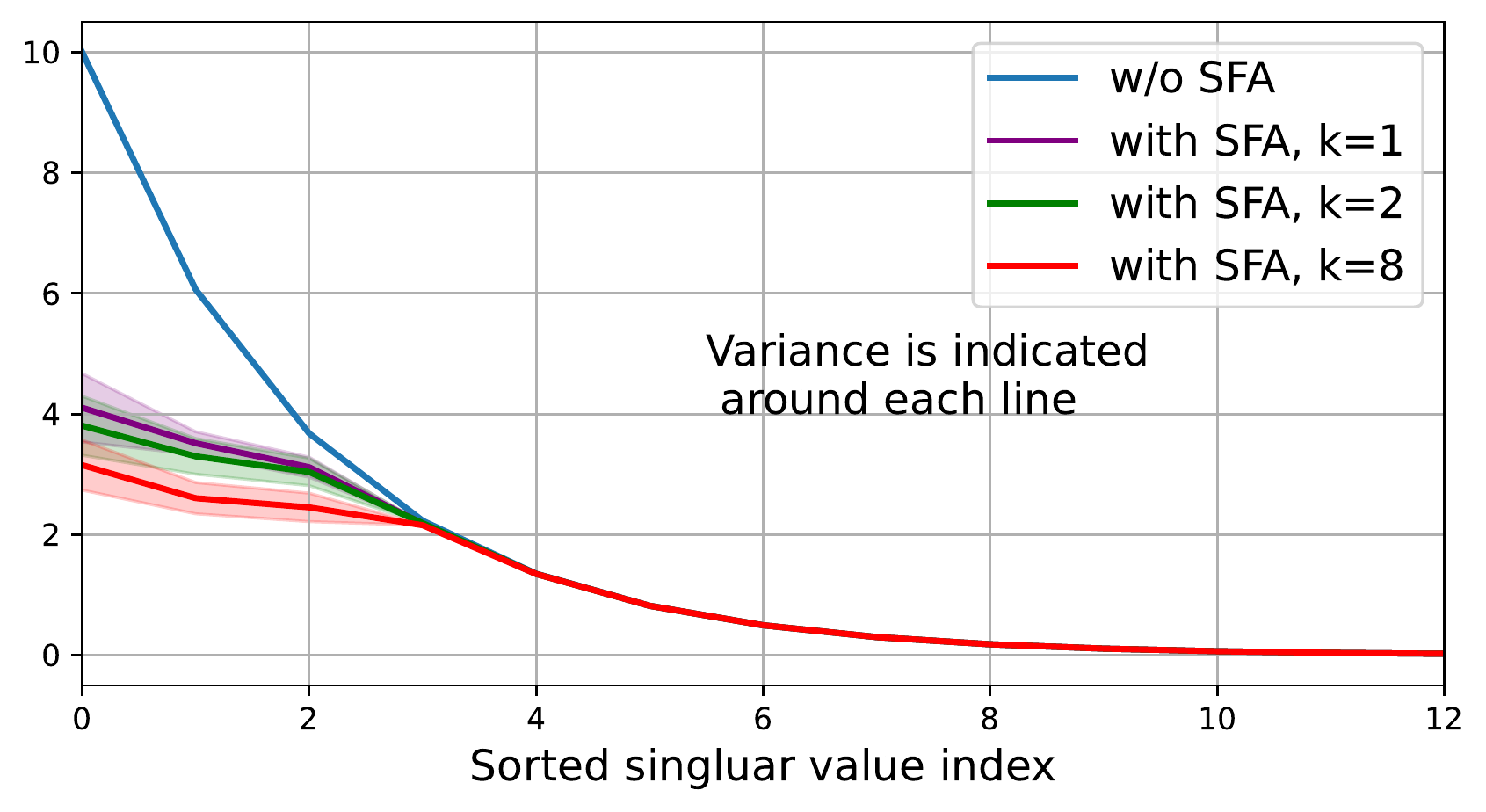}
    \caption{Simulation: spectrum obtained by Alg.~\ref{algo:1}.$\!\!\!$\label{fig:simulation}}
    \end{subfigure}
\vspace{-0.2cm}
\caption{Our GCL model. Two graph views are generated by data augmentation and passed into graph neural network encoders with shared parameters to learn node representations. The proposed spectral feature augmentation rebalances (partially equalizes) the spectrum of each feature map, and implicitly injects the noise into rebalanced singular values. Such representations are fed into the projection head and the contrastive loss. Figure~\ref{fig:implicit} explains the role of our spectral feature augmentation.}
\end{figure*}

Moreover, as typical GCL aligns two data views~\citep{wang2020understanding}, unbalanced singular values of two data views may affect the quality of alignment. As several leading singular values  (acting as weights on the loss) dominate the alignment process, GCL favors aligning the leading singular vectors of two data views while sacrificing remaining orthogonal directions with small singular values. In other words, the unbalanced spectrum leads to a suboptimal orthonormal bases alignment, which results in a suboptimal GCL model.

To address rebalancing of unbalanced spectrum and augmenting leading singular values, we present a novel and efficient {\em Spectral Feature Augmentation} (SFA).
To this end, we propose the so-called incomplete power iteration which, under just one or two iterations, partially balances singular values of feature maps and implicitly injects the noise into these singular values. 
We evaluate our method on various datasets for node level tasks (\ie, node classification and node clustering). We also
show that our  method is compatible with other augmentation strategies and contrastive losses.

\vspace{0.15cm}

We summarize our contributions as follows:

\renewcommand{\labelenumi}{\roman{enumi}.}
\begin{enumerate}[leftmargin=0.5cm]
     \item We propose a simple/efficient spectral feature augmentation for GCL which is independent of different contrastive losses, \ie, we employ 
     InfoNCE and Barlow Twin.
     \vspace{-0.1cm}
     \item We introduce the so-called incomplete power iteration which, under just one or two iterations, partially  balances  spectra of two data views and injects the augmentation noise into their singular values. The rebalanced spectra help align orthonormal bases  of both data views.
     \vspace{-0.1cm}
     \item As the  incomplete power iteration is stochastic in its nature, 
     we derive its analytical form which provably demonstrates its spectrum rebalancing effect in expectation, and captures the variance of the spectral augmentation.
     \vspace{-0.1cm}
     \item For completeness, we devise other spectral augmentation models, based on the so-called MaxExp and Power Norm. operators and Grassman feature maps, whose rebalancing and noise injection profiles differ with our method. 
 \end{enumerate}




\section{Related Work}
\label{sec:relatedwork}

\noindent\textbf{Data Augmentation.} 
Augmentations are usually performed in the input space. In computer vision, image transformations, \ie, rotation, flipping, color jitters, translation,  noise injection~\citep{shorten2019survey}, cutout and random erasure~\citep{devries2017dataset} are  popular. In neural language processing, 
token-level random augmentations, \eg, synonym replacement, word swapping, word insertion, and deletion~\citep{wei2019eda} are used. In transportation, conditional augmentation of road junctions is used \cite{coltrane}. 
In the graph domain, 
attribute masking, edge permutation, and node dropout are popular \citep{you2020graph}. Sun \etal \cite{uai_ke} use adversarial graph perturbations. Zhu \etal \citep{zhu2021graph} use adaptive graph augmentations based on 
the node/PageRank centrality~\citep{page1999pagerank} to mask  edges with varying probability.
%

\vspace{0.1cm}
\noindent\textbf{Feature Augmentation.} Samples can be augmented in the feature space instead of the input space~\citep{feng2021survey}. Wang \etal \citep{wang2019implicit} augment the hidden space features, resulting in auxiliary samples with the same class identity but different semantics. A so-called channel augmentation  perturbs the channels of feature maps~\citep{wang2019implicit} while  GCL approach, COSTA \cite{zhang2022costa}, augments features via random projections. Some few-shot learning approaches augment features  \cite{Zhang_2022_CVPR} while others  estimate the ``analogy'' transformations between samples of known classes to apply them on samples of novel classes \citep{hariharan2017low,schwartz2018delta} or mix foregrounds and backgrounds \cite{zhang2019few}. 
However, ``analogy'' augmentations are not applicable to contrastive learning due to the lack of labels.  

\vspace{0.1cm}
\noindent\textbf{Graph Contrastive Learning.} 
CL is popular in computer vision, NLP~\citep{he2020momentum, chen2020simple, gao2021simcse}, and graph learning. 
In the vision domain,  views are formed by augmentations at the pixel level, whereas in the graph domain, data augmentation may act on node attributes or the graph edges. 
GCL often explores node-node, node-graph, and graph-graph relations for contrastive loss which is similar to contrastive losses in computer vision. Inspired by SimCLR~\citep{chen2020simple}, GRACE~\citep{zhu2020deep}  correlates graph views by pushing closer representations of the same node in different views and separating representations of different nodes, and  Barlow Twin~\citep{zbontar2021barlow}   avoids the so-called dimensional collapse 
~\citep{jing2021understanding}. 

In contrast, we study spectral feature augmentations to perturb/rebalance singular values of both views. We outperform  feature augmentations such as COSTA~\cite{zhang2022costa}.

\section{Proposed Method}
Inspired by recent advances in 
augmentation-based GCL, our approach learns node representations by rebalancing spectrum of two data views and performing the spectral feature augmentation via the incomplete power iteration. SFA is complementary to the existing data augmentation approaches.  Figure~\ref{fig:framework1} illustrates our framework. The \nameref{sec:not} section (supplementary material) explains our notations.

\vspace{0.1cm}
\noindent\textbf{Graph Augmentation ($\mathcal{A}_G$).}  
Augmented graph $(\tilde{\mathbf{A}}, \tilde{\mathbf{X}})$ is generated by $\mathcal{A}_G$ by directly adding random perturbations to the original graph $({\mathbf{A}}, {\mathbf{X}})$.
Different augmented graphs are constructed given one input $(\mathbf {A}, \mathbf {X})$, yielding correlated views,  \ie, $(\widetilde{\mathbf{A}}^\alpha, \widetilde{\mathbf{X}}^\alpha)$ and $(\widetilde{\mathbf{A}}^\beta, \widetilde{\mathbf{X}}^\beta)$.
In the common GCL setting~\cite{zhu2020deep}, the graph structure is augmented by  permuting edges, whereas attributes by masking.

\vspace{0.1cm}
\noindent\textbf{Graph Neural Network Encoders.} 
Our framework admits various choices of the graph encoder. We opt for simplicity and adopt the commonly used graph convolution network (GCN)~\cite{kipf2016semi} as our base graph encoder. As shown in Fig.~\ref{fig:framework1}, we use a shared graph encoder for each view, \ie, $f: \mathbb{R}^{n \times d_{x}} \times \mathbb{R}^{n \times n}\longmapsto\mathbb{R}^{n \times d_{h}}$. We consider two graphs generated from $\mathcal{A}_G$  as two congruent structural views and define the GCN encoder with 2 layers as: 
 \begin{equation}
    \begin{aligned}
    &f(\mathbf{X}, \mathbf{A})=\mathrm{GCN}_{2}\left (\mathrm{GCN}_{1} (\mathbf{X}, \mathbf{A}), \mathbf{A}\right),\\
    &\quad\text{ where }\quad 
    \mathrm{GCN}_{l} (\mathbf{X}, \mathbf{A})=\sigma\big(\hat{\mathbf{D}}^{-\frac{1}{2}} \hat{\mathbf{A}} \hat{\mathbf{D}}^{-\frac{1}{2}} \mathbf{X} \Theta \big).
    \end{aligned}
\end{equation}
Moreover, $\tilde{\mathbf{A}}=\hat{\mathbf{D}}^{-1 / 2} \hat{\mathbf{A}} \hat{\mathbf{D}}^{-1 / 2} \in \mathbb{R}^{n \times n}$ is the degree-normalized adjacency matrix, $\hat{\mathbf{D}} \in \mathbb{R}^{n \times n}$ is the degree matrix of $\hat{\mathbf{A}}=\mathbf{A}+\mathbf{I}_{\mathbf{N}}$ where $\mathbf{I}_{\mathbf{N}}$ is the identity matrix, $\mathbf{X} \in \mathbb{R}^{n \times d_{x}}$ contains the initial node features, $\boldsymbol{\Theta} \in \mathbb{R}^{d_{x} \times d_{h}}$ contains network parameters, and $\sigma(\cdot)$ is a parametric ReLU (PReLU). 
The encoder outputs feature maps $\mathbf{H}^\alpha$ and $\mathbf{H}^\beta$ for two views.

\vspace{0.1cm}
\noindent\textbf{Spectral Feature Augmentation (SFA).}
$\mathbf{H}^\alpha$ and $\mathbf{H}^\beta$ are  fed to the feature augmenting function $\mathcal{A}_{F^*}$  where random noises are added to the spectrum via the incomplete power iteration. We explain the proposed SFA in the \nameref{sec:FA}~section and detail its properties in Propositions~\ref{prop:FeatureAug}, \ref{prop:PowerIter1} and \ref{prop:PowerIterVar}. SFA results in the spectrally-augmented feature maps, \ie, $\widetilde{\mathbf{H}}^\alpha$ and $\widetilde{\mathbf{H}}^\beta$. SFA is followed by a shared projection head $\theta: \mathbb{R}^{n \times d_{h}} \longmapsto \mathbb{R}^{n \times d_{z}}$ which is an MLP with two hidden layers and PReLU nonlinearity. It maps $\widetilde{\mathbf{H}}^\alpha$ and $\widetilde{\mathbf{H}}^\beta$ into  two node representations
$\mathbf{Z}^{\alpha}, \mathbf{Z}^{\beta} \in \mathbb{R}^{n \times d_{z}}$ (two congruent views of one graph) on which the contrastive loss is applied. As described in~\cite{chen2020simple}, it is beneficial to define the contrastive loss on $\mathbf{Z}$ rather than $\mathbf{H}$.

\vspace{0.1cm}
\noindent\textbf{Contrastive Training.}
To train the encoders end-to-end and learn rich node
 representations that are agnostic to downstream tasks, we utilize the InfoNCE loss~\cite{chen2020simple}: 
\begin{equation}
\label{equ:InfoNCE}
\begin{aligned}
&\mathcal{L}_{contrastive}(\tau)=\underbrace{\underset{\mathbf{z}, \mathbf{z}^{+}}{\mathbb{E}}\left[-\mathbf{z}^{\top} \mathbf{z}^{+}/\tau \right]}_{alignment}\\
&+ \underbrace{\underset{\mathbf{z}, \mathbf{z}^{+} 
}{\mathbb{E}}\Big[\log \Big(e^{{\mathbf{z}^\top\mathbf{z}^{+}}/{\tau}} \;\;+ \!\!\!\!\!\!\!\!\sum_{\mathbf{z}^{-} \in \mathbf{Z}^{\alpha\beta} \setminus \{\mathbf{z}, \
\mathbf{z}^{+}\}} \!\!\!\!\!\!\!\! e^{{\mathbf{z}^\top\mathbf{z}^{-}}/{\tau}}\Big)\Big]}_{uniformity},
\end{aligned}
\end{equation}
where $\mathbf{z}$ is the representation
of the anchor node in one view (\ie, $\mathbf{z} \in \mathbf{Z}^{\alpha}$) and $\mathbf{z}^{+}$ denotes the representation of the anchor node in another view (\ie,  $\mathbf{z}^{+} \in \mathbf{Z}^{\beta}$), whereas $\{\mathbf{z}^{-}\}$ are from the set of node representations other than $\mathbf{z}$ and $\mathbf{z}^{+}$ (\ie, $\mathbf{Z}^{\alpha\beta}\!\equiv\mathbf{Z}^\alpha \cup \mathbf{Z}^\beta$ and $\mathbf{z}^{-} \!\in \mathbf{Z}^{\alpha\beta}\setminus \{\mathbf{z}, \
\mathbf{z}^{+}\}$). The first part of  Eq.~\eqref{equ:InfoNCE} maximizes the alignment of two  views (representations of the same node become similar). The second part of  Eq.~\eqref{equ:InfoNCE} minimizes the pairwise similarity via LogSumExp. Pushing node representations away from each other  makes them uniformly distributed~\cite{wang2020understanding}. 
\begin{algorithm}[t]
   \caption{Spectral Feature Augmentation ($\mathcal{A}_{F^*}$)}
\begin{algorithmic}
   \STATE {\bfseries Input:} feature map $\mathbf{H}$; the number of iterations $k$;
   \STATE $\mathbf{r}^{(0)} \sim \mathcal{N}(0, \mathbf{I})$
   \FOR{$i=1$ {\bfseries to} $k$}
   \STATE $\mathbf{r}^{(i)} = \mathbf{H}^\top \mathbf{H} \mathbf{r}^{(i-1)}$
   \ENDFOR
   \STATE $\widetilde{\mathbf{H}} = \mathbf{H} -  \frac{\mathbf{H} \mathbf{r}^{(k)} {\mathbf{r}^{(k)}}^{\top}}{\|\mathbf{r}^{(k)}\|_2^2}$ 
\STATE \textbf{Return} $\widetilde{\mathbf{H}}$
\end{algorithmic}
\label{algo:1}
\end{algorithm}
\subsection{Spectral Feature Augmentation for GCL}
\label{sec:FA}
\begin{figure*}[t]
    \vspace{-0.5cm}
    \centering
    \begin{subfigure}[b]{0.22\textwidth}
    \includegraphics[width=\textwidth]{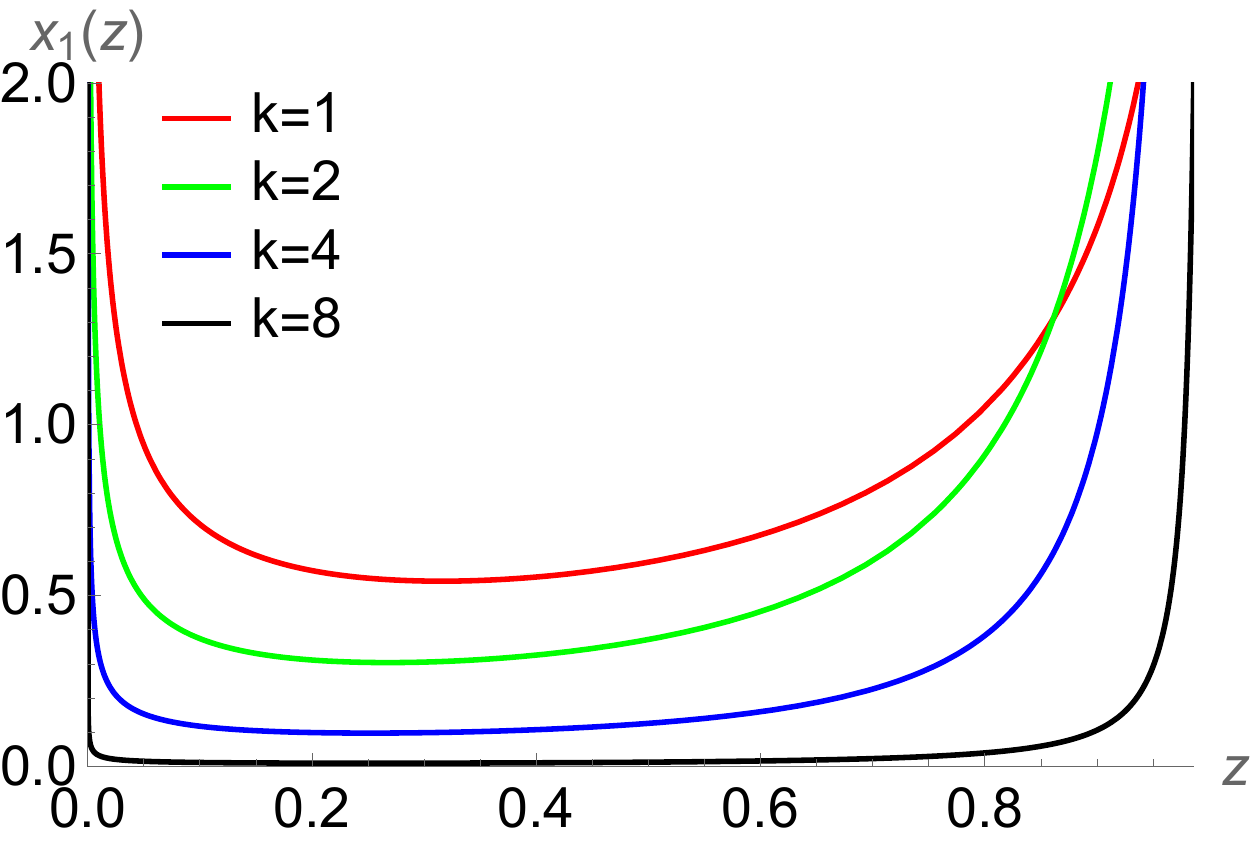}
    \caption{\label{fig:sim1}  }
    \end{subfigure}
    \hspace{10px}
    \begin{subfigure}[b]{0.22\textwidth}
    \includegraphics[width=\textwidth]{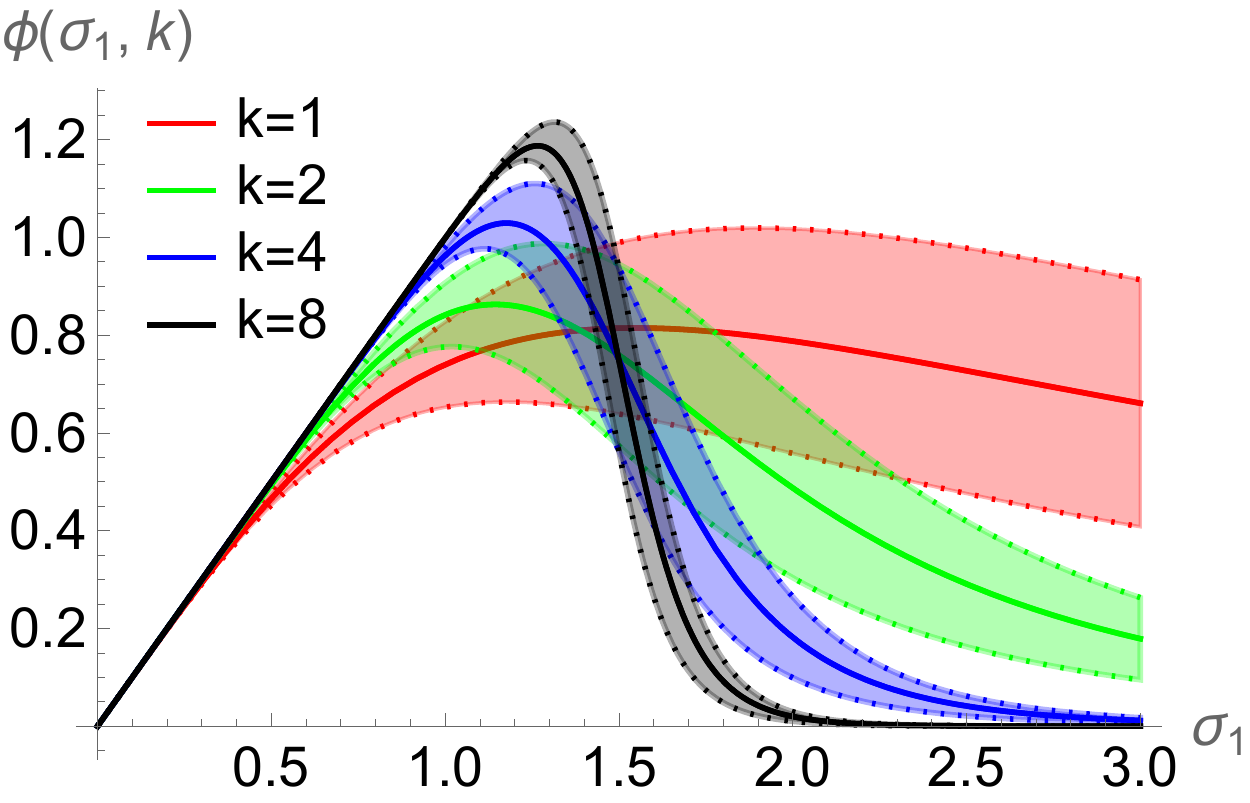}
    \caption{\label{fig:sim2}  }
    \end{subfigure}
     \hspace{10px}
    \begin{subfigure}[b]{0.22\textwidth}
    \includegraphics[width=\textwidth]{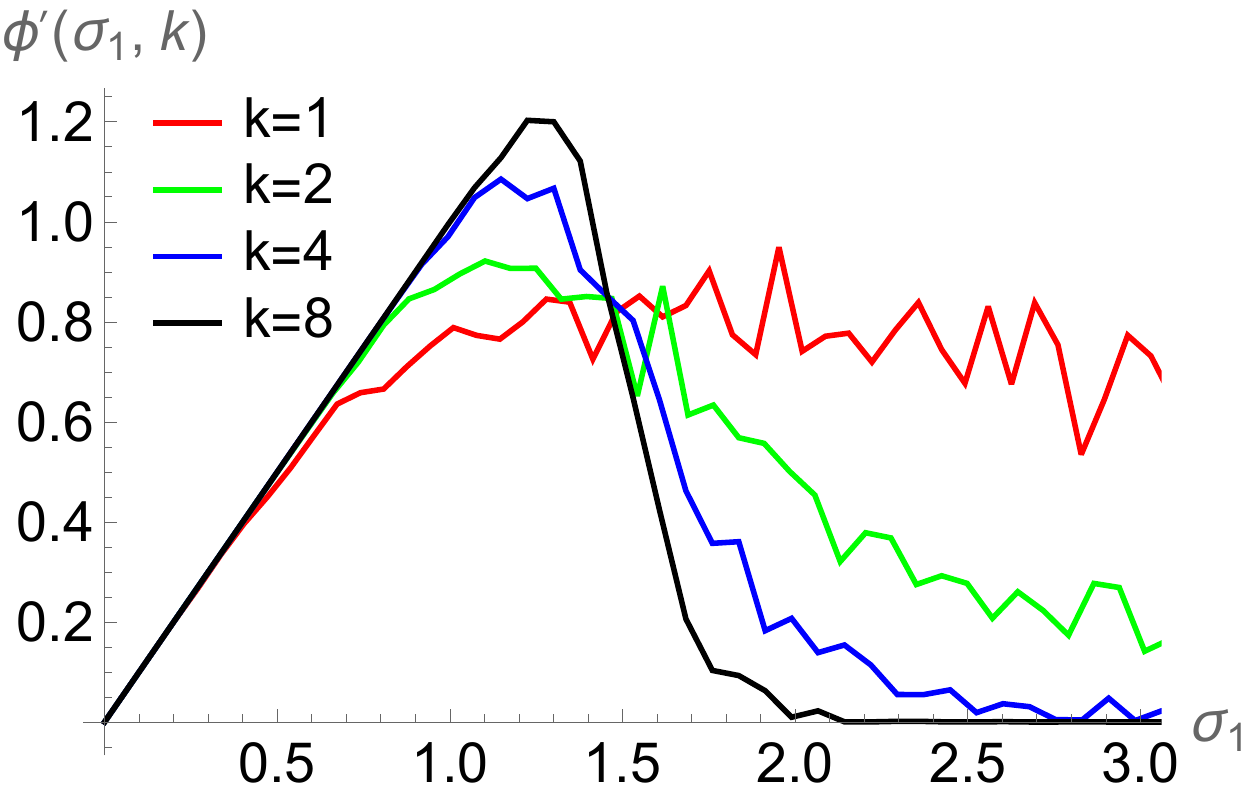}
    \caption{\label{fig:sim3}  }
\end{subfigure}
\vspace{-0.2cm}
\caption{Toy illustration of Prop. \ref{prop:PowerIter1} and \ref{prop:PowerIterVar}. Let $\sigma_2,\cdots,\sigma_5$ be $1.5,0.9,0.2,0.01$. We investigate the impact of  iterations $k\in\{1,2,4,8\}$. Fig. \ref{fig:sim1} shows distribution $x(z)$ given $\sigma_1=2$. Fig.  \ref{fig:sim2} shows the expected value $\phi(\sigma_1,k)=\sigma_1(1-\lambda_1)$ where $\lambda_1=\mathbb{E}_{z\sim x_1}(z)$ for $0\leq\sigma_1\leq3$. The deviation is indicated by $\phi_{\pm\omega_1}(\sigma_1,k)=\sigma_1(1-\lambda_1\pm\omega_1)$. Finally, Fig. \ref{fig:sim3}  is obtained via Alg. \ref{algo:1} (the incomplete power iteration). To this end, we generated randomly a feature matrix $\mathbf{H}$ 
and substituted its singular values by $\sigma_1,\cdots,\sigma_5$. Notice that for $k=1$, $1\leq\sigma_1\leq 3$, push-forward $\phi(\sigma_1,1)$ and $\phi'(\sigma_1,1)$ in Fig. \ref{fig:sim2} and \ref{fig:sim3} are around 0.8 (the balancing of spectrum) and the high deviation indicates the singular value undergoes  the spectral augmentation. For $k\geq 2$, both balancing and spectral augmentation effects decline. Note theoretical $\phi$ in Prop. \ref{prop:PowerIter1}  and real $\phi'$ from Alg. \ref{algo:1} match.
\label{fig:sim}
\vspace{-0.4cm}
}
\end{figure*}
Our spectral feature augmentation is inspired by the rank-1 update~\cite{yu2020toward}. 
Let  $\mathbf{H} = f(\mathbf{X}, \mathbf{A})$ be the graph feature map with the singular decomposition  $\mathbf{H} = \mathbf{U}\mathbf{\Sigma}\mathbf{V}^\top$ where $\mathbf{H} \in \mathbb{R}^{n \times d_h}$,  $\mathbf{U}$ and $\mathbf{V}$ are unitary matrices, and $\mathbf{\Sigma} = \text{diag}(\sigma_1, \sigma_2,\cdots, \sigma_{d_h})$ is the diagonal matrix with  singular values $\sigma_1 \geq \sigma_2 \geq \cdots \geq \sigma_{d_h}$.
Starting from a random point $\mathbf{r}^{(0)} \sim \mathcal{N}(0,\mathbf{I})$ and  function $\mathbf{r}^{(k)} = \mathbf{H}^\top \mathbf{H} \mathbf{r}^{(k-1)}$,
we generate a set of augmented feature maps\footnote{We apply Eq. \eqref{equ:FeatureAug} on both views $\mathbf{H}^\alpha$ and $\mathbf{H}^\beta$ separately to obtain spectrally rebalanced/augmented  $\widetilde{\mathbf{H}}^\alpha$ and $\widetilde{\mathbf{H}}^\beta$.} $\widetilde{\mathbf{H}}$ by:

\vspace{-0.4cm}
\begin{equation}
\label{equ:FeatureAug}
    \widetilde{\mathbf{H}}\big(\mathbf{H}; \mathbf{r}^{(0)}\big) =\mathbf{H} - \mathbf{H}_{\text{LowRank}} =  \mathbf{H} - \frac{\mathbf{H} \mathbf{r}^{(k)} \mathbf{r}^{(k)\top}}{\|\mathbf{r}^{(k)}\|_2^2}.
\end{equation}
\vspace{-0.2cm}

\noindent 
We often write $\widetilde{\mathbf{H}}$ rather than $\widetilde{\mathbf{H}}\big(\mathbf{H}; \mathbf{r}^{(0)}\big)$, and we often think of  $\widetilde{\mathbf{H}}$ as a matrix. We summarize the proposed SFA in Alg.~\ref{algo:1}.
\begin{proposition}
\label{prop:FeatureAug}
Let $\widetilde{\mathbf{H}}$ be the augmented feature matrix obtained via Alg.~\ref{algo:1} for the $k$-th iteration starting from a random vector $\mathbf{r}^{(0)}$ drawn from $\mathcal{N}(0,\mathbf{I})$. Then $\,\mathbb{E}_{\mathbf{r}^{(0)}\sim \mathcal{N}(0,\mathbf{I})} (\widetilde{\mathbf{H}}\big(\mathbf{H}; \mathbf{r}^{(0)}\big))=\mathbf{U} \widetilde{\Sigma} \mathbf{V}^{\top}\!$ has rebalanced spectrum\footnote{$\,$``Rebalanced'' means the output spectrum is flatter than the input.$\!\!\!\!$} $\;\widetilde{\boldsymbol{\Sigma}} = \text{diag}\big[(1-\lambda_1(k)\sigma_1, (1-\lambda_2(k))\sigma_2, \cdots, (1-\lambda_{d_h}(k))\sigma_{d_h}\big]\;$ where $\lambda_i(k)=\mathbb{E}_{\mathbf{y}\sim\mathcal{N}(0;\mathbf{I})}\big(\frac{\left(y_{i} \sigma_{i}^{2k}\right)^{2}}{ \sum_{l=1}^{d_h}\left(y_{l} \sigma_{l}^{2k}\right)^{2}})$ and $\mathbf{y} = \mathbf{V}^{\top}\mathbf{r}^{(0)}$, because $0\leq 1-\lambda_1(k) \leq 1-\lambda_2(k) \leq \cdots \leq 1-\lambda_{d_h}(k)\leq 1\;$ for $\;\sigma_1 \geq \sigma_2 \geq \cdots \geq \sigma_{d_h}$ (sorted singular values from the SVD), and so $(1-\lambda_i)$ gets smaller or larger  as $\sigma_i$ gets larger or smaller, respectively.
\end{proposition}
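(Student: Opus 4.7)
The plan is to diagonalize the algorithm in the SVD basis of $\mathbf{H}$, from which the rank-one correction becomes a simple product of singular values and Gaussian coordinates, after which taking the expectation reduces to a symmetry argument.

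First, I would use $\mathbf{H}^{\top}\mathbf{H} = \mathbf{V}\boldsymbol{\Sigma}^{2}\mathbf{V}^{\top}$ to unroll the power iteration, giving $\mathbf{r}^{(k)} = \mathbf{V}\boldsymbol{\Sigma}^{2k}\mathbf{V}^{\top}\mathbf{r}^{(0)}$. Setting $\mathbf{y} := \mathbf{V}^{\top}\mathbf{r}^{(0)}$, rotational invariance of the isotropic Gaussian gives $\mathbf{y}\sim\mathcal{N}(0,\mathbf{I})$, so everything can be done in i.i.d.\ Gaussian coordinates. One then has $\mathbf{H}\mathbf{r}^{(k)} = \mathbf{U}\boldsymbol{\Sigma}^{2k+1}\mathbf{y}$ and $\|\mathbf{r}^{(k)}\|_{2}^{2} = \sum_{l} y_{l}^{2}\sigma_{l}^{4k}$, so that
$$\widetilde{\mathbf{H}} \;=\; \mathbf{U}\left[\boldsymbol{\Sigma} - \frac{\boldsymbol{\Sigma}^{2k+1}\mathbf{y}\mathbf{y}^{\top}\boldsymbol{\Sigma}^{2k}}{\sum_{l} y_{l}^{2}\sigma_{l}^{4k}}\right]\mathbf{V}^{\top}.$$
The $(i,j)$ entry of the bracketed matrix contains the factor $y_{i}y_{j}$ over a denominator symmetric in signs of each coordinate; flipping the sign of $y_{i}$ alone is a measure-preserving involution that fixes the denominator and negates $y_{i}y_{j}$, so for $i\neq j$ the expectation vanishes. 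Only the diagonal survives, and the $(i,i)$ entry reads $\sigma_{i}\bigl(1-\lambda_{i}(k)\bigr)$ with the stated $\lambda_{i}(k)$, yielding $\mathbb{E}[\widetilde{\mathbf{H}}]=\mathbf{U}\widetilde{\boldsymbol{\Sigma}}\mathbf{V}^{\top}$.

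It then remains to verify $0\leq\lambda_{d_h}(k)\leq\cdots\leq\lambda_{1}(k)\leq 1$. Nonnegativity and the bound $\lambda_{i}\leq 1$ are immediate since each $\lambda_{i}$ is the expectation of a ratio in $[0,1]$, and in fact $\sum_{i}\lambda_{i}(k)=1$. To obtain the ordering, fix $i<j$ and set $a:=y_{i}^{2}$, $b:=y_{j}^{2}$, $A:=\sigma_{i}^{4k}$, $B:=\sigma_{j}^{4k}$ with $A\geq B$, and $R:=\sum_{l\neq i,j} y_{l}^{2}\sigma_{l}^{4k}\geq 0$. Exploiting the exchangeability of $a$ and $b$ (i.i.d.\ $\chi_{1}^{2}$) to symmetrize,
$$2\bigl(\lambda_{i}-\lambda_{j}\bigr) \;=\; \mathbb{E}\!\left[\frac{aA-bB}{aA+bB+R} \;+\; \frac{bA-aB}{bA+aB+R}\right],$$
and combining the two fractions over a common denominator, the numerator collapses to $(A-B)\bigl[2ab(A+B) + R(a+b)\bigr]$, which is a.s.\ nonnegative. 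Hence $\lambda_{i}\geq\lambda_{j}$, and the ordering follows.

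The main obstacle I expect is the monotonicity step: the off-diagonal cancellation is a clean sign-flip argument, and unrolling the SVD is routine, but the ordering of $\lambda_{i}$ is not manifest from the formula alone. The symmetrization-plus-algebraic-reduction I sketched is the tidy way to do it, but the computation has to be set up so that the cross terms in the numerator cancel and leave the transparent factorization $(A-B)\cdot(\text{nonneg})$; if that factorization failed one would need a more delicate coupling argument.
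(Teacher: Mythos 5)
Your proposal is correct, and its first half (unrolling the power iteration through the SVD, rotating to $\mathbf{y}=\mathbf{V}^{\top}\mathbf{r}^{(0)}\sim\mathcal{N}(0,\mathbf{I})$, and reducing the rank-one correction to the matrix $\boldsymbol{\Sigma}^{2k+1}\mathbf{y}\mathbf{y}^{\top}\boldsymbol{\Sigma}^{2k}/(\mathbf{y}^{\top}\boldsymbol{\Sigma}^{4k}\mathbf{y})$) is exactly the paper's route; your sign-flip involution also supplies a justification for the off-diagonal expectations vanishing, which the paper simply asserts. Where you genuinely diverge is the monotonicity step, and your version is the stronger one. The paper bounds $\lambda_i-\lambda_j\geq\sigma_j^{4k}\,\mathbb{E}\big[(y_i^2-y_j^2)/\sum_l(y_l\sigma_l^{2k})^2\big]$ and then claims this last expectation is zero because $y_i,y_j$ are i.i.d.; but the denominator weights $y_i^2$ and $y_j^2$ by the unequal factors $\sigma_i^{4k}$ and $\sigma_j^{4k}$, so it is not symmetric under swapping $y_i\leftrightarrow y_j$, and a symmetrization shows the expectation is in fact $-\tfrac{1}{2}(\sigma_i^{4k}-\sigma_j^{4k})\,\mathbb{E}\big[(y_i^2-y_j^2)^2/(D_1D_2)\big]\leq 0$, so the paper's chain of inequalities does not by itself close. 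Your argument avoids this: exchanging $a=y_i^2$ and $b=y_j^2$, combining over a common denominator, and factoring the numerator as $(A-B)\bigl[2ab(A+B)+R(a+b)\bigr]$ with $A=\sigma_i^{4k}\geq B=\sigma_j^{4k}$ gives $\lambda_i\geq\lambda_j$ directly and rigorously (I verified the factorization). In short, same decomposition and same reduction to $\lambda_i(k)$, but your exchangeability-plus-factorization lemma repairs the one step of the published proof that, as written, is flawed; the paper's attempted shortcut buys brevity but at the cost of a false intermediate identity.
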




\definecolor{beaublue}{rgb}{0.88,15,1}
\definecolor{blackish}{rgb}{0.2, 0.2, 0.2}
\begin{tcolorbox}[width=1.0\linewidth, colframe=blackish, colback=beaublue, boxsep=0mm, arc=2mm, left=2mm, right=2mm, top=2mm, bottom=2mm]
\noindent\textbf{Push-forward Function (\textit{Partial balancing of spectrum}).} Prop. \ref{prop:FeatureAug} shows that in expectation, our incomplete power iteration rebalances spectrum according to the push-forward function  $\phi(\sigma_i;k)=\sigma_i(1\!-\!\lambda_i(k))$ where $\lambda_i(k)$ is an expected value of $\lambda'_i(\mathbf{y},k)=\frac{\left(y_{i} \sigma_{i}^{2k}\right)^{2}}{ \sum_{l=1}^{d_h}\left(y_{l} \sigma_{l}^{2k}\right)^{2}}$ \wrt  random variable $\mathbf{y}\!\sim\!\mathcal{N}(0, \mathbf{I})$ (see Eq. \eqref{eq:balanceq}). 
\end{tcolorbox}

\definecolor{beaublue}{rgb}{0.88,15,1}
\definecolor{blackish}{rgb}{0.2, 0.2, 0.2}
\begin{tcolorbox}[width=1.0\linewidth, colframe=blackish, colback=beaublue, boxsep=0mm, arc=2mm, left=2mm, right=2mm, top=2mm, bottom=2mm]
\textbf{Alg.~\ref{algo:1} returns an instance governed by the currently drawn $\mathbf{y}$}. The push-forward function in a feed-forward step of network realizes $\phi'(\sigma_i;\mathbf{y},k)=\sigma_i(1-\lambda'_i(\mathbf{y},k))$. Thus, below we study the analytical expression for $\phi(\sigma_i;k)$ and its variance to understand how drawing $\mathbf{y}\!\sim\!\mathcal{N}(0, \mathbf{I})$ translates into the variance posed by the implicit spectral augmentation of the singular values.
\end{tcolorbox}


\vspace{-0.1cm}
\begin{proposition}{{\em Analytical Expectation.}}
\label{prop:PowerIter1}
Let $\beta_i=(\sigma_i^{2k})^2$, then the expected value $\mathbb{E}_{\mathbf{y}\sim\mathcal{N}(0;\mathbf{I})}\frac{\beta_i y_{i}^2}{\beta_i y_{i}^2 +\sum_{l\neq i}\beta_l y^2_{l} } = \lambda_i(k)$ can be expressed as  $\mathbb{E}(x_i)$ over random variable $x_i\!=\!\frac{u}{u+v_i}$ for $u\!\sim\!\mathcal{G}(\frac{1}{2},2)$ and 
$v_i\!\sim\!\mathcal{G}(\alpha_i, 2\gamma_i)$ ($\mathcal{G}$ is Gamma distr.) with 
$\alpha_i\!=\!\frac{1}{2}\frac{(\sum_{l\neq i}\beta_l)^2 }{\sum_{l\neq i}\beta_l^2}$ and $\gamma_i\!=\!\frac{1}{\beta_i}\frac{\sum_{l\neq i}\beta_l^2 }{\sum_{l\neq i}\beta_l}$.
As PDF $x^\bullet_i(z)\!=\!\frac{\gamma_i}{(1-(1-z)\gamma_i)^2}\cdot\mathcal{B}\big(\frac{\gamma_i z}{1-(1-z)\gamma_i}; \frac{1}{2},\alpha_i\big)$ where  $\mathcal{B}$ is the Beta distribution and $x^\bullet_i(z)$ enjoys the support $z\in[0 ;1]$, then 
$\lambda_i=\mathbb{E}(z)=\int_0^1 z\cdot x^\bullet_i(z) \, \mathrm{d}z=\gamma_i^\frac{1}{2}\frac{\Gamma(\frac{3}{2})\Gamma(\frac{1}{2}+\alpha_i)}{\Gamma(\frac{1}{2})\Gamma(\frac{3}{2}+\alpha_i)}\cdot{_2F_1}\big(\frac{3}{2},\frac{1}{2}\!+\!\alpha_i,\frac{3}{2}\!+\!\alpha_i,1\!-\!\gamma_i\big)$ 
%
%
where ${_2F_1}(\cdot)$ is the so-called Hypergeometric function.
\end{proposition}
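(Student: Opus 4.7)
My plan is to exploit the Gamma structure of the quadratic forms in the ratio. Since $y_i \sim \mathcal{N}(0,1)$, each $y_i^2$ is $\chi^2_1$, \ie, Gamma-distributed with shape $\tfrac{1}{2}$ and scale $2$. Writing $u := y_i^2$ and $S := \sum_{l\neq i}\beta_l y_l^2$, the numerator of the ratio is $\beta_i u \sim \mathcal{G}(\tfrac{1}{2}, 2\beta_i)$, while $S$ is a weighted sum of independent $\chi^2_1$ variables. Because such a weighted sum has no closed-form density in general, I would apply the standard Welch--Satterthwaite moment-matching approximation: replace $S$ by a single Gamma variable of shape $\alpha$ and scale $\theta$ chosen so that the first two moments agree, \ie, $\alpha\theta = \mathbb{E}[S] = \sum_{l\neq i}\beta_l$ and $\alpha\theta^2 = \mathrm{Var}[S] = 2\sum_{l\neq i}\beta_l^2$. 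Solving these two equations delivers $\alpha = \alpha_i$ and $\theta = 2\beta_i\gamma_i$ exactly as in the statement. Rescaling $S$ by $1/\beta_i$ to define $v_i := S/\beta_i \sim \mathcal{G}(\alpha_i, 2\gamma_i)$ then rewrites the ratio as $x_i = u/(u+v_i)$, establishing the first assertion.

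Next, I would derive the density of $x_i$. With $u$ and $v_i$ independent Gammas of scales $\theta_u = 2$ and $\theta_v = 2\gamma_i$, the standard change of variables $(u,v_i)\mapsto(z,s)$ with $z = u/(u+v_i)$ and $s = u+v_i$ factorises the joint density; integrating out $s$ against the exponential kernel produces a factor $\bigl(z/\theta_u + (1-z)/\theta_v\bigr)^{-(a+b)}$ with $a=\tfrac{1}{2}$, $b=\alpha_i$. Because $\theta_u \neq \theta_v$, the result is not a plain Beta density but a Beta$(\tfrac{1}{2},\alpha_i)$ density in the reparametrised argument $t = \gamma_i z/(1-(1-z)\gamma_i)$, multiplied by the Jacobian $|\mathrm{d}t/\mathrm{d}z|$. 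This reproduces the stated $x_i^\bullet(z)$, with the Jacobian contributing the prefactor $\gamma_i/(1-(1-z)\gamma_i)^2$.

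For $\lambda_i = \int_0^1 z\,x_i^\bullet(z)\,\mathrm{d}z$, I would change variables from $z$ to $t$. The Beta weight appears cleanly as $t^{-1/2}(1-t)^{\alpha_i-1}/B(\tfrac{1}{2},\alpha_i)$, while $z$ becomes the rational function $z(t) = t/(\gamma_i + (1-\gamma_i)t)$. The resulting integral takes the Euler form $\int_0^1 t^{c-1}(1-t)^{\alpha_i-1}(1-(1-\gamma_i)t)^{-a}\,\mathrm{d}t$, which is precisely the Euler integral representation of ${}_2F_1(a, c; c+\alpha_i; 1-\gamma_i)$. The additional factor of $z$ in the integrand (relative to the density's normalisation) shifts $c$ from $\tfrac{1}{2}$ to $\tfrac{3}{2}$, so that the parameters match $(\tfrac{3}{2}, \tfrac{1}{2}+\alpha_i, \tfrac{3}{2}+\alpha_i)$ and the argument becomes $1-\gamma_i$. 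Collecting the $\gamma_i^{1/2}$ prefactor together with the Beta-function ratio $\Gamma(\tfrac{3}{2})\Gamma(\tfrac{1}{2}+\alpha_i)/(\Gamma(\tfrac{1}{2})\Gamma(\tfrac{3}{2}+\alpha_i))$ produces the closed form stated in the proposition.

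The main obstacle is the final parameter bookkeeping: tracking the Möbius reparametrisation together with the extra $z$ factor (which shifts the ${}_2F_1$ numerator/denominator parameters by one) and recovering the precise $\gamma_i^{1/2}$ prefactor. The Welch--Satterthwaite step itself, while a Gamma approximation of the weighted $\chi^2$ sum rather than an exact identity, is standard and yields the announced $\alpha_i$ and $\gamma_i$ directly from the first and second moments.
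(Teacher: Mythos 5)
Your proposal follows essentially the same route as the paper's own proof: identify $u=y_i^2\sim\mathcal{G}(\tfrac12,2)$, moment-match the weighted $\chi^2$ sum to a single Gamma (the paper cites the surrogate-Gamma result for $\sum_l\zeta_l y_l^2$, which is exactly the Welch--Satterthwaite approximation you invoke, yielding the same $\alpha_i$ and $\gamma_i$), derive the density of $u/(u+v_i)$ by a change of variables (you marginalise $(u,v_i)\mapsto(z,s)$ directly, the paper routes through the equal-scale Beta together with a M\"obius reparametrisation --- equivalent bookkeeping), and evaluate $\int_0^1 z\,x_i^\bullet(z)\,\mathrm{d}z$ as an Euler-type integral for $_2F_1$ (the paper just outsources this to Mathematica). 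One small caveat on your final step: the Euler integral in the natural $t$-variable actually yields $_2F_1(1,\alpha_i;\alpha_i+\tfrac32;1-\gamma_i)$, and the $\gamma_i^{1/2}$ prefactor appears only after an additional Euler/Pfaff transformation $(1-z)^{c-a-b}\,{_2F_1}(c-a,c-b;c;z)$ that converts it to the stated $_2F_1(\tfrac32,\tfrac12+\alpha_i;\tfrac32+\alpha_i;1-\gamma_i)$; your sketch treats the $\gamma_i^{1/2}$ as coming purely from "collecting prefactors," so that last bit of bookkeeping should be made explicit.
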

\begin{proof}
See \nameref{exp_val} (supplementary material).
\end{proof}

\begin{proposition}{{\em Analytical Variance.}}
\label{prop:PowerIterVar}
Following assumptions of Proposition \ref{prop:PowerIter1}, the variance $\omega^2_i$ of $x^\bullet_i(z)$ can be expressed as $\omega^2_i=\mathbb{E}(z^2)-(\mathbb{E}(z))^2=\int_0^1 z^2\cdot x^\bullet_i(z) \, \mathrm{d}z - \lambda^2_i=$ $0.56419\,\gamma^{\frac{1}{2}}\frac{\Gamma(\frac{1}{2}+\alpha_i)}{\Gamma(\alpha_i)}\allowbreak\big(0.4\cdot{_2F_1}\big(\frac{5}{2},1\!-\!\alpha_i,\frac{7}{2},1\big)\cdot \allowbreak {_2F_1}\big(\frac{5}{2},\frac{3}{2}\!+\!\alpha_i,\frac{5}{2}\!+\!\alpha_i,1\!-\!\gamma_i\big) +0.28571(\gamma_i-1)\cdot{_2F_1}\big(\frac{7}{2},1\!-\!\alpha_i,\frac{9}{2},1\big)\cdot{_2F_1}\big(\frac{7}{2},\frac{3}{2}\!+\!\alpha_i,\frac{7}{2}\!+\!\alpha_i,1\!-\!\gamma_i\big)\big)-\lambda^2_i$.
\end{proposition}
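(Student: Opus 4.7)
The plan is to compute the second moment $\mathbb{E}(z^2)=\int_0^1 z^2\cdot x_i^\bullet(z)\,\mathrm{d}z$ in closed form and recover the variance from the identity $\omega_i^2=\mathbb{E}(z^2)-\lambda_i^2$, with $\lambda_i$ supplied by Proposition~\ref{prop:PowerIter1}. The argument parallels the derivation of $\lambda_i$; the only novelty is the extra power of $z$ in the integrand, which shifts parameters of the resulting hypergeometric integral and, because the second moment does not fit a single Gauss summation, forces the use of a contiguous relation that produces the two-term structure in the stated formula.

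First, I would invert the change of variables already used for $\lambda_i$, writing $t=\frac{\gamma_i z}{1-(1-z)\gamma_i}$ so that $z(t)=\frac{t}{\gamma_i+(1-\gamma_i)t}$, and use the Beta density $\mathcal{B}(\cdot;\frac{1}{2},\alpha_i)$ together with the Jacobian $\gamma_i/(1-(1-z)\gamma_i)^2$ to reduce the integral to
\begin{equation*}
\mathbb{E}(z^2)=\frac{1}{\mathrm{B}(\frac{1}{2},\alpha_i)}\int_0^1\frac{t^{3/2}(1-t)^{\alpha_i-1}}{(\gamma_i+(1-\gamma_i)t)^2}\,\mathrm{d}t.
\end{equation*}
Factoring $\gamma_i^{-2}$ out of the denominator puts this in the form required by Euler's integral representation of $_2F_1$, and a Pfaff transformation then rewrites the exponent $-2$ as the parameter pair $(\frac{5}{2},\frac{3}{2}+\alpha_i)$ appearing in the claim, while introducing the $\gamma_i^{1/2}$ prefactor inherited from the analogous step for $\lambda_i$.

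Second, because the higher numerator power $t^{3/2}$ (rather than $t^{1/2}$ as in the derivation of $\lambda_i$) prevents the resulting $_2F_1$ at unit argument from collapsing via a single Gauss summation, I would apply the contiguous relation
\begin{equation*}
(c-a-b)\,{_2F_1}(a,b;c;x)=(c-a)\,{_2F_1}(a-1,b;c;x)-b(1-x)\,{_2F_1}(a,b+1;c+1;x),
\end{equation*}
which splits the expression into exactly two summands. Evaluating each at the unit argument with Gauss's theorem produces the factors ${_2F_1}(\frac{5}{2},1-\alpha_i;\frac{7}{2};1)$ and ${_2F_1}(\frac{7}{2},1-\alpha_i;\frac{9}{2};1)$, together with the explicit $(\gamma_i-1)$ prefactor that multiplies the second term and the companion ${_2F_1}(\cdot;\cdot;\cdot;1-\gamma_i)$ factors of higher upper parameters than those appearing in $\lambda_i$.

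The main obstacle is constant bookkeeping: the numerical prefactors $0.56419\approx\Gamma(\frac{1}{2})^{-1}$, $0.4=\frac{2}{5}$ and $0.28571\approx\frac{2}{7}$ must emerge cleanly from the combination of $\mathrm{B}(\frac{1}{2},\alpha_i)^{-1}$, the Gamma ratios $\Gamma(\frac{5}{2})/\Gamma(\frac{7}{2})$ and $\Gamma(\frac{7}{2})/\Gamma(\frac{9}{2})$, and the Pochhammer shifts introduced by the contiguous relation; verifying that signs align so that the $(\gamma_i-1)$ factor appears with the second term is the delicate step. Once $\mathbb{E}(z^2)$ is assembled, subtraction of $\lambda_i^2$ from Proposition~\ref{prop:PowerIter1} is purely algebraic and yields the announced formula.
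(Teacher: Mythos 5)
Your overall framing matches the intent of the claim but not the paper's actual proof: the supplementary material simply hands $\int_0^1 z^2\, x_i^\bullet(z)\,\mathrm{d}z$ to Mathematica and reports the output after converting regularized hypergeometric functions to ordinary $_2F_1$'s. You are instead attempting to reconstruct that symbolic integration by hand, and your opening step---pulling the density back through the change of variables so that $\mathbb{E}(z^2)=\frac{1}{\mathrm{B}(1/2,\alpha_i)}\int_0^1 \frac{t^{3/2}(1-t)^{\alpha_i-1}}{(\gamma_i+(1-\gamma_i)t)^2}\,\mathrm{d}t$---is correct and is indeed where a hand derivation must start.

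From there, however, the chain breaks in three places. First, extracting $\gamma_i^{-2}$ and matching Euler's integral gives ${_2F_1}\big(2,\tfrac52;\alpha_i+\tfrac52;1-\gamma_i^{-1}\big)$; the $b$-Pfaff transformation $(1-z)^{-b}{_2F_1}(c-a,b;c;z/(z-1))$ then yields, after absorbing $\gamma_i^{-2}\cdot\gamma_i^{5/2}=\gamma_i^{1/2}$, the function ${_2F_1}\big(\tfrac52,\alpha_i+\tfrac12;\alpha_i+\tfrac52;1-\gamma_i\big)$---second parameter $\alpha_i+\tfrac12$, not the $\alpha_i+\tfrac32$ you assert---so the Pfaff step alone does not produce the parameter pair in the proposition. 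Second, the contiguous relation you invoke, $(c-a-b){_2F_1}(a,b;c;x)=(c-a){_2F_1}(a-1,b;c;x)-b(1-x){_2F_1}(a,b+1;c+1;x)$, is not a valid identity: expanding both sides in powers of $x$ leaves a residual $-\tfrac{ab(1+b)}{c(c+1)}x+O(x^2)$, nonzero whenever $a\neq 0$ and $b\notin\{0,-1\}$ (for instance at $a=2$, $b=1$, $c=3$ the $x$-coefficients are $0$ versus $1/3$). Third, your reading of the Gauss-theorem step is backwards: the factors ${_2F_1}(\tfrac52,1-\alpha_i;\tfrac72;1)$ and ${_2F_1}(\tfrac72,1-\alpha_i;\tfrac92;1)$ are left \emph{unevaluated} in the stated formula (they are Mathematica's packaging of the Gamma ratios $\Gamma(\tfrac72)\Gamma(\alpha_i)/\Gamma(\alpha_i+\tfrac52)$ and $\Gamma(\tfrac92)\Gamma(\alpha_i)/\Gamma(\alpha_i+\tfrac72)$, consistent with the constants $0.56419\approx 1/\sqrt{\pi}$, $2/5$, $2/7$); Gauss's theorem would collapse them, not produce them. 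To complete a hand derivation you would need a correct recursion that raises $b$ from $\alpha_i+\tfrac12$ to $\alpha_i+\tfrac32$ while shifting $a$ and $c$ in the second summand, such as ${_2F_1}(a,b;c;w)={_2F_1}(a,b+1;c;w)-\tfrac{a}{c}w\,{_2F_1}(a+1,b+1;c+1;w)$, and then track the Pochhammer constants; as written, your proposal does not establish the proposition.
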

\begin{proof}
See \nameref{exp_var} (supplementary material).
\end{proof}

\definecolor{beaublue}{rgb}{0.88,1,1}
\definecolor{blackish}{rgb}{0.2, 0.2, 0.2}
\begin{tcolorbox}[width=1.0\linewidth, colframe=blackish, colback=beaublue, boxsep=0mm, arc=2mm, left=2mm, right=2mm, top=2mm, bottom=2mm]
\textbf{Note on Spectrum Rebalancing.}
Fig. \ref{fig:sim} explains the consequences of  Prop. \ref{prop:PowerIter1} \& \ref{prop:PowerIterVar} and connects them with Alg. \ref{algo:1}. Notice following: (i) For $k=1$ (iterations), the analytical form (Fig.~\ref{fig:sim2}) and the simulated incomplete power iteration (Fig. \ref{fig:sim3}) both indeed enjoy flatten $\phi$ and $\phi'$ for $1\leq\sigma_1\leq 3$. (ii) The injected variance is clearly visible in that flattened range (we know the quantity of injected noise). (iii) The analytical and simulated variances match.
\end{tcolorbox}

\definecolor{beaublue}{rgb}{0.88,1,1}
\definecolor{blackish}{rgb}{0.2, 0.2, 0.2}
\begin{tcolorbox}[width=1.0\linewidth, colframe=blackish, colback=beaublue, boxsep=0mm, arc=2mm, left=2mm, right=2mm, top=2mm, bottom=2mm]
\textbf{Choice of Number of Iterations ($k$).}
In Fig.~\ref{fig:sim2}, we plot $\phi(\sigma_i;k)$ using our analytical formulation. The best rebalancing effect is achieved for $k=1$ . For example, the red line ($k=1$) is mostly flat for $\sigma_i$ . This indicates the singular values $\sigma_i \geq 1$ are mapped to a similar value which promotes flattening of spectrum. When $\sigma_i \geq 2$ the green line eventually reduces to zero. This indicates that only datasets with spectrum falling into range between 1 and 1.2 will benefit from flattening. Important is to notice also that spectrum augmentation (variance) in Fig.~\ref{fig:sim2} is highest for $k=1$. Thus, in all experiments (including image classification), we set $k=1$, and the SFA becomes:
\vspace{-0.2cm}
\begin{equation}
\label{equ:FA_k=1}
\!\!\begin{aligned}
         \widetilde{\mathbf{H}} \!=\! \mathbf{H}\! -\!\mathbf{H}_{\text{LowRank}} \!=\!\mathbf{H}\bigg(\mathbf{I}\!-\!\frac{\mathbf{H}^{\top} \mathbf{H} \mathbf{r}^{(0)} \mathbf{r}^{(0) \top} \mathbf{H}^{\top} \mathbf{H}}{\|\mathbf{H}^{\top} \mathbf{H r}^{(0)}\|_2^2}\bigg).
    \end{aligned}\!\!\!\!\!
\end{equation}
\end{tcolorbox}


\vspace{-0.3cm}
\subsection{$\!$Why does the Incomplete Power Iteration work?$\!\!\!\!\!$}

Having discussed SFA, below we show how SFA improves the alignment/generalization  by flattening large and boosting small singular values due to rebalanced spectrum. 

\vspace{0.1cm}
\noindent 
\textbf{Improved Alignment.}
\label{sec:improveAligment}
SFA rebalances the weight penalty (by rebalancing singular values)  on orthonormal bases, thus improving the alignment of two correlated views. 
Consider the alignment part of Eq. \eqref{equ:InfoNCE} and ignore the projection head $\theta$ for brevity. The contrastive loss (temperature $\tau>0$,  $n$ nodes) on  $\mathbf{H}^\alpha$ and $\mathbf{H}^\beta$  
maximizes the alignment of two views:
\vspace{-0.1cm}
\begin{equation}
\label{equ:EignAlignOri}
\!
\begin{aligned}
\mathcal{L}_{a}& = \underset{\mathbf{h}^\alpha,\mathbf{h}^\beta}{\mathbb{E}}(\mathbf{h}^{\alpha\top}\mathbf{h}^\beta/\tau) = \frac{1}{n\tau}\text{Tr}(\mathbf{H}^{\alpha\top}\mathbf{H}^\beta)\\[-5pt]
&=\frac{1}{n\tau}\sum_{i=1}^{d_h} ({\sigma}^\alpha_i{\mathbf{v}}_{i}^{\alpha\top}\!{\mathbf{v}}^\beta_{i})({\sigma}^{\beta}_i {{\mathbf{u}}}^{\alpha\top}_{i}\!{\mathbf{u}}^{\beta}_{i} ).
\end{aligned}
%
\vspace{-0.2cm}
\end{equation}

\noindent 
The above equation indicates that for $\sigma^\alpha_i\geq0$ and $\sigma^\beta_i\geq0$, the maximum is reached if  the right and left singular value matrices are perfectly aligned, \ie, $\mathbf{U}^\alpha\! = \!\mathbf{U}^\beta$ and $\mathbf{V}^\alpha\! =\! \mathbf{V}^\beta$. Notice the singular values $\sigma^\alpha_i$ and $\sigma^\alpha_i$ serve as weighs for the alignment of singular vectors. 
As the singular value gap $\Delta\sigma_{12}=\sigma_1-\sigma_2$ is usually significant (spectrum of feature maps usually adheres to the power law $ai^{-\kappa}$ ($i$ is the index of sorted singular values, $a$ and $\kappa$ control the magnitude/shape), the large singular values tend to dominate the optimization. Such an issue makes  Eq. \eqref{equ:EignAlignOri} focus only on aligning the direction of dominant singular vectors, while neglecting remaining singular vectors, leading to a poor alignment of the orthonormal bases. In contrast,  SFA alleviates this issue. According to Prop.~\ref{prop:FeatureAug}, Eq. \eqref{equ:EignAlignOri} with SFA becomes:
\vspace{-0.1cm}
\begin{equation}
\label{equ:EignAlignRefine}
\!\!\!\begin{aligned}
\mathcal{L}^*_{a}& = \underset{\mathbf{h^\alpha},\mathbf{h^\beta}/\tau}{\mathbb{E}}\underset{\mathbf{r}^\alpha,\mathbf{r^\beta\sim \mathcal{N}(0,\mathbf{I})}}{\mathbb{E}}(\tilde{\mathbf{h}}^{\alpha\top}\tilde{\mathbf{h}}^\beta) \\[-4pt]
&=\frac{1}{n\tau}\sum_{i=1}^{d_h} (1\!-\!\lambda_i^\alpha){\sigma}^\alpha_i({\mathbf{v}}_{i}^{\alpha\top}{\mathbf{v}}^\beta_{i})(1\!-\!\lambda_i^\beta){\sigma}^{\beta}_i({{\mathbf{u}}}^{\alpha\top}_{i}{\mathbf{u}}^{\beta}_{i}).
\end{aligned}\!
\end{equation}

\vspace{-0.3cm}
\begin{tcolorbox}[width=1.0\linewidth, colframe=blackish, colback=beaublue, boxsep=0mm, arc=2mm, left=2mm, right=2mm, top=2mm, bottom=2mm]
Eq.~\eqref{equ:EignAlignRefine} shows that SFA limits the impact of leading singular vectors on the alignment step if SFA can rebalance the spectra. Indeed, Figure~\ref{fig:sim2} shows the spectrum balancing effect and one can see that $\phi(\sigma_i;k)=\sigma_i(1-\lambda_i(k))\leq\sigma_i$. The same may be concluded from $0\leq\lambda_i\leq1$ in Prop.~\ref{prop:PowerIter1}. See the~\nameref{lambda_upper}~section (supplementary material) for the estimated upper bound of $\phi$. Finally, the \nameref{sec:spectrum}~section also shows empirically that SFA leads to a superior alignment.
\end{tcolorbox}

\vspace{0.3cm}
\noindent\textbf{Improved Alignment Yields Better Generalization Bound.$\!\!\!$} 
To show SFA achieves the improved generalization bound we quote the following theorem \citep{DBLP:journals/corr/abs-2111-00743}.


\begin{theorem}
\label{theorem:gb}
Given a Nearest Neighbour classifier $G_f$, the downstream error rate of $G_{f}$ is 
$\operatorname{Err}\left(G_{f}\right) \!\leq\!(1\!-\!\sigma)\!+\!R_{\varepsilon},$ 
where
$R_{\varepsilon}\!\!= \!\!P_{\mathbf{x}_{1}, \mathbf{x}_{2} \in \mathcal{A}(\mathbf{x})}\{\|f\left(\mathbf{x}_{1}\right)\!-\!f\left(\mathbf{x}_{2})\| \!\geq\! \varepsilon\right\}\!\leq\!\frac{\sqrt{2-2\mathcal{L}_{a}}}{\varepsilon}$, $\sigma$ is the parameter of the so-called
 $(\sigma, \delta)$-augmentation (for each latent class, the proportion of samples located in a ball with diameter $\delta$ is larger than $\sigma$, $\mathcal{A}(\cdot)$ is the set of augmented samples, $f(\cdot)$ is the encoder, and $\{\|\cdot\|\!\geq\!\varepsilon\}$ is the set of samples with $\varepsilon$-close representations among augmented data. 
\end{theorem}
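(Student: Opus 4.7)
The plan is to decompose the nearest-neighbour error into two clean pieces and then bound each. First, I would unfold what it means for $G_f$ to misclassify a query $\mathbf{x}$: either (a) $\mathbf{x}$ happens to lie in the ``bad'' $(1-\sigma)$ fraction of its latent class that is not covered by a ball of diameter $\delta$ under augmentation (in which case we cannot guarantee the existence of a same-class anchor close to $\mathbf{x}$ in input space), or (b) all augmentation-consistent samples in the ``good'' $\sigma$-fraction exist, but the encoder $f$ fails to map them to $\varepsilon$-close representations so that a different-class sample may still be the nearest neighbour in feature space. By a union-bound style argument on these two disjoint failure modes,
\begin{equation*}
\operatorname{Err}(G_f)\;\leq\;(1-\sigma)\;+\;P_{\mathbf{x}_1,\mathbf{x}_2\in\mathcal{A}(\mathbf{x})}\{\|f(\mathbf{x}_1)-f(\mathbf{x}_2)\|\geq\varepsilon\},
\end{equation*}
which is exactly the stated decomposition with the second term being $R_\varepsilon$.

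Next, I would bound $R_\varepsilon$ by $\sqrt{2-2\mathcal{L}_a}/\varepsilon$ via Markov's inequality applied to the squared norm. Concretely, for any $\varepsilon>0$,
\begin{equation*}
P\bigl\{\|f(\mathbf{x}_1)-f(\mathbf{x}_2)\|\geq\varepsilon\bigr\}
\;=\;P\bigl\{\|f(\mathbf{x}_1)-f(\mathbf{x}_2)\|^2\geq\varepsilon^2\bigr\}
\;\leq\;\frac{\mathbb{E}\,\|f(\mathbf{x}_1)-f(\mathbf{x}_2)\|^2}{\varepsilon^2}.
\end{equation*}
Using the standard $\ell_2$-normalization of CL representations ($\|f(\cdot)\|=1$), one expands
\begin{equation*}
\mathbb{E}\,\|f(\mathbf{x}_1)-f(\mathbf{x}_2)\|^2 \;=\; 2 - 2\,\mathbb{E}\bigl[f(\mathbf{x}_1)^\top f(\mathbf{x}_2)\bigr] \;=\; 2 - 2\mathcal{L}_a,
\end{equation*}
where the last equality identifies the inner-product expectation with the alignment term of Eq.~\eqref{equ:InfoNCE} (with the $1/\tau$ factor absorbed into the scaling). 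Taking square roots (Jensen on the concave $\sqrt{\cdot}$, or equivalently applying Markov to $\|f(\mathbf{x}_1)-f(\mathbf{x}_2)\|$ after bounding its first moment by the root of the second moment) then yields $R_\varepsilon\leq \sqrt{2-2\mathcal{L}_a}/\varepsilon$.

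Finally, I would tie the bound back to SFA: since Prop.~\ref{prop:FeatureAug} and Eq.~\eqref{equ:EignAlignRefine} show that SFA raises $\mathcal{L}^{*}_a$ relative to the vanilla $\mathcal{L}_a$ by rebalancing the $(1-\lambda_i)\sigma_i$ weights so that non-leading singular directions also contribute, the numerator $\sqrt{2-2\mathcal{L}_a}$ shrinks and the generalization bound tightens for a fixed $\varepsilon$ and fixed $(\sigma,\delta)$-augmentation regime. The main obstacle I anticipate is the step from the error decomposition to the $(1-\sigma)$ term: formally one needs to argue, using the $(\sigma,\delta)$-augmentation definition, that conditional on $\mathbf{x}$ lying in the ``good'' sub-class region and conditional on the $\varepsilon$-closeness event, the nearest neighbour in feature space is guaranteed to be of the same latent class. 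This requires a careful geometric/metric argument on the feature space (essentially that $\varepsilon$ is smaller than the inter-class separation implied by $(\sigma,\delta)$-augmentation) rather than pure probability, and is the substantive content that the cited result of \citep{DBLP:journals/corr/abs-2111-00743} provides; the remaining Markov-plus-expansion chain above is essentially mechanical.
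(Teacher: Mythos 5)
Your proposal follows essentially the same route as the paper: the $(1-\sigma)+R_{\varepsilon}$ decomposition is imported from the $(\sigma,\delta)$-augmentation lemma of \citep{DBLP:journals/corr/abs-2111-00743} (which the paper likewise quotes rather than re-proves), and $R_{\varepsilon}$ is bounded by expanding $\mathbb{E}\,\|f(\mathbf{x}_1)-f(\mathbf{x}_2)\|^2=2-2\mathcal{L}_a$ for unit-norm representations and applying a Markov-type inequality, exactly as in the supplementary proof. If anything, your parenthetical route (Markov on the norm itself, with the first moment bounded by the square root of the second moment) is the cleaner justification of the $\sqrt{2-2\mathcal{L}_a}/\varepsilon$ form, which the paper asserts after applying Markov to the squared norm.
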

\vspace{-0.2cm}
\begin{proof}
See \nameref{gen_bound_proof} (supp. material).
\end{proof}
\vspace{-0.1cm}

\noindent 
Theorem~\ref{theorem:gb} says the key to better generalization of contrastive learning is  better alignment $\|f(\mathbf{x}_{1})-f(\mathbf{x}_{2})\|$ of positive samples. SFA improves alignment by design. See \nameref{sec:improveAligment} ~~See empirical result in the \nameref{sec:spectrum}~section. Good alignment (\eg, Fig. \ref{fig:EigenAlign}) due to spectrum rebalancing (\eg, Fig. \ref{fig:Eigen1}) enjoys 
 $\mathcal{L}_{a}\leq\mathcal{L}^*_{a}$ (Eq. \eqref{equ:EignAlignOri} and \eqref{equ:EignAlignRefine}) so
 one gets $\!R_{\varepsilon}^{*}\!\leq \!R_{\varepsilon}$ and the lower generalization bound  $\operatorname{Err}\big(G^{*}_{f}\big) \leq \operatorname{Err}\big(G_{f}\big)$. Asterisk $^*$ means SFA is used ($\mathcal{L}^*_{a}$ replaces $\mathcal{L}_{a}$).

\definecolor{Gray}{gray}{0.9}
\definecolor{LightCyan}{rgb}{0.88,1,1}

\definecolor{DarkBlue}{rgb}{0,0, 0.4}

\section{Experiments}

\begin{table*}[t]
\vspace{-0.5cm}
\begin{minipage}[h]{0.67\textwidth}
\vspace{-0.5cm}
\centering
\resizebox{\textwidth}{!}{
\begin{tabular}{lcccccc}
\toprule
\textbf{Method}& \textbf{WikiCS} & \textbf{Am-Comput.} & \textbf{Am-Photo} & \textbf{Cora} & \textbf{CiteSeer} & \textbf{PubMed} \\
\midrule 
RAW fatures & $71.98 \pm 0.00$ & $73.81 \pm 0.00$ & $78.53 \pm 0.00$ & $64.61 \pm 0.22$ & $65.77 \pm 0.15$ & $82.02 \pm 0.26$\\
DeepWalk & $74.35 \pm 0.06$ & $85.68 \pm 0.06$ & $89.44 \pm 0.11$ & $74.61 \pm 0.22$ & $50.77 \pm 0.15$ & $80.11 \pm 0.25$\\
\midrule
DGI   & $75.35 \pm 0.14$ & $83.95 \pm 0.47$ & $91.61 \pm 0.22$  & $82.15 \pm 0.63$ & $69.51 \pm 0.52$  & $86.01 \pm 0.26$            \\
MVGRL & $77.52 \pm 0.08$ & $87.52 \pm 0.11$ & $91.74 \pm 0.07$  & $83.11 \pm 0.12$ & $73.33 \pm 0.03$  & $84.27 \pm 0.04$            \\           
GRACE & $78.19 \pm 0.48$ & $87.25 \pm 0.25$ & $92.15 \pm 0.25$  & $83.51 \pm 0.25$ & $73.63 \pm 0.20$ & $85.51 \pm 0.37$             \\
GCA   & $78.35 \pm 0.05$ & $87.85 \pm 0.31$ & $92.49 \pm 0.16$  & $82.89 \pm 0.21$ & $72.89 \pm 0.13$ & $85.12 \pm 0.23$             \\
SUGRL & $77.72 \pm 0.28$ & $88.83 \pm 0.23$  & $93.28 \pm 0.42$ & $83.47 \pm 0.55$ & $73.08 \pm 0.45$ & $84.91 \pm 0.31$\\
MERIT & $77.92 \pm 0.43$ & $87.53 \pm 0.26$  & $93.12 \pm 0.42$ &$84.11  \pm 0.65$ &$74.34  \pm 0.43$ & $84.12 \pm 0.23$\\
BGRL  & $79.11 \pm 0.62$ & $87.37 \pm 0.40$ & $91.57 \pm 0.44 $ & $83.77 \pm 0.57$ & $73.07 \pm 0.06$ & $84.62 \pm 0.35$\\ 
G-BT  & $76.85 \pm 0.62$ & $86.86 \pm 0.33$ & $92.63 \pm 0.57$  & $83.63 \pm 0.44$ & $72.95 \pm 0.17$ & $84.52 \pm 0.12$\\ 
COSTA  &$79.12  \pm 0.02$ & $88.32 \pm 0.03$ & $92.56 \pm 0.45$  &$84.32  \pm 0.22$ & $72.92 \pm 0.31$ &$86.01\pm0.22$\\
\midrule 
\rowcolor{LightCyan} SFA$_{\text{BT}}$ & $\textcolor{blue}{\mathbf{80.22} \pm 0.05}$ & $\textcolor{blue}{88.14 \pm 0.15}$ & $\textcolor{blue}{92.83 \pm 0.14}$ & $\textcolor{blue}{84.10 \pm 0.01}$ & $\textcolor{blue}{73.73 \pm 0.03}$& $\textcolor{blue}{85.63 \pm 0.07}$\\
\rowcolor{LightCyan} SFA$_{\text{InfoNCE}}$ & $\textcolor{blue}{79.98 \pm 0.05}$ & $\textcolor{blue}{\mathbf{89.24 \pm 0.27}}$ & $\textcolor{blue}{\mathbf{93.53 \pm 0.16}}$ & $\textcolor{blue}{\mathbf{85.89 \pm 0.01}}$ & $\textcolor{blue}{\mathbf{75.31 \pm 0.03}}$ & $\textcolor{blue}{\mathbf{86.29 \pm 0.13}}$\\
\bottomrule
\end{tabular}
}
\caption{\label{tab:MainResultNodeClassifcation}Node classification on graph datasets. Note that SFA$_{\text{InfoNEC}}$ and SFA$_{\text{BT}}$ can be directly compared to GRACE and G-BT. (Accuracy is reported.)}
\vskip 0.1in
\begin{minipage}[c]{0.59\textwidth}
\begin{minipage}[c]{0.49\textwidth}
\vspace{0.1cm}
    \centering
    \includegraphics[width=1\textwidth]{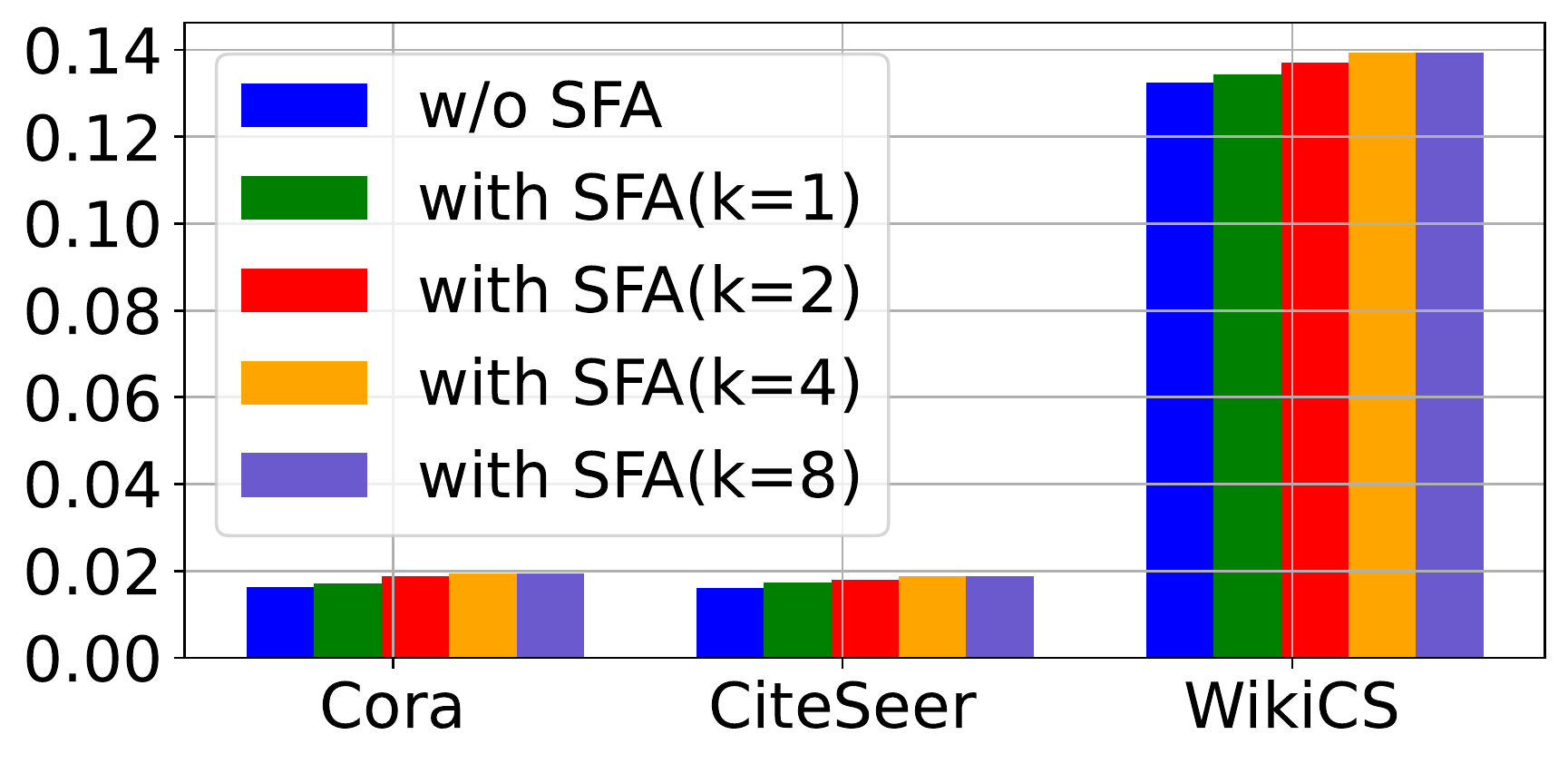}
    \vspace{-0.7cm}
    \captionof{figure}{Running time per epoch in seconds. \label{fig:runningtime}}
	\end{minipage}
\begin{minipage}[c]{0.49\textwidth}
\vspace{0.05cm}
\centering
\resizebox{\textwidth}{!}{
\begin{tabular}{l|c|c}
\toprule
\multirow{2}{*}{\textbf{Datasets}}  &    \multirow{2}{*}{{w/o SFA}}     & \multicolumn{1}{c}{{SFA}}  \\
& & $k= 1$  \\
\midrule
\textbf{Cora}       & $165$ &$172$  \\
\textbf{CiteSeer}   &$164$  &$173$  \\
\textbf{WikiCS}     &$1334$  &$1343$ \\
\bottomrule
\end{tabular}
}
\vspace{-0.12cm}
\captionof{table}{Running time per epoch in seconds.\label{tab:runningtime}}
\end{minipage}
\begin{minipage}[c]{\textwidth}
 \resizebox{\textwidth}{!}{
    \begin{tabular}{cccc}
    \toprule
    Method &    SFA (ours)       &  SVD      & Random SVD \\
    \midrule
    Time   &    0.25 hour        &  12 hours & 1.2 hours  \\
    \bottomrule
    \end{tabular}
    }
    \captionof{table}{\label{tab:time_comp_svd}Running time on Ogb-arxiv.}
%
\end{minipage}
\end{minipage}
\begin{minipage}[c]{0.403\textwidth}
\resizebox{1\textwidth}{!}{
\begin{tabular}{l|cc}
\toprule
  \textbf{Ogb-arxiv}      & \textbf{Validation}     &\textbf{Test}\\
  \midrule
  MLP                                   & $57.6 \pm 0.2$     & $55.5 \pm 0.3$ \\
  Node2vec                              & $71.2 \pm 0.3$     & $70.0 \pm 0.3$ \\
  MVGLR                                 & $69.3 \pm 0.3$     & $68.2 \pm 0.2$ \\
  DGI                                   & $71.2 \pm 0.1$     & $70.3 \pm 0.1$ \\
  SUGRL                                 & $70.2 \pm 0.1$     & $69.3 \pm 0.2$ \\
  MERIT                                 & $67.2 \pm 0.1$     & $65.3 \pm 0.2$\\
  GRACE                                 & $71.4 \pm 0.5$     & $70.8 \pm 0.1$\\
  G-BT                                  & $71.1 \pm 0.3$     & $70.0 \pm 0.2$ \\
  COSTA                                  & $71.6 \pm 0.4$     & $71.0 \pm 0.4$ \\
 \rowcolor{LightCyan} SFA$_{\text{Info}}$           & \textcolor{blue}{$\bf 72.3 \pm 0.1$} & \textcolor{blue}{$\bf 71.6 \pm 0.4$} \\
  \bottomrule
    \end{tabular}
    }
\caption{Node classification.\label{tab:obgarxiv}}
\end{minipage}
\end{minipage}
\hspace{0.2cm}
\begin{minipage}[h]{0.33\textwidth}
\vspace{-0.5cm}
\centering
        \resizebox{0.95\textwidth}{!}{
             \setlength{\tabcolsep}{3pt}
    \begin{tabular}{lccc}
    \toprule
         \textbf{Method}        & \textbf{CIFAR10} & \textbf{CIFAR100} & \textbf{ImageNet-100}\\
         \hline
          SimCLR      &  $90.5$   &  $65.5$  & $76.8$ \\
         \rowcolor{LightCyan} SFA$_{\text{SimCLR}}$    &  \textcolor{blue}{$\mathbf{91.6}$}   &  \textcolor{blue}{$\mathbf{66.7}$} & $\textcolor{blue}{\mathbf{77.7}}$\\ 
         \midrule
         BalowTw      &  $92.0$   &  $69.7$ &  $80.0$  \\
         \rowcolor{LightCyan} SFA$_{\text{BT}}$   &  \textcolor{blue}{$\mathbf{92.5}$}   &  \textcolor{blue}{$\mathbf{70.4}$}   & \textcolor{blue}{$\mathbf{80.9}$} \\
         \midrule
 Siamese               &      $90.51$     &    $66.04$    &      $74.5$       \\
 \rowcolor{LightCyan}SFA$_{\text{Siamese}}$          &      \textcolor{blue}{$\mathbf{91.23}$}    &    \textcolor{blue}{\textbf{66.99}}    &     \textcolor{blue}{$\mathbf{75.6}$}     \\
 \midrule
 SwAV                  &      $89.17$     &    $64.88$    &     $74.0$       \\
 \rowcolor{LightCyan} SFA$_{\text{SwAV}}$            &      \textcolor{blue}{$\mathbf{90.12}$}     &    \textcolor{blue}{$\mathbf{65.82}$}    &      \textcolor{blue}{$\mathbf{74.8}$}       \\
    \bottomrule
    \end{tabular}
    }
    \vspace{-0.15cm}
    \caption{Image classification on CIFAR10, CIFAR100 and ImageNet-100.}
    \label{tab:image_result}

   \vspace{0.2cm}
    \resizebox{0.95\textwidth}{!}{
    \begin{tabular}{lccc}
\toprule
         \textbf{Method}        & \textbf{NCI1}   & \textbf{PROTEIN}  & \textbf{DD} \\
         \midrule
         GraphCL                   &$77.8 \pm 0.4$ &$74.3 \pm 0.4$ &${77.8 \pm 0.4}$ \\
         LP-Info                   &$75.8  \pm 1.2$   &$\textcolor{blue}{74.6 \pm 0.2}$ &$72.5 \pm 1.9$ \\
         JOAO                      &$\textcolor{blue}{78.0 \pm 0.4}$ &${74.5 \pm 0.4}$ &$\textcolor{blue}{77.5 \pm 0.5}$ \\
         SimGRACE                  &$77.4 \pm 1.0$   &$73.9 \pm 0.1$  &$77.3 \pm 1.1$ \\
         \rowcolor{LightCyan} SFA$_{\text{InfoNEC}} $     &$\textcolor{blue}{\mathbf{78.7 \pm 0.4}}$   &$\textcolor{blue}{\mathbf{75.4 \pm 0.4}}$  &$\textcolor{blue}{\mathbf{78.6 \pm 0.4}}$ \\
    \bottomrule
    \end{tabular}
    }
    \vspace{-0.15cm}
    \caption{Graph classification. \color{DarkBlue}{(SFA$_{\text{InfoNEC}}$ can be directly compared with  GraphCL.)} \label{tab:graphcls}}
 \vspace{0.2cm}
\resizebox{0.95\textwidth}{!}{
\begin{tabular}{cccccc}
\toprule
$\mathcal{A}_{G}$ & $\mathcal{A}_{F}$ & $\mathcal{A}_{F^*}$ & \textbf{Am-Comput.} & \textbf{Cora} & \textbf{CiteSeer} \\
\midrule 
$\times$     &$\times$ &$\times$         &$85.01$ &$80.04$   &$71.35$\\
$\checkmark$ &$\times$ &$\times$         &$87.25$ &$82.23$   &$74.56$\\
$\times$     &$\times$ &$\checkmark$     &$86.74$ &$84.19$   &$73.15$\\
\rowcolor{LightCyan}$\checkmark$ &$\times$ &$\checkmark$     &$\textcolor{blue}{\mathbf{88.74}}$ &$\textcolor{blue}{\mathbf{85.90}}$   &$\textcolor{blue}{\mathbf{75.05}}$\\
\midrule 
$\checkmark$     &$\checkmark$ &$\times$         &$87.55$& $83.35$&  $74.45$\\
\rowcolor{LightCyan} $\checkmark$     &$\checkmark$ &$\checkmark$    &$\textcolor{blue}{{88.69}}$ & $\textcolor{blue}{{84.50}}$&  $\textcolor{blue}{{74.70}}$\\
\bottomrule
\end{tabular}
}
\vspace{-0.15cm}
\caption{\label{tab:AblationStudy}Ablation study on different augmentation strategies: $\mathcal{A}_G$, $\mathcal{A}_F$ and $\mathcal{A}_{F^*}$.}
\end{minipage}
\vspace{-0.3cm}
\end{table*}

Below we conduct  experiments on 
the node classification, node clustering, graph classification and image classification. 
For fair comparisons, we use the same experimental setup as the representative Graph SSL (GSSL) methods (\ie, GCA~\cite{zhu2020deep} and GRACE~\cite{zhu2021graph}). 

\vspace{0.1cm}
\noindent\textbf{Datasets.}
We use five popular 
datasets~\cite{zhu2020deep, zhu2021graph,velickovic2019deep}, including citation networks (Cora, CiteSeer) and social networks (Wiki-CS, Amazon-Computers, Amazon-Photo)~\cite{kipf2016semi,sinha2015overview,mcauley2015image, mernyei2020wiki}. 
For graph classification, we use NCI1, PROTEIN and DD \cite{dobson2003distinguishing,riesen2008iam}. For image classification we use CIFAR10/100~\cite{krizhevsky2009learning} and ImageNet-100~\cite{DBLP:conf/cvpr/DengDSLL009}.
See the \nameref{app:modelsetting}~section for details (supplementary material).

\vspace{0.1cm}
\noindent\textbf{Baselines.}
We  focus on three groups of SSL models. The first group includes traditional GSSL, \ie, Deepwalk~\cite{perozzi2014deepwalk}, node2vec~\cite{grover2016node2vec}, and  GAE~\cite{kipf2016variational}. The second group is contrastive-based GSSL, \ie,  
Deep Graph Infomax (DGI)~\cite{velickovic2019deep}, Multi-View Graph Representation Learning (MVGRL)~\cite{hassani2020contrastive}, GRACE~\cite{zhu2020deep}, GCA~\cite{zhu2021graph}, ~\cite{jin2021multi} SUGRL~\cite{mo2022simple}. The last group  
does not require explicit negative samples, \ie, Graph Barlow Twins(G-BT)~\cite{bielak2021graph} and BGRL~\cite{thakoor2021bootstrapped}. We also compare SFA with  COSTA~\cite{zhang2022costa} (GCL with feat. augmentation).

\vspace{0.1cm}
\noindent\textbf{Evaluation Protocol.} 
We adopt the  
evaluation from~\cite{velickovic2019deep, zhu2020deep, zhu2021graph}. Each model is trained in an unsupervised manner on the whole graph with node features. Then, we pass the raw features into the trained encoder to obtain   embeddings and train an $\ell_2$-regularized logistic regression classifier. Graph Datasets are randomly divided into 10\%, 10\%, 80\% for training, validation, and testing. We report the accuracy with mean/standard deviation over 20 random data splits.

\vspace{0.1cm}
\noindent \textbf{Implementation details} We use Xavier initialization for the GNN parameters and train the model with Adam optimizer. For node/graph classification, we use 2 GCN layers. The logistic regression classifier is trained with $5,000$ (guaranteed converge). We also use early stopping with a patience of 20 to avoid overfitting. We set the size of the hidden dimension of nodes to from ${128}$ to $512$. In clustering, we train a k-means clustering model. For the chosen hyper-parameters see Section~\ref{app:hyperparam}. We implement the major baselines using PyGCL~\cite{DBLP:journals/corr/pygcl}. The detailed settings of augmentation and contrastive objectives are in Table~\ref{tab:ModelAugSetting} of Section~\ref{app:modelsetting}. 



\subsection{Main Results}
\noindent\textbf{Node Classification\footnote{Implementation and evaluation are based on  PyGCL \cite{DBLP:journals/corr/pygcl}: \url{https://github.com/PyGCL/PyGCL}.}.}
We employ  node classification as a downstream task to showcase  SFA. The default contrastive objective is InfoNCE or the BT loss. Table~\ref{tab:MainResultNodeClassifcation} shows that SFA consistently achieves the best results on all datasets. Notice that the graph-augmented  GSSL methods, including GSSL with SFA, significantly outperform the traditional methods, illustrating the importance of data augmentation in GSSL. Moreover, we find that the performance of GCL methods (\ie, GRACE, GCA, DGI, MVGRL) improves by a large margin when integrating with SFA (\eg, major baseline, GRACE, yields 3\% and 4\% gain on  Cora and Citeseer), which shows that SFA  is complementary to graph augmentations. SFA  
 also works with the BT loss and  improves its performance.

\vspace{0.1cm}
\noindent\textbf{Graph Classification.}
By adopting the graph-level GNN encoder, SFA can be used for the graph-level pre-taining. 
Thus, we compare SFA with graph augmentation-based models (\ie, GarphCL \citep{DBLP:conf/nips/YouCSCWS20}, JOAO~\citep{DBLP:conf/icml/YouCSW21}) and augmentation-free models (\ie, SimGrace~\cite{DBLP:conf/www/XiaWCHL22}, LP-Info~\cite{DBLP:conf/wsdm/YouCWS22}). Table~\ref{tab:graphcls} shows that SFA outperforms all baselines on the three datasets.

\vspace{0.1cm}
\noindent\textbf{Image Classification\footnote{Implementation/evaluation are based on Solo-learn~\cite{JMLR:v23:21-1155}: \url{https://github.com/vturrisi/solo-learn}.$\!\!\!\!$}.}
As SFA perturbs the spectrum of feature maps, it is also applicable to 
the image domain. Table~\ref{tab:image_result}  presents the top-1 accuracy on CIFAR10/100 and ImageNet-100. See also the \nameref{sec:impl} (supp. material).

\vspace{0.1cm}
\noindent\textbf{Runtimes.}  
Table~\ref{tab:runningtime} and Fig.~\ref{fig:runningtime} show SFA incurs a negligible runtime overhead  (few matrix-matrix(vector) multiplications). 
The cost of SFA for $k\!<\!8$ iterations 
is negligible.




To rebalance the spectrum, choices for a push-forward function whose curve gets  flatter as singular values get larger  are many but they require SVD to rebalance singular values directly. 
The complexity of SVD is $\mathcal{O}(nd_h^2)$ for $n\!\gg\!d_h$, which is costly in GCL as  SVD has to be computed per mini-batch (and SVD is unstable in back-prop. due to non-simple singular values ).  Table~\ref{tab:time_comp_svd} shows the runntime on  Ogb-arixv. 

\subsection{Analysis on Spectral Feature Augmentation}
\label{sec:spectrum}
\noindent\textbf{Improved Alignment.}
We show that SFA improves the alignment of two views during training. Let $\mathbf{H}^\alpha$ and $\mathbf{H}^\beta$ ($\widetilde{\mathbf{H}}^\alpha$ and $\widetilde{\mathbf{H}}^\beta$) denote the feature maps of two views without (with) applying SFA. The alignment is computed by $\|\mathbf{H}^\alpha -\mathbf{H}^\beta\|^2_F$ (or $\|\tilde{\mathbf{H}}^\alpha -\tilde{\mathbf{H}}^\beta\|^2_F$). Fig.~\ref{fig:EigenAlign}  shows that SFA achieves better alignment. See also the \nameref{sec:moreresult}.

\vspace{0.1cm}
\noindent\textbf{Empirical Evolution of Spectrum.}
Fig.~\ref{fig:Eigen1} (Cora) shows how the singular values of  features maps ${\mathbf{H}}$ (without SFA) and $\tilde{\mathbf{H}}$ (with SFA) evolve. As training progresses, the gap between consecutive singular values gradually decreases due to SFA. The leading components of spectrum (where the signal is) become more balanced: empirical results match Prop.~\ref{prop:FeatureAug} \&  \ref{prop:PowerIter1}.

\vspace{0.1cm}
\noindent\textbf{Ablations on Augmentations.}
Below, 
we compare graph augmentation $\mathcal{A}_G$, channel feature augmentation $\mathcal{A}_F$ (random noise  added to embeddings directly) and the spectral feature augmentation $\mathcal{A}_{F^*}$.  Table~\ref{tab:AblationStudy} shows that using $\mathcal{A}_G$ or $\mathcal{A}_{F^*}$ alone  improves performance. For example, $\mathcal{A}_G$ yields 2.2\%, 2.2\% and 3.2\% gain on Am-Computer, Cora, and CiteSeer. $\mathcal{A}_{F^*}$ yields 1.7\%, 4.1\% and 1.8\% gain on  Am-Computer, Cora, and CiteSeer. Importantly, when both $\mathcal{A}_G$ and $\mathcal{A}_{F^*}$ are applied, the gain is \textbf{3.7}\%, \textbf{6.0}\% and \textbf{4.8}\% on Am-Computer, Cora, and CiteSeer over ``no augmentations''. Thus, SFA is complementary to existing graph augmentations. We also notice that $\mathcal{A}_{F^*}$ with $\mathcal{A}_G$ outperforms $\mathcal{A}_{F}$ with $\mathcal{A}_G$ by 1.1\%, 2.4\% and 1.7\%, which highlights the benefit of SFA.

\vspace{0.1cm}
\noindent\textbf{Effect of Number of Iterations ($k$).}
Below, we analyze how $k$ in Eq.~\eqref{equ:FeatureAug} influences the performance. We set $k \in \{0, 1, 2, 4, 8\}$ in our model with the InfoNCE loss on Cora, Citeseer and Am-Computer. The case of $k=0$ means that no power iteration is used, \ie, the solution simplifies to the feature augmentation by subtracting from $\mathbf{H}$ perturbation  $\mathbf{H}\mathbf{r}^{(0)}{\mathbf{r}^{(0)}}^\top\!\!/\|\mathbf{r}^{(0)}\|_2^2$ with random $\mathbf{r}^{(0)}$. 
Table~\ref{tab:EffectofK} shows that without power iteration, the performance of the model drops , \ie, 1.5\%, 3.3\% and 2.2\% on  Am-Comp., Cora and Citeseer. 

\vspace{-0.1cm}
\begin{tcolorbox}[width=1.0\linewidth, colframe=blackish, colback=beaublue, boxsep=0mm, arc=2mm, left=2mm, right=2mm, top=2mm, bottom=2mm]
The best gain in Table ~\ref{tab:EffectofK} is achieved for $1\leq k\leq 2$. This is consistent with Prop. \ref{prop:PowerIter1}, Fig. \ref{fig:sim2} and \ref{fig:sim3}, which show that the spectrum balancing effect (approximately flat region of $\phi$) and significant spectrum  augmentation (indicated by the large deviation in Fig. \ref{fig:sim2}) are achieved only when the power iteration is incomplete (low $k$, \ie, $1\leq k\leq 2$).
\end{tcolorbox}
\vspace{-0.1cm}

\begin{figure}
\begin{minipage}[c]{0.5\textwidth}
\begin{minipage}[b]{0.47\textwidth}
\includegraphics[width=\columnwidth]{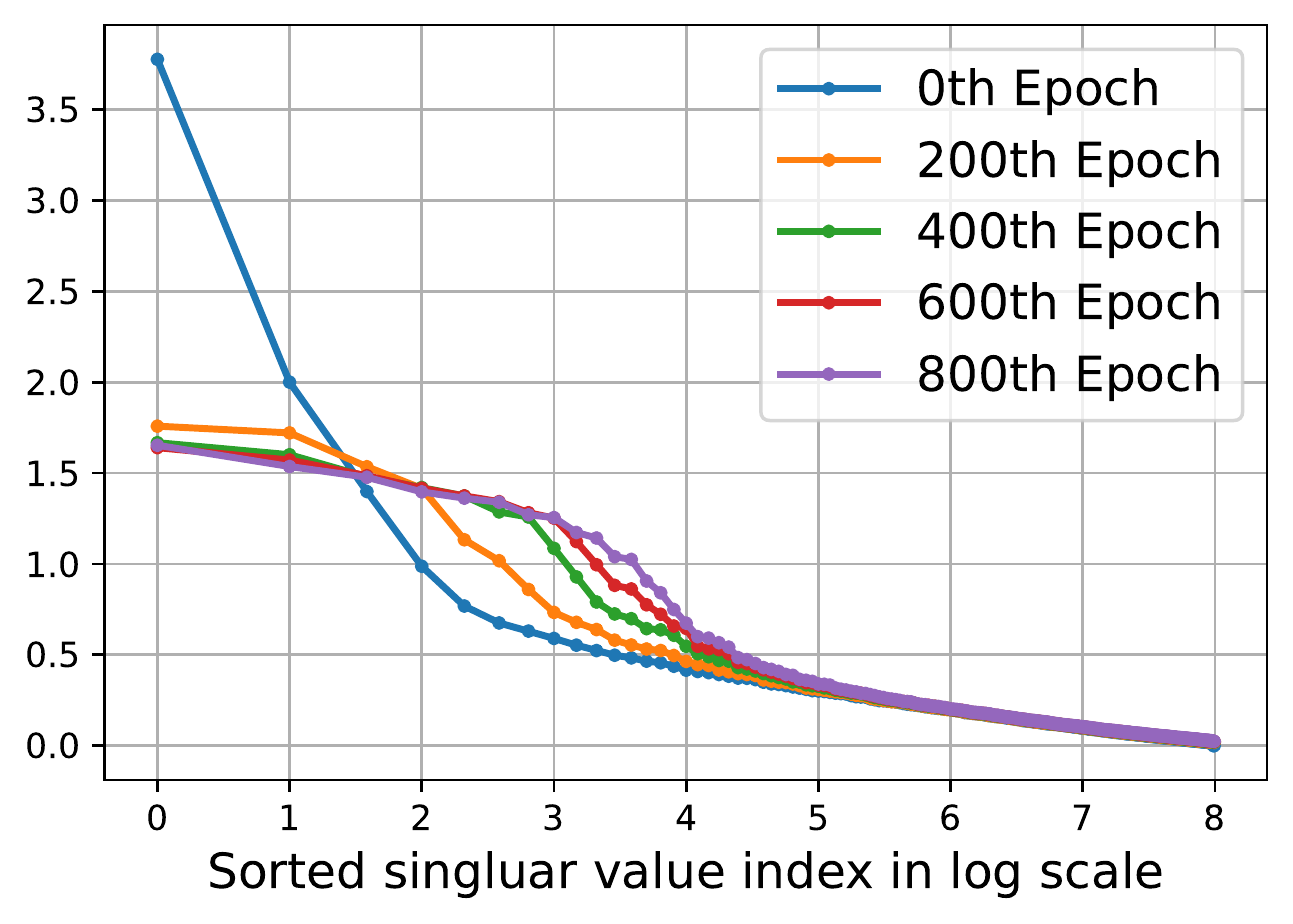}
\vspace{-0.7cm}
\caption{Spectrum of $\tilde{\mathbf{H}}$ when training on Cora with SFA.}
\vspace{0.2cm}
\label{fig:Eigen1}
\end{minipage}
\hspace{2px}
\begin{minipage}[b]{0.48\textwidth}
    \includegraphics[width=\columnwidth]{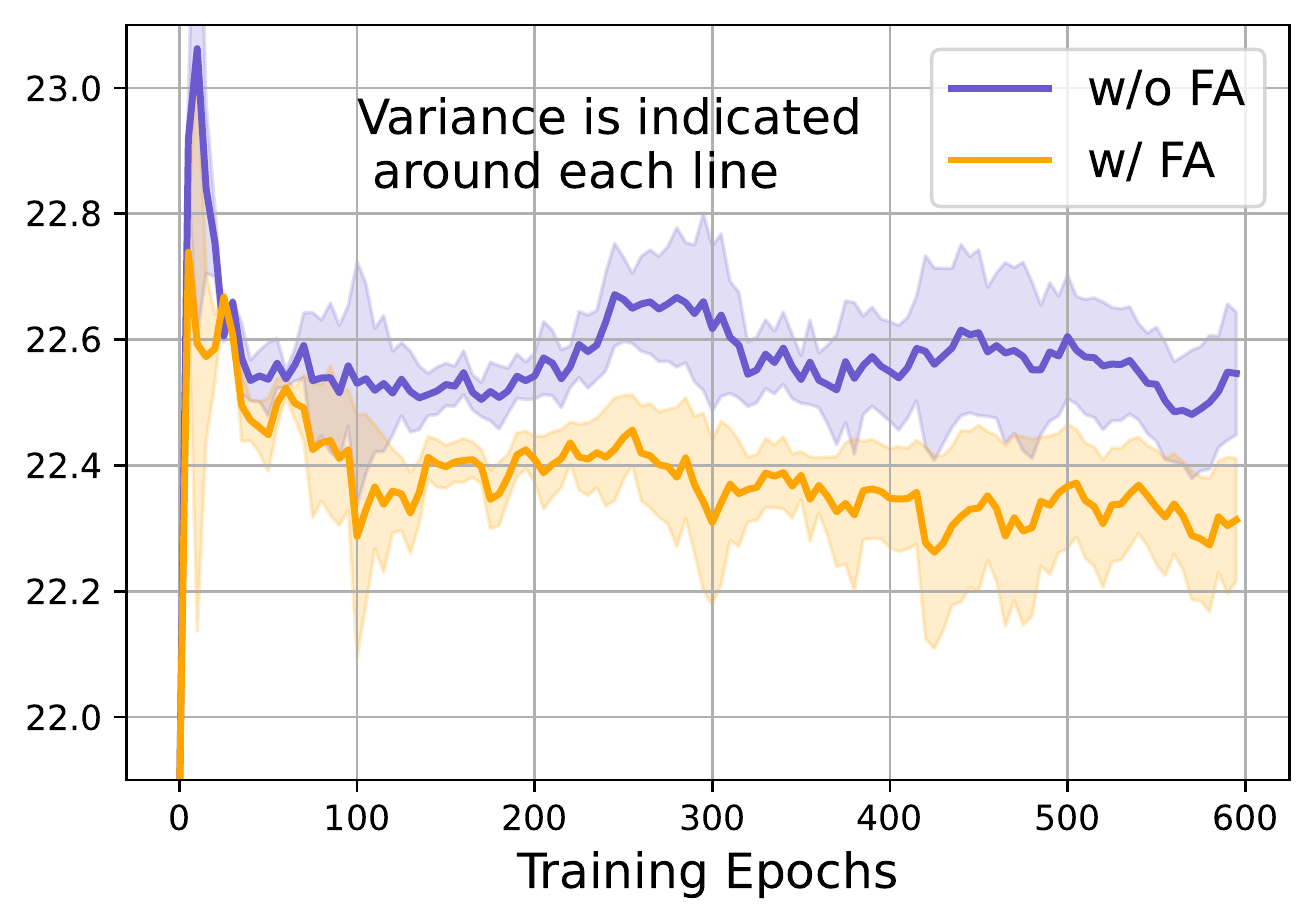}
    \vspace{-0.7cm}
    \caption{Alignment of  f. maps of two views (lower is better).}
    \label{fig:EigenAlign}
    \vspace{0.2cm}
\end{minipage}
\end{minipage}
\begin{minipage}[c]{0.5\textwidth}
\vspace{2px}
    \begin{subfigure}[b]{0.32\textwidth}
     \includegraphics[width=\textwidth]{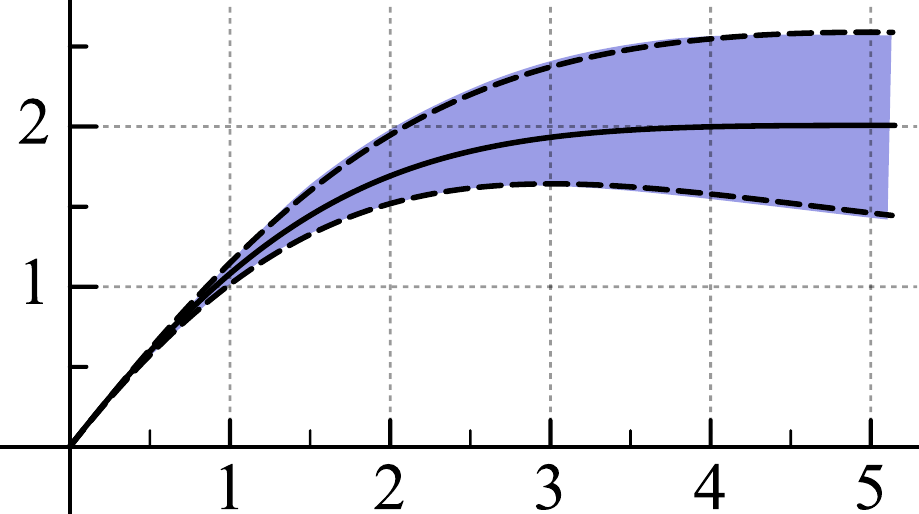}
    \caption{\label{fig:v1}  SFA}
    \end{subfigure}
    \hspace{1px}
    \begin{subfigure}[b]{0.32\textwidth}
    \includegraphics[width=\textwidth]{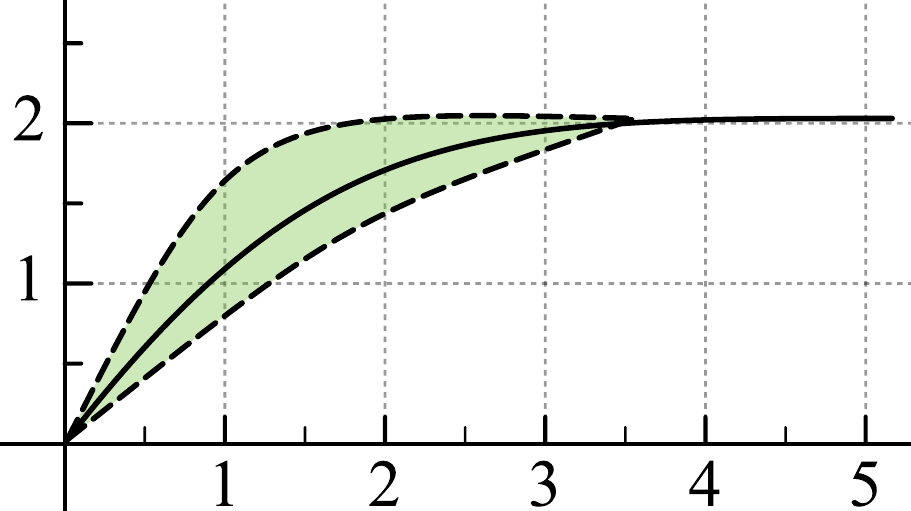}
    \caption{\label{fig:v2} MaxExp(F)}
    \end{subfigure}
    \hspace{1px}
    \begin{subfigure}[b]{0.32\textwidth}
    \includegraphics[width=\textwidth]{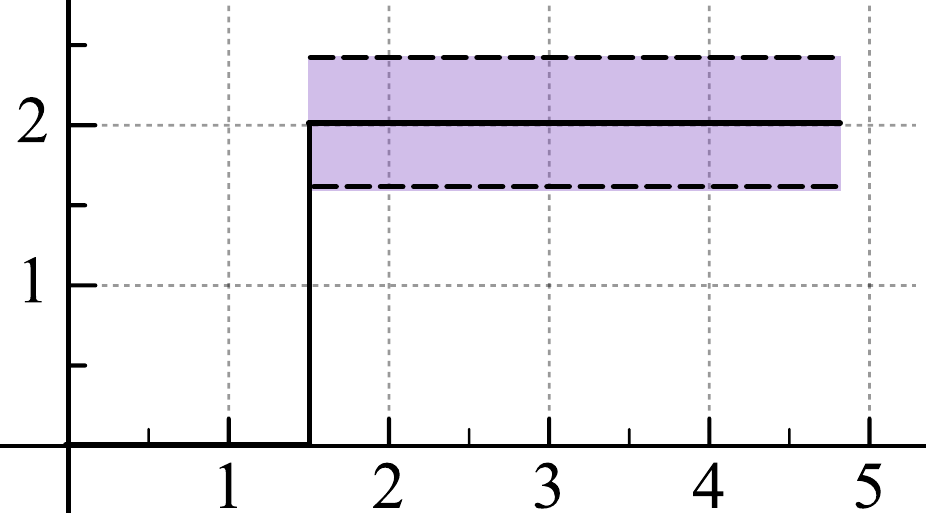}
    \caption{\label{fig:v3} Grassman}
\end{subfigure}
\vspace{-9px}
\vspace{-0.2cm}
\caption{Spectrum augmentation variants (dashed lines: injected noise var.) Power  Norm. \& Power Norm.$^*$ resemble (b) and (a).}\label{fig:new_var}
\end{minipage}
\vspace{-0.3cm}
\end{figure}

\noindent\textbf{Robustness to Noisy Features.}
Below we check on Cora and CiteSeer if GCL with SFA is robust to noisy node features in node classification setting. 
We draw noise $\Delta\mathbf{x} \sim \mathcal{N}(0,\mathbf{I})$ and inject it into the original node features as   $\mathbf{x}+\epsilon\Delta\mathbf{x}$, where $\epsilon\!\geq\!0$ controls the noise intensity.  Table~\ref{tab:nosiy_feature} shows that  SFA is robust to the noisy features. SFA partially balances the spectrum (Fig~\ref{fig:sim2} and~\ref{fig:simulation} ) of leading singular components where the signal lies, while non-leading components where the noise resides are left mostly unchanged by SFA.  

\vspace{0.1cm}
\noindent\textbf{Other Variants of Spectral Augmentation.} Below, we experiment with other ways of performing spectral augmentation. Figure \ref{fig:new_var} shows three different push-forward functions: our SFA, MaxExp(F) \cite{maxexp} and Grassman feature maps   \cite{mehrtash_dict_manifold}. As MaxExp(F) and Grassman do not inject any spectral noise, we equip them with explicit noise injectors. To determine what is the best form of such an operator, we vary (i) where the noise is injected, (ii) profile of balancing curve. As  the push-forward profiles of SFA and MaxExp(F) are similar (Figure \ref{fig:new_var}), we implicitly inject the noise into MaxExp(F) along the non-leading singular values (\cf leading singular values of SFA).

\vspace{0.1cm}
\noindent{\em MaxExp(F)} is only defined for symmetric-positive definite matrices. We extend it to feature maps by:

\vspace{-0.3cm}
\begin{equation}
\begin{aligned}
         \widetilde{\mathbf{H}} \!=\! \mathbf{H}\mathbf{B}\big(\mathbf{I}-(\mathbf{I}-\mathbf{A}/\text{Tr}(\mathbf{A}))^{\eta+\Delta\eta}\big),
    \end{aligned}
    \label{eq:maxexp}
\end{equation}
where $\mathbf{A}=(\mathbf{H}^\top\mathbf{H})^{0.5}$ and $\mathbf{B}=(\mathbf{H}^\top\mathbf{H})^{-0.5}$ are obtained via highly-efficient Newton-Schulz iteration. We draw the random noise  $\Delta\eta\!\sim\!\alpha\mathcal{N}(0,1)$ for each $\mathbf{H}$ and control the variance by $\alpha\!>\!0$. We round $\eta+\Delta\eta$ to the nearest integer (MaxExp(F) requires an integer for fast matrix exponentiation) and ensure the integer is greater than zero. See  the \nameref{sec:maxexp}~section for details.

\vspace{0.1cm}
\noindent
{\em{Matrix Square Root}} is similar to MaxExp(F)  but the soft function flattening the spectrum is replaced with a matrix square root push-forward function. {\em Power Norm.} yields:
%
%

\vspace{-0.3cm}
\begin{equation}
\begin{aligned}
\widetilde{\mathbf{H}}&=\mathbf{H}\mathbf{B}\,\text{PowerNorm}(\mathbf{A}; \beta\!+\!\Delta{\beta}),
\end{aligned}
\end{equation}
where $0\leq\beta\!+\Delta{\beta}\!\leq 1$ controls the steepness of the balancing push-forward curve. $\Delta{\beta}$ is 
drawn from the Normal distribution (we choose the best aug. variance). {\em Power Norm.$^*$} is:
%
%
\vspace{-0.3cm}
\begin{align}
\!\!\!\!\widetilde{\mathbf{H}}&=\mathbf{H}\mathbf{B}\big( (1-\beta\!-\!\Delta{\beta})\mathbf{A}^{0.5} +(\beta\!+\!\Delta{\beta})\mathbf{A}^{0.5}\mathbf{A}^{0.25}\big)\nonumber\\
&=\mathbf{H}\mathbf{B}\,\text{PowerNorm}(\mathbf{A}; \beta\!+\!\Delta{\beta}),
\end{align}
\vspace{-0.3cm}

\noindent
where $0\!\leq\!1\!+\Delta{\beta}^*\!\leq\!2$,  $\Delta{\beta}^*$ is drawn  from the Normal dist. 
%
%
%
%
 $\mathbf{A}^{0.5}$ and $\mathbf{A}^{0.25}\!=\!\left(\mathbf{A}^{0.5}\right)^{0.5}\!$ are obtained by Newton-Schulz iterations. See  \nameref{sec:pnnoise} for details.

\vspace{0.1cm}
\noindent{\em Grassman feature maps } can be obtained as:

\vspace{-0.3cm}
\begin{equation}
\begin{aligned}
         \widetilde{\mathbf{H}} \!=\! \mathbf{H}\mathbf{B}\big(\mathbf{V}\text{Flat}(\mathbf{\Lambda}; \kappa)\mathbf{V}^\top\big),
    \end{aligned}
    \label{eq:grass}
\end{equation}
where $\text{Flat}(\cdot; \kappa)$ sets $\kappa\!>\!0$ leading singular values to $1+\Delta\boldsymbol{\kappa}$ (the rest is 0), and $\Delta\boldsymbol{\kappa}\!\sim\!\alpha\mathcal{N}(0,\mathbf{I})$. In experiments we use SVD and random SVD (injects itself some noise) to produce $\mathbf{V}$ and $\mathbf{\Lambda}$ from $\mathbf{A}$. See  \nameref{sec:grass} for details.

Table \ref{tab:spec_aug} shows that SFA outperforms MaxExp(F), Power Norm., Power Norm.$^*$ and Grassman. As SFA  augments parts of spectrum where signal resides (leading singular values) which is better than  augmenting non-leading singular values  (some noise may reside there) as in MaxExp(F). As Grassman  binarizes spectrum, it may reject some useful signal at the boundary between leading and non-leading singular values. Finally,  \nameref{sec:mp} model reduces the spectral gap, thus rebalances the spectrum (also non-leading part).

%


\begin{table}[t]
\centering
\begin{minipage}[c]{0.42\textwidth}
\centering
    \resizebox{1\textwidth}{!}{
    \hspace{0.05cm}
    \begin{tabular}{cccc}
    \toprule
    \textbf{Power Iteration} & \textbf{Am-Computer} & \textbf{Cora} & \textbf{CiteSeer} \\
    \midrule 
    $k=0$ &$87.25$  &$83.04$   &$71.35$\\
    \rowcolor{LightCyan}$k=1$ &$\mathbf{\textcolor{blue}{ 88.83}}$    &$\textcolor{blue}{\mathbf{85.90}}$   &$73.56$\\
    $k=2$ &$\textcolor{blue}{88.72}$  &$\textcolor{blue}{85.59}$   &$\textcolor{blue}{\mathbf{75.3}}$\\
    $k=4$ &$88.64$  &$85.29$   &${74.8}$\\
    $k=8$ &$88.27$  &$85.23$   &$\textcolor{blue}{74.9}$\\
    \bottomrule
    \end{tabular}
    }
    \vspace{-8px}
    \captionof{table}{\label{tab:EffectofK} Results on different $k$ of SFA.}
    \vspace{5px}
        \centering
    \resizebox{1\textwidth}{!}{
     \hspace{0.05cm}
    \begin{tabular}{l|lcccc}
        \toprule
      $\epsilon$   & & $10^{-4}$  &    $10^{-3}$ &    $10^{-2}$  & $10^{-1}$\\
          \midrule
     \multirow{ 1}{*}{Cora} & w/o SFA       &     80.55        &       70.67           &   62.85       &     59.23  \\ 
     \rowcolor{LightCyan} & with SFA         &       83.14        &       76.56           &     69.59        &     62.55 \\
      \midrule
      \multirow{ 1}{*}{CiteSeer} & w/o SFA         &      67.48        &      62.32           &     52.69        &    42.81 \\
     \rowcolor{LightCyan} & with SFA       &     72.12        &       70.36          &  60.40       &     46.44 \\ 
     \bottomrule
        \end{tabular}
        }
        \vspace{-8px}
        \captionof{table}{Results on noisy features.\label{tab:nosiy_feature}}
    \vspace{5px}
     \resizebox{\textwidth}{!}{
    \begin{tabular}{l|ccc}
    \toprule
    \textbf{Spectrum Aug}. & \textbf{Am-Comput.} & \textbf{Cora} & \textbf{CiteSeer} \\
    \midrule 
     \rowcolor{LightCyan}SFA (ours)  &$\mathbf{\textcolor{blue}{ 88.83}}$    &$\textcolor{blue}{\mathbf{85.90}}$   &$\textcolor{blue}{\mathbf{75.3}}$\\
    \midrule
    MaxExp(F)  &  $87.13$  & $84.19$   & $72.85$\\
    MaxExp(F) (w/o noise)        &  $86.83$  & $83.52$   & $72.35$\\
\midrule
     Power Norm.$^*$ & 86.7& 82.9      &  70.4 \\
     Power Norm. & 86.6& 82.7      &  70.2 \\
     Power Norm. (w/o noise) & 86.3 & 82.5      &  69.9 \\
    \midrule
     Grassman         &  $86.23$  & $82.57$   & $70.15$\\
    Grassman (w/o noise)                &  $85.83$  & $81.17$   & $69.85$\\
    Grassman (rand. SVD)                &  $85.33$  & $81.01$   & $69.95$\\
    \midrule
    Matrix Precond.    & $82.52$ & $78.14 $    &$67.73$     \\
    \bottomrule
    \end{tabular}
    }
    \vspace{-8px}
    \caption{Results of various spectral augmentations.$\!\!\!\!$\label{tab:spec_aug}}
    \vspace{-0.3cm}
\end{minipage}
\end{table}

\section{Conclusions}
We have shown that GCL is not restricted to only link perturbations or feature augmentation.
By introducing a simple and efficient spectral feature augmentation layer 
we achieve significant performance gains. 
Our incomplete power iteration is very fast. 
%
%
Our theoretical analysis has demonstrated that SFA rebalances the useful part of spectrum, and also augments the useful part of spectrum by implicitly injecting the noise into singular values of both data views. SFA leads to  a better alignment with a lower generalization bound. 
%
%

\newpage
\section{Acknowledgments}
We thank anonymous reviewers for their valuable comments. 
The work described here was partially supported by grants from the National Key Research and Development Program of China (No. 2018AAA0100204) and from the Research Grants Council of the Hong Kong Special Administrative Region, China (CUHK 2410021, Research Impact Fund, No. R5034-18).
\bibliography{aaai23}

\newpage
\clearpage
\appendix
\title{Spectral Feature Augmentation for Graph Contrastive Learning and Beyond (Supplementary Material)}
\author{
    Yifei Zhang$^{1}$, Hao Zhu$^{2,3}$, Zixing Song$^{1}$, Piotr Koniusz$^{3,2,*}$, Irwin King$^{1}$\\
}

\let\titlearea\relax
\let\actualheight\relax


\maketitle


\setcounter{footnote}{4}

\section{Notations}
\label{sec:not}
In this paper, a graph with node features is denoted as $\mathcal{G}=(\mathbf{X}, \mathbf{A})$, where $\mathcal{V}$ is the vertex set, $\mathcal{E}$ is the edge set, and $\mathbf{X} \in \mathbb{R}^{n \times d_{x}}$ is the feature matrix (\ie, the $i$-th row of $\mathbf{X}$ is the feature vector $\mathbf{x}_i$ of node $v_{i}$) and $\mathbf{A} \in\{0,1\}^{n \times n}$ denotes the adjacency matrix of $G$, \ie, the $(i, j)$-th entry in $\mathbf{A}$ is 1 if there is an edge between $v_{i}$ and $v_{j}$.
The degree of node $v_{i}$, denoted as $d_{i}$, is the number of edges incident with $v_{i}$.
The degree matrix $\mathbf{D}$ is a diagonal matrix and its $i$-th diagonal entry is $d_{i}$. 
For a $d$-dimensional vector $\mathbf{x} \in \mathbb{R}^{d}, \|\mathbf{x}\|_{2}$ is the Euclidean norm of $\mathbf{x}$.
We use ${x}_{i}$ to denote the $i$ th entry of $\mathbf{x}$, and $\operatorname{diag}(\mathbf{x}) \in \mathbb{R}^{d \times d}$ is a diagonal matrix such that the $i$-th diagonal entry is $x_{i}$. 
We use $\mathbf{a}_{i}$ denote the row vector of $\mathbf{A}$ and ${a}_{ij}$ for the $(i, j)$-th entry of $\mathbf{A}$. 
The trace of a square matrix $\mathbf{A}$ is denoted by $\operatorname{Tr}(\mathbf{A})$, which is the sum along the diagonal of $\mathbf{A}$. 
%
%
We use $\| \mathbf{A} \|_2$ to denote the spectral norm of $\mathbf{A}$, which is its largest singular value $\sigma_{\max}$. 
We use $\|\mathbf{A}\|_F$ for the Frobenius Norm, which is $\|\mathbf{A}\|_{\mathrm{F}}=\sqrt{\sum_{i,j}\left|a_{i j}\right|^{2}}=\sqrt{\operatorname{
Tr}\left(\mathbf{A^\top A}\right)}$.

\section{Bounded Noise Matrix.}
For completeness, we  analyze the relation between the orthonormal bases $\  {\mathbf{V}}$ and $\mathbf{V}$ of   $\widetilde{\mathbf{H}}$ and $\mathbf{H}$, the augmentation error bounded by $\eta$, and the distribution of the spectrum. Let $\mathbf{E} = |\widetilde{\mathbf{H}} - \mathbf{H}|$ be  the absolute error matrix where $\mathbf{E}_{ij} = |\widetilde{\mathbf{H}}_{ij} - \mathbf{H}_{ij}| \leq \eta$. Let $\{\widetilde{\sigma}_i\}$ and $\{\sigma_i\}$ be sets of singular values of $\widetilde{\mathbf{H}}$ and $\mathbf{H}$. We derive the following bound.
\begin{proposition}
\label{prop:ebound}
If 
$\|\mathbf{V}\!-\!\widetilde{\mathbf{V}}\|_2\!\leq\!\epsilon$,  each element in $\mathbf{E}_{ij}$ (the absolute error) is bounded by 
$\eta = 2 \varepsilon\Delta\tilde{\sigma}_{12} /(n\pi+2\varepsilon)$, where $n$ is the number of nodes and $\Delta\tilde{\sigma}_{12}=\widetilde{\sigma}_1-\widetilde{\sigma}_2$ is the singular value gap  of $\widetilde{\mathbf{H}}$.
\end{proposition}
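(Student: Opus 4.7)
The plan is to combine three ingredients: (i) a Wedin-style $\sin\Theta$ bound relating the canonical angle between the leading singular vectors of $\mathbf{H}$ and $\widetilde{\mathbf{H}}$ to $\|\mathbf{E}\|_2$ and the perturbed gap $\Delta\tilde\sigma_{12}$; (ii) the elementary Jordan inequality $\sin x \ge 2x/\pi$ on $[0,\pi/2]$; and (iii) the observation, built into Alg.~\ref{algo:1}, that $\mathbf{E}=\mathbf{H}-\widetilde{\mathbf{H}}=\mathbf{H}\mathbf{r}^{(k)}\mathbf{r}^{(k)\top}/\|\mathbf{r}^{(k)}\|_2^{2}$ is rank-one, so its spectral and Frobenius norms coincide and the single entrywise bound $\eta=\max_{ij}|E_{ij}|$ can be converted into a spectral-norm estimate with a factor linear in $n$ (the number of nodes, \ie{} the number of rows of $\mathbf{H}$).

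First I would convert the hypothesis $\|\mathbf{V}-\widetilde{\mathbf{V}}\|_2\le\varepsilon$ into a bound on the canonical angle $\theta$ between corresponding right singular directions. Because $\mathbf{V}$ and $\widetilde{\mathbf{V}}$ are orthogonal, a direct computation gives $\|\mathbf{V}-\widetilde{\mathbf{V}}\|_2=2\sin(\theta/2)$, hence $\sin(\theta/2)\le\varepsilon/2$. Jordan's inequality then upgrades this to $\theta\le \pi\varepsilon/2$, and monotonicity of $\sin$ gives $\sin\theta\le \pi\varepsilon/2$. Next, Wedin's theorem applied to the leading singular pair, written on the perturbed spectrum, reads $\sin\theta\cdot(\Delta\tilde\sigma_{12}-\|\mathbf{E}\|_2)\le\|\mathbf{E}\|_2$. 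Combining this with the rank-one bound $\|\mathbf{E}\|_2\le n\eta$ (which is where the factor $n$ enters, via the rank-one structure of $\mathbf{E}=\mathbf{H}\mathbf{r}^{(k)}\mathbf{r}^{(k)\top}/\|\mathbf{r}^{(k)}\|_2^{2}$ together with $|E_{ij}|\le\eta$) yields the scalar inequality $n\eta \ge (\pi\varepsilon/2)\,(\Delta\tilde\sigma_{12}-n\eta)$. Rearranging gives $\eta(n\pi+2\varepsilon)\le 2\varepsilon\,\Delta\tilde\sigma_{12}$, which is precisely the claimed bound, so $|E_{ij}|\le\eta=2\varepsilon\Delta\tilde\sigma_{12}/(n\pi+2\varepsilon)$.

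\paragraph{Main obstacle.} The hardest part is aligning the direction of the inequalities: Wedin naturally upper-bounds an angle in terms of a norm ratio, whereas the conclusion requires an upper bound on an entrywise magnitude. Exploiting the rank-one nature of $\mathbf{E}$ (so that there is only one nontrivial singular value to track, and thus $\|\mathbf{E}\|_F=\|\mathbf{E}\|_2$) is what permits the inversion. A second delicate point is the self-reference introduced by the perturbed gap $\Delta\tilde\sigma_{12}-\|\mathbf{E}\|_2$: its algebraic resolution is precisely what turns what would otherwise be a clean denominator $n\pi$ into the additive expression $n\pi+2\varepsilon$, and some care with the Jordan constants $2$ and $\pi$ is needed so that the final constants match the stated bound exactly.
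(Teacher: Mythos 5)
Your high-level plan is aligned with the paper: the paper's proof also goes through a Wedin/Davis--Kahan type subspace perturbation bound (quoted with the $\pi/2$ constant already baked in) together with the crude estimate $\|\mathbf{E}\|_2\le\|\mathbf{E}\|_F\le n\eta$. The paper does not introduce the canonical angle $\theta$ or Jordan's inequality explicitly, and it does not invoke the rank-one structure of $\mathbf{E}$; it simply writes the perturbation bound directly as $\|\mathbf{V}-\widetilde{\mathbf{V}}\|_2\le \pi\|\mathbf{E}\|_2/\big(2(\Delta\tilde\sigma_{12}-\|\mathbf{E}\|_2)\big)$ and then solves for $\|\mathbf{E}\|_2$. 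So the ingredients are essentially the same, and your attempt to make the hidden angle/Jordan constants explicit is a reasonable reconstruction of where the factor $\pi/2$ comes from.

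However, your argument has a concrete gap at the step where you combine the two angle bounds. Wedin's theorem gives an \emph{upper} bound $\sin\theta\le\|\mathbf{E}\|_2/(\Delta\tilde\sigma_{12}-\|\mathbf{E}\|_2)$, which is equivalent to $\|\mathbf{E}\|_2\ge\sin\theta\,(\Delta\tilde\sigma_{12}-\|\mathbf{E}\|_2)$. To pass from this to $n\eta\ge(\pi\varepsilon/2)(\Delta\tilde\sigma_{12}-n\eta)$ you would need a \emph{lower} bound $\sin\theta\ge\pi\varepsilon/2$. But the hypothesis $\|\mathbf{V}-\widetilde{\mathbf{V}}\|_2\le\varepsilon$ together with Jordan gives $\sin\theta\le\pi\varepsilon/2$ — an upper bound in the same direction as Wedin's, so the two cannot be chained to produce a constraint on $\|\mathbf{E}\|_2$. (The paper side-steps this by, in effect, treating the Wedin bound as if it were an equality $\|\mathbf{V}-\widetilde{\mathbf{V}}\|_2=\pi\|\mathbf{E}\|_2/(2(\Delta\tilde\sigma_{12}-\|\mathbf{E}\|_2))$ and then imposing that this quantity is $\le\varepsilon$; this is itself loose, but it does keep the inequality direction consistent.) In addition, the final algebraic manipulation you state is incorrect: from $n\eta\ge(\pi\varepsilon/2)(\Delta\tilde\sigma_{12}-n\eta)$ one obtains $n\eta(2+\pi\varepsilon)\ge\pi\varepsilon\,\Delta\tilde\sigma_{12}$, i.e.\ a \emph{lower} bound $\eta\ge\pi\varepsilon\,\Delta\tilde\sigma_{12}/\big(n(2+\pi\varepsilon)\big)$, not the stated upper bound $\eta(n\pi+2\varepsilon)\le2\varepsilon\,\Delta\tilde\sigma_{12}$; neither the sign nor the denominator structure matches. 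The correct direction of the argument is the contrapositive one, matching the paper's proof: assume $|E_{ij}|\le\eta$, so $\|\mathbf{E}\|_2\le n\eta$, feed this into the perturbation bound $\|\mathbf{V}-\widetilde{\mathbf{V}}\|_2\le\pi n\eta/\big(2(\Delta\tilde\sigma_{12}-n\eta)\big)$, require the right-hand side to be $\le\varepsilon$, and solve to obtain $\eta\le 2\varepsilon\,\Delta\tilde\sigma_{12}/\big(n(\pi+2\varepsilon)\big)$ — which, incidentally, is what the paper's proof actually derives; the proposition statement's denominator $n\pi+2\varepsilon$ appears to be a typo for $n(\pi+2\varepsilon)$.
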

\vspace{-1cm}
\begin{proof}
See \nameref{proof:ebound} (suppl. material).
\end{proof}


\definecolor{beaublue}{rgb}{0.88,1,1}
\definecolor{blackish}{rgb}{0.2, 0.2, 0.2}
\begin{tcolorbox}[width=1.0\linewidth, colframe=blackish, colback=beaublue, boxsep=0mm, arc=2mm, left=2mm, right=2mm, top=2mm, bottom=2mm]
 Prop. \ref{prop:ebound} shows that the absolute difference between the augmented and  original feature matrices is bounded (variance in Prop. \ref{prop:PowerIterVar} does not guarantee that). Moreover, as Prop. \ref{prop:PowerIter1} shows the flattening of spectrum indeed takes place, this means that the spectral gaps $\Delta\tilde{\sigma}^\alpha_{12}$ and $\Delta\tilde{\sigma}^\beta_{12}$ of both augmented feature matrices   $\widetilde{\mathbf{H}}^\alpha$ and $\widetilde{\mathbf{H}}^\beta$ from both views are bounded and limited compared to the spectral gaps of original matrices ${\mathbf{H}}^\alpha$ and ${\mathbf{H}}^\beta$. The profound consequence of this observation is that $\widetilde{\mathbf{V}}^\alpha$ and $\widetilde{\mathbf{V}}^\beta$ are then closer to original ${\mathbf{V}}^\alpha$ and ${\mathbf{V}}^\beta$, and so the angle alignment between of two views enjoys an improved quality.  
%
\end{tcolorbox}

\section{Proofs of Propositions}

Below we present proofs for several propositions that were not included in the main draft. 

\setcounter{subsection}{1}
\subsection{Proof of Proposition \ref{prop:FeatureAug}}

\begin{proof}
Let $\mathbf{H} = \mathbf{U} \boldsymbol{\Sigma} \mathbf{V}^{\top}$ be the SVD of an input feature map, where $\mathbf{U}$ and $\mathbf{V}$ are the right and left orthonormal matrices that contain the singular vectors. As $\mathbf{r}^{(k)} = (\mathbf{H}^\top\mathbf{H})^k\mathbf{r}^{(0)}$, we have: 
\begin{equation}
    \mathbf{r}^{(k)} = (\mathbf{H}^\top\mathbf{H})^{k}\mathbf{r}^{(0)} = (\mathbf{V}\boldsymbol{\Sigma}^{2k} \mathbf{V}^{\top})\mathbf{r}^{(0)}.
    \label{equ:Iter}
\end{equation}
Plugging Equation~(\ref{equ:Iter}) into Equation~(\ref{equ:FeatureAug}), we obtain:
\begin{equation}
\begin{aligned}
        \!\!\!\!\frac{\mathbf{H} \mathbf{r}^{(k)} \mathbf{r}^{(k)\top}}{\|\mathbf{r}^{(k)}\|_2^2} &\!=\! \frac{\mathbf{U}\Sigma\mathbf{V}^\top (\mathbf{V}\boldsymbol{\Sigma}^{2k} \mathbf{V}^{\top})\mathbf{r}^{(0)} \mathbf{r}^{(0)\top}(\mathbf{V}\boldsymbol{\Sigma}^{2k} \mathbf{V}^{\top})}{\mathbf{r}^{0}(\mathbf{V}\boldsymbol{\Sigma}^{2k} \mathbf{V}^{\top})(\mathbf{V}\boldsymbol{\Sigma}^{2k} \mathbf{V}^{\top})\mathbf{r}^{0}}\\
        &\!=\! \frac{\mathbf{U}\boldsymbol{\Sigma}^{2k+1} \mathbf{V}^{\top}\mathbf{r}^{(0)} \mathbf{r}^{(0)\top}\mathbf{V}\boldsymbol{\Sigma}^{2k} \mathbf{V}^{\top}}{\mathbf{r}^{(0)\top}\mathbf{V} \Sigma^{4k} \mathbf{V}^{\top}\mathbf{r}^{(0)}}.\!\!
        \label{eq:com}
\end{aligned}
\end{equation}
Let $\mathbf{y} = \mathbf{V}^{\top}\mathbf{r}^{(0)}$ and $k\geq1$ be the number of iterations, then we simplify Eq.~\eqref{eq:com} as:
\begin{align}
        &\frac{\mathbf{H} \mathbf{r}^{(k)} \mathbf{r}^{(k)\top}}{\|\mathbf{r}^{(k)}\|_2^2} \!=\!\mathbf{U}\mathbf{Q}(\mathbf{y},k)\mathbf{V}^\top\\
        \text{ where }\;& \mathbf{Q}(\mathbf{y},k)\!=\!\frac{\boldsymbol{\Sigma}^{2k+1}\mathbf{y}\mathbf{y}^\top\boldsymbol{\Sigma}^{2k}}{\mathbf{y}^\top\boldsymbol{\Sigma}^{4k}\mathbf{y}}.
        \label{eq:sim}
\end{align}
As $\mathbf{r}^{(0)} \sim \mathcal{N}(0,\mathbf{I})$ and $\mathbf{V}$ is a unitary matrix, we have $\mathbf{y}=[y_1,\cdots,y_{d_h}]\sim\mathcal{N}(0, \mathbf{I})$. The expectation of $\mathbf{Q}$ 
is a diagonal matrix, \ie, we have $\mathbb{E}_{\mathbf{y}\sim\mathcal{N}(0, \mathbf{I})}(\mathbf{Q}(\mathbf{y},k)) = \text{diag}\big(q_1(k), q_2(k), \cdots, q(k)_{d_h}\big)$ with:

\vspace{-0.3cm}
\begin{equation}
\begin{aligned}
\label{eq:balanceq}
    q_{i}(k)=\sigma_{i} \lambda_{i}(k)  &\text{ where }  \lambda_{i}(k)=\mathbb{E}_{\mathbf{y}\sim\mathcal{N}(0;\mathbf{I})}\big(\lambda'_i(\mathbf{y},k)\big), \; \\
    &{\text{ and }} \lambda'_i(\mathbf{y},k)=
    \text{\fontsize{8}{8}\selectfont$\frac{\left(y_{i} \sigma_{i}^{2k}\right)^{2}}{ \sum_{l=1}^{d_h}\left(y_{l} \sigma_{l}^{2k}\right)^{2}}$}.
    \end{aligned}
\end{equation}
For brevity, let us drop parameter $k$ where possible. We need to show that $1 \geq \lambda_i \geq \lambda_j\geq 0$ when $\sigma_i \geq \sigma_j$. Obviously, $1 \geq\lambda_i\geq0$, thus we need to show that $\lambda_i-\lambda_j\geq 0$ (note\footnote{Each expectation in Equation \eqref{eq:balance} runs over ${\mathbf{y}\sim\mathcal{N}(0;\mathbf{I})}$, that is, $\mathbb{E}_{\mathbf{y}\sim\mathcal{N}(0;\mathbf{I})}(\cdot)$.}):
\begin{equation}
\label{eq:balance}
\begin{aligned}
          \lambda_i - \lambda_j =& \mathbb{E}\left(\frac{\left(y_{i} \sigma_{i}^{2k}\right)^{2}}{ \sum_{l=1}^{d_h}\left(y_{l} \sigma_{l}^{2k}\right)^{2}}\right)-\mathbb{E}
          \Bigg(\frac{\big(y_{j} \sigma_{j}^{2k}\big)^{2}}{ \sum_{l=1}^{d_h}\left(y_{l}\sigma_{l}^{2k}\right)^{2}}\Bigg)\\
          &\geq \sigma_{j}^{4k} \mathbb{E}\left(\frac{y^{2}_i - y^{2}_j}{ \sum_{l=1}^{d_h}\left(y_{l} \sigma_{l}^{2k}\right)^{2}}\right) = 0.
\end{aligned}
\end{equation}

As $$\mathbb{E}_{\mathbf{y}\sim\mathcal{N}(0;\mathbf{I})}\!\left(\frac{y^{2}_i - y^{2}_j}{ \sum_{l=1}^{d_h}\left(y_{l} \sigma_{l}^{2k}\right)^{2}}\right)=$$ 
 $$\mathbb{E}_{\mathbf{y}\sim\mathcal{N}(0;\mathbf{I})}\!\left(\frac{y^{2}_i}{ \sum_{l=1}^{d_h}\left(y_{l} \sigma_{l}^{2k}\right)^{2}}\right)\!-\!
  \mathbb{E}_{\mathbf{y}\sim\mathcal{N}(0;\mathbf{I})}\!\left(\frac{y^{2}_j}{ \sum_{l=1}^{d_h}\left(y_{l} \sigma_{l}^{2k}\right)^{2}}\right)
\!=\!0$$ 

\noindent
due to
  $y_i$, $y_j$ being i.i.d. random variables sampled from the normal distribution,  inequality \eqref{eq:balance} holds. 
\end{proof}

\subsection{Proof of Proposition \ref{prop:PowerIter1}}
\label{exp_val}
\begin{proof}[Proof]
We split the proof into four steps as follows.

\vspace{0.1cm}\noindent
\textbf{Step 1.} 
Let $\beta_i=(\sigma_i^{2k})^2$.  Define random variable $x_i(\mathbf{y})=\frac{\beta_i y_{i}^2}{\beta_i y_{i}^2 +\sum_{l\neq i}\beta_l y^2_{l} }=\frac{y_{i}^2}{y_{i}^2 +\sum_{l\neq i}\frac{\beta_l}{\beta_i}y^2_{l} }=\frac{u}{u+v_i}$ where ${\mathbf{y}\sim\mathcal{N}(0;\mathbf{I})}$, $u=y_i^2$ and $v_i=\sum_{l\neq i}\frac{\beta_l}{\beta_i}y^2_{l}$. Notice that transformation $y_i^2$ of the random variable ${{y_i}\sim\mathcal{N}(0;1)}$ in fact $\chi^2(1)$ distributed, \ie, $y_i^2\sim\chi^2(1)=\mathcal{G}(\frac{1}{2},2)$ because $\chi^2(t)\equiv\mathcal{G}(\frac{t}{2},2)$ (it is a well-known fact that the $\chi^2(t)$ is a special case of the Gamma distribution defined with the scale parameter).

\vspace{0.1cm}\noindent
\textbf{Step 2.} 
Next, finding the exact distribution of $v_i$ (a weighted sum of chi squares) is a topic of ongoing studies but highly accurate surrogates exist, \ie, authors of \citelatex{weighted_chi_squares_sup} show that   $\sum_l\zeta_l y_l^2\sim\mathcal{G}\Big(\frac{1}{2}\frac{(\sum_l\zeta_l)^2}{\sum_l\zeta_l^2}, 2\frac{\sum_l\zeta_l^2}{\sum_l \zeta_l} \Big)$ for weights $\zeta_l\geq 0$ and $\sum_l \zeta_l>0$, and so we have 
$v_i\sim\mathcal{G}\Big(\frac{1}{2}\frac{(\sum_{l\neq i}\beta_l)^2}{\sum_{l\neq i}\beta_l^2}, \frac{2}{\beta_i}\frac{\sum_{l\neq i}\beta_l^2}{\sum_{l\neq i} \zeta_l} \Big)$.

\vspace{0.1cm}\noindent
\textbf{Step 3.} 
Now, our proof requires determining the distribution of $x_i=\frac{u}{u+v_i}$ where $u\sim\mathcal{G}(\alpha_0,\theta_0)=\mathcal{G}(\frac{1}{2},2)$ and $v_i\sim\mathcal{G}(\alpha_i,\theta_i)=\mathcal{G}\Big(\frac{1}{2}\frac{(\sum_{l\neq i}\beta_l)^2}{\sum_{l\neq i}\beta_l^2}, \frac{2}{\beta_i}\frac{\sum_{l\neq i}\beta_l^2}{\sum_{l\neq i} \zeta_l} \Big)$. Let $x=\frac{u}{u+v}=\frac{1}{1+b}$ for $u\sim\mathcal{G}(\alpha_0,\theta)$ and $v\sim\mathcal{G}(\alpha_i,\theta)$, then  the following are well-known results that  $x\sim\mathcal{B}(\alpha_0,\alpha_i)$  and $b=u/v\sim\mathcal{B}'(\alpha_i,\alpha_0)$ where $\mathcal{B}$ and $\mathcal{B}'$ stand for the Beta distribution and the Standard Beta Prime distribution, respectively. Let $x_i=\frac{u}{u+v_i}=\frac{1}{1+\gamma_i b_i},\;0\leq\gamma_i\leq\infty$ where $\gamma_i=\frac{\theta_i}{\theta_0}$ (ratio of scalars in Gamma distributions, \ie, $v_i\sim\mathcal{G}(\alpha_i,\theta_i)$)  and $b_i=\frac{1-x_i}{x_i}$ (solving $x_i=\frac{1}{1+b_i}$ for $b_i$). Substituting $b_i=\frac{1-x_i}{x_i}$ into $\frac{1}{1+\gamma_i b_i}$ yields $z=\frac{x_i}{\gamma_i+(1-\gamma_i)x_i}$ and since we have to perform the change of variable, we compute (i) $x_i=\frac{\gamma_i z}{1-(1-\gamma_i)z}$ and (ii) the Jacobian which takes a simple form $\frac{\partial x_i}{\partial z}=\frac{\gamma_i}{(\gamma_i+(1-\gamma_i)z)^2}$. Finally, we have $x^\bullet_i(z)= \mathcal{B}(x_i; \alpha_0,\alpha_i)\cdot\lvert\frac{\partial x_i}{\partial z}\rvert=\frac{\gamma_i}{(\gamma_i+(1-\gamma_i)z)^2}\cdot\mathcal{B}\left(\frac{\gamma_i z}{1-(1-\gamma_i)z}; \alpha_0,\alpha_i\right)$. As $\alpha_0=\frac{1}{2}$ and $\theta_0=2$, we have $\gamma_i=\frac{1}{\beta_i}\frac{\sum_{l\neq i}\beta_l^2}{\sum_{l\neq i} \zeta_l} $, $\theta_i=2\gamma_i$ (following our earlier assumptions on modeling $v_i$), and the PDF underlying our spectrum-balancing algorithm is $x^\bullet_i(z)=\frac{\gamma_i}{(\gamma_i+(1-\gamma_i)z)^2}\cdot\mathcal{B}\Big(\frac{\gamma_i z}{1-(1-\gamma_i)z}; \frac{1}{2},\alpha_i\Big)$. Moreover, the above PDF enjoys the support $z\in[0 ;1]$ because $\mathcal{B}(x_i)$ enjoys the support $x_i\in[0 ;1]$ and function $x_i: [0 ;1]\longmapsto[0 ;1]$.

\vspace{0.1cm}\noindent
\textbf{Step 4.} 
Finally, $\lambda_i=\mathbb{E}(z)=\int_0^1 z\cdot x^\bullet_i(z) \, \mathrm{d}z$ is simply obtained by the integration using the Mathematica software followed by a few of algebraic simplifications regarding switching the so-called regularized Hypergeometric function with the so-called Hypergeometric function.
\end{proof}

\subsection{Proof of Proposition \ref{prop:PowerIterVar}}
\label{exp_var}
\begin{proof}[Proof]
Variance $\omega^2_i=\mathbb{E}(z^2)-(\mathbb{E}(z))^2=\int_0^1 z^2\cdot x^\bullet_i(z) \, \mathrm{d}z - \lambda^2_i$ relies on the PDF expression $x^\bullet_i(z)$ derived in the Proof \ref{exp_val}. Expression $\int_0^1 z^2\cdot x^\bullet_i(z) \, \mathrm{d}z$ is simply obtained by the integration using the Mathematica software  followed by a few of algebraic simplifications regarding switching the so-called regularized Hypergeometric function with the so-called Hypergeometric function.
\end{proof}

\subsection{Proof of Proposition~\ref{prop:ebound}}
\label{proof:ebound}
\begin{proof}[Proof]
Since $\|\mathbf{E}\|_2^2 \leq \|\mathbf{E}\|^2_F \leq n^2 \tau^2$, we have:
\begin{equation}
\label{equ:nbequ0}
    \|\mathbf{E}\|_2 \leq n\tau.
\end{equation}
From works~\citelatex{kostrykin2003subspace_sup,wang2008manifold_sup}, we know that if $\widetilde{\mathbf{H}}$ and $\mathbf{E}$ are self-adjoint operators on a separable Hilbert space, then the spectrum of $\mathbf{H}$ is in the closed $\|\mathbf{E}\|-$neighborhood of the spectrum of $\widetilde{\mathbf{H}}$. Thus, we have the following inequality:
\begin{equation}
\label{equ:nbequ}
    \|\mathbf{V}^{\perp} \mathbf{V}'\|_2 \leq \pi\|\mathbf{E}\|_2/2\Delta\tilde{\sigma},
\end{equation}
As we know $\widetilde{\mathbf{H}}$ has the singular value gap $\Delta\tilde{\sigma}_{12}$
We need to assume $\|\mathbf{E}\|_2<\Delta\tilde{\sigma} / 2$ to guarantee $\mathbf{H}$ also has the singular value gap. Thus, Equation~(\ref{equ:nbequ}) becomes:
\begin{equation}
\label{equ:nbequ2}
    \|\mathbf{V}^{\perp} \widetilde{\mathbf{V}}\|_2 \leq \pi\|\mathbf{E}\|_2/2(\Delta\tilde{\sigma}_{12}-\|\mathbf{E}\|_2).
\end{equation}
Similarly, we also have:
\begin{equation}
\label{equ:nbequ3}
    \|\mathbf{V}\widetilde{\mathbf{V}}^{\perp}\|_2 \leq \pi\|\mathbf{E}\|_2/2(\Delta\tilde{\sigma}_{12}-\|\mathbf{E}\|_2).
\end{equation}
Since $\|\mathbf{V}-\widetilde{\mathbf{V}}\|_2 = \max(\|\mathbf{V}\widetilde{\mathbf{V}}^{\perp}\|_2,\|\mathbf{V}^{\perp}\widetilde{\mathbf{V}}\|_2)$, we have the following bound:
\begin{equation}
\label{equ:bound}
    \|\mathbf{V}-\widetilde{\mathbf{V}}\|_2  \leq \pi\|\mathbf{E}\|_2/2(\Delta\tilde{\sigma}_{12}-\|\mathbf{E}\|_2).
\end{equation}
Equation~(\ref{equ:bound}) implies if $\|\widetilde{\mathbf{V}} - \mathbf{V}\|_2<\epsilon$, we get $\|\mathbf{E}\|_2 \leq 2 \Delta\tilde{\sigma}_{12} \varepsilon/(2 \varepsilon+\pi)$. Combined with Equation~(\ref{equ:nbequ0}), we arrive at the conclusion that if the difference of $\|\mathbf{V}-\widetilde{\mathbf{V}}\|_2$ is at most $\epsilon$, the absolute error of each element in $\mathbf{E}$ is bounded by $\tau$ and $\tau \leq 2 \varepsilon \Delta\tilde{\sigma}_{12} /(n(\pi+2 \varepsilon))$.
\end{proof}

\subsection{Proof of Theorem~\ref{theorem:gb}}
\label{gen_bound_proof}
\begin{proof}[Proof]
\begin{definition}~\citelatex{DBLP:journals/corr/abs-2111-00743_sup}
 $((\sigma, \delta)$-Augmentation). The data augmentation set $\mathcal{A}$ is called a $(\sigma, \delta)$-augmentation, if for each class $C_{k}$, there exists a subset $C_{k}^{0} \subseteq C_{k}$ (called the main part of $C_{k}$ ) such that the following two conditions hold: (i) $\mathbb{P}\left[\mathbf{x} \in C_{k}^{0}\right] \geq \sigma \mathbb{P}\left[\mathbf{x} \in C_{k}\right]$ where $\sigma \in(0,1]$, (ii) $\sup _{\mathbf{x}_{1}, \mathbf{x}_{2} \in C_{k}^{0}} d_\mathcal{A}\left(\mathbf{x}_{1}, \mathbf{x}_{2}\right) \leq \delta$.
\end{definition}

\begin{lemma}~\citelatex{DBLP:journals/corr/abs-2111-00743_sup}
For a $(\sigma, \delta)$-augmentation with main part $C_{k}^{0}$ of each class $C_{k}$, if all samples belonging to $\left(C_{1}^{0} \cup \cdots \cup C_{K}^{0}\right) \cap S_{\varepsilon}$ can be correctly classified by a classifier $G$, then its downstream error rate $\operatorname{Err}(G) \leq$ $(1-\sigma)+R_{\varepsilon}$. $R_{\varepsilon}:=\mathbb{P}\left[{S_{\varepsilon}}\right]$ and $S_{\varepsilon}:=\{\mathbf{x} \in \cup_{k=1}^{K} C_{k}: \forall \mathbf{x}_{1}, \mathbf{x}_{2} \in \mathcal{A}(\mathbf{x}),\left\|f\left(\mathbf{x}_{1}\right)-f\left(\mathbf{x}_{2}\right)\right\| \geq \varepsilon\}$.
\end{lemma}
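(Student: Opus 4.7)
My plan is to decompose the claim into two independent pieces and prove them in sequence, then combine. The first piece is the error bound $\operatorname{Err}(G_f)\le(1-\sigma)+R_\varepsilon$, which I would derive directly from the $(\sigma,\delta)$-augmentation framework shown in the supplementary lemma. The second piece is the upper bound $R_\varepsilon\le\sqrt{2-2\mathcal{L}_a}/\varepsilon$, which I would obtain via a Markov-type argument combined with the identity $\|\mathbf{u}-\mathbf{v}\|^2=2-2\mathbf{u}^\top\mathbf{v}$ for unit-norm representations.

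For the first piece, I would proceed as follows. Fix a class $C_k$ with main part $C_k^0$. By condition (i) of the $(\sigma,\delta)$-augmentation, the mass of $\bigcup_k(C_k\setminus C_k^0)$ is at most $1-\sigma$, so any classifier commits at most $(1-\sigma)$ error on those samples trivially. It thus remains to bound the error on $\bigcup_k C_k^0$. Here I would split $C_k^0$ into $C_k^0\cap S_\varepsilon$ and $C_k^0\setminus S_\varepsilon$, where $S_\varepsilon$ collects samples whose augmentations have $\varepsilon$-separated embeddings. On the complement of $S_\varepsilon$, any two augmentations of $\mathbf{x}$ and of $\mathbf{x}'\in C_k^0$ lie within $\varepsilon$ of each other in feature space, so they share nearest neighbours within the class, hence $G_f$ classifies them correctly; the error here therefore contributes only through $\mathbb{P}[S_\varepsilon]=R_\varepsilon$. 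Summing the two contributions yields the stated bound. This is exactly the reasoning behind the cited lemma, which I would apply verbatim after verifying its hypotheses are met by the graph encoder $f$.

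For the second piece, I would start from the definition $R_\varepsilon=\mathbb{P}_{\mathbf{x}_1,\mathbf{x}_2\in\mathcal{A}(\mathbf{x})}\{\|f(\mathbf{x}_1)-f(\mathbf{x}_2)\|\ge\varepsilon\}$ and apply Markov's inequality to the non-negative random variable $\|f(\mathbf{x}_1)-f(\mathbf{x}_2)\|$, giving
\begin{equation}
R_\varepsilon\le\frac{\mathbb{E}\,\|f(\mathbf{x}_1)-f(\mathbf{x}_2)\|}{\varepsilon}.
\end{equation}
Then, using Jensen's inequality for the concave square-root function, I bound the numerator by $\sqrt{\mathbb{E}\,\|f(\mathbf{x}_1)-f(\mathbf{x}_2)\|^2}$. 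Assuming the encoder outputs are $\ell_2$-normalized on the unit sphere, as is standard for InfoNCE-style contrastive objectives, the polarization identity gives $\|f(\mathbf{x}_1)-f(\mathbf{x}_2)\|^2=2-2\,f(\mathbf{x}_1)^\top f(\mathbf{x}_2)$. Taking expectations and recognizing that $\mathbf{x}_1,\mathbf{x}_2\in\mathcal{A}(\mathbf{x})$ form a positive pair, I match $\mathbb{E}[f(\mathbf{x}_1)^\top f(\mathbf{x}_2)]=\mathcal{L}_a$ (absorbing the temperature $\tau$), so the numerator is bounded by $\sqrt{2-2\mathcal{L}_a}$, delivering the second inequality.

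The main obstacle in this plan is not technical but conceptual: ensuring that the implicit normalization and temperature conventions match between the InfoNCE alignment term defined in Eq.~\eqref{equ:InfoNCE} and the unit-sphere assumption needed to invoke $\|\mathbf{u}-\mathbf{v}\|^2=2-2\mathbf{u}^\top\mathbf{v}$. I would resolve this by explicitly stating that we interpret $\mathcal{L}_a$ as an expected cosine similarity (consistent with the common $\ell_2$-normalized projection heads in contrastive learning) and noting that the temperature cancels in the final Markov step. Once this alignment of notation is fixed, the two pieces assemble cleanly into the theorem statement.
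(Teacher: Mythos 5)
The statement you were asked to prove is weaker than what you set out to prove, and the extra part is where your argument breaks. The lemma's hypothesis already \emph{grants} that all samples in $(C_1^0\cup\cdots\cup C_K^0)\setminus S_\varepsilon$ are correctly classified (the ``$\cap$'' in the paper's transcription is a slip for ``$\setminus$''; your own reading, via ``the complement of $S_\varepsilon$'', is the intended one); nothing about nearest neighbours needs to be, or is, argued at this point. Given that hypothesis, the proof is a two-line union bound, which is how the cited source handles it (the present paper does not reprove the lemma, it quotes it): a misclassified $\mathbf{x}$ must lie outside $(\cup_k C_k^0)\setminus S_\varepsilon$, hence either in $\cup_k (C_k\setminus C_k^0)$, whose probability is at most $1-\sigma$ by condition (i) of the $(\sigma,\delta)$-augmentation, or in $S_\varepsilon$, whose probability is $R_\varepsilon$; summing gives $\operatorname{Err}(G)\le(1-\sigma)+R_\varepsilon$. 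Your first piece instead tries to \emph{prove} the hypothesis for the specific nearest-neighbour classifier $G_f$, and that step is unsound: $\mathbf{x}\notin S_\varepsilon$ only controls $\|f(\mathbf{x}_1)-f(\mathbf{x}_2)\|$ for augmentations $\mathbf{x}_1,\mathbf{x}_2$ of the \emph{same} sample $\mathbf{x}$; it says nothing about the feature distance between augmentations of two different samples $\mathbf{x},\mathbf{x}'\in C_k^0$. Cross-sample closeness is exactly what condition (ii) (the $\delta$ bound on $d_{\mathcal{A}}$) together with further properties of $f$ is needed for in the cited work, and your argument never uses $\delta$ at all --- a sign the hypothesis is not being exploited. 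Moreover, even granting intra-class closeness, the step ``they share nearest neighbours within the class, hence $G_f$ classifies them correctly'' additionally requires inter-class separation of the embedded classes, which follows from nothing you assumed; establishing when the NN classifier is correct on the good set is the content of the cited paper's theorem, not of this lemma.

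Your second piece (Markov's inequality on $\|f(\mathbf{x}_1)-f(\mathbf{x}_2)\|$, Jensen, and the unit-norm identity $\|\mathbf{u}-\mathbf{v}\|^2=2-2\,\mathbf{u}^\top\mathbf{v}$, giving $R_\varepsilon\le\sqrt{2-2\mathcal{L}_a}/\varepsilon$) is correct and in fact matches --- and slightly tidies --- what the paper does when proving Theorem~\ref{theorem:gb} (the paper applies Markov to the squared norm, which literally yields $(2-2\mathcal{L}_a)/\varepsilon^2$, and then states the square-root form; your norm-plus-Jensen route derives the stated bound cleanly). But that inequality belongs to the theorem, not to the lemma in question, so it does not repair the gap above.
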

Because $\left\|f\left(\mathbf{x}_{1}\right)-f\left(\mathbf{x}_{2}\right)\right\|$ is a non-negative random variable. Then, for any $\epsilon>0$, according to markov inequality we have:
\begin{equation}
\begin{aligned}
          &\mathbb{P}[\left\|f\left(\mathbf{x}_{1}\right)-f\left(\mathbf{x}_{2}\right)\right\|_2^2 \geq \epsilon^2 ]\\
          &\leq {\mathbb{E}_{\mathbf{x}_{1}, \mathbf{x}_{2} \in \mathcal{A}(\mathbf{x})}[\left\|f\left(\mathbf{x}_{1}\right)-f\left(\mathbf{x}_{2}\right)\right\|_2^2]}/{\epsilon^2}.
\end{aligned}
\end{equation}
Let $f(\mathbf{x})=1$ and  $\mathbb{E}_{\mathbf{x}_{1}, \mathbf{x}_{2} \in \mathcal{A}(\mathbf{x})}[\left\|f\left(\mathbf{x}_{1}\right)-f\left(\mathbf{x}_{2}\right)\right\|_2^2] = 2 - 2\mathbb{E}_{\mathbf{x}_{1}, \mathbf{x}_{2} \in \mathcal{A}(\mathbf{x})}[f\left(\mathbf{x}_{1}\right)^\top f\left(\mathbf{x}_{2}\right)]= 2-2\mathcal{L}_{a}$. And thus we have $R_{\varepsilon}\!\!= \!\!P_{\mathbf{x}_{1}, \mathbf{x}_{2} \in \mathcal{A}(\mathbf{x})}\{\|f\left(\mathbf{x}_{1}\right)\!-\!f\left(\mathbf{x}_{2})\| \!\geq\! \varepsilon\right\}\!\leq\!\frac{\sqrt{2-2\mathcal{L}_{a}}}{\varepsilon}$.
\end{proof}

\subsection{Upper bound of $\phi$}
\label{lambda_upper}
As SFA seeks to remove the contribution of the largest singular value of $\mathbf{H}$ from the spectrum $\sigma_1\geq\sigma_2\geq\cdots\geq\sigma_{d_h}$ of $\mathbf{H}$, it follows that for $\sigma_i$ we have the upper bound on the push-forward function such that $\phi(\sigma_i;k)\leq \bar{\phi}(\sigma_i)-\mathcal{O}(k)$ where $\bar{\phi}(\sigma_i)=\min(\sigma_i, \max\limits_{j\neq i}(\sigma_j))$ and $\mathcal{O}(k)\geq 0$.

 Figure \ref{fig:bound} illustrates the above bound. As is clear, the lower the value $k$ is, the more ample is the possibility to tighten that bound (the gap to tighten is captured by $\mathcal{O}(k)\geq0$).

Nonetheless, with this simple upper bound, we show that we clip/balance spectrum and so the loss in Eq. \eqref{equ:EignAlignRefine} enjoys lower energy than the loss in Eq. \eqref{equ:EignAlignOri}, on which  Theorem~\ref{theorem:gb} relies.

\begin{figure}[h]
\centering

\includegraphics[width=0.3\textwidth]{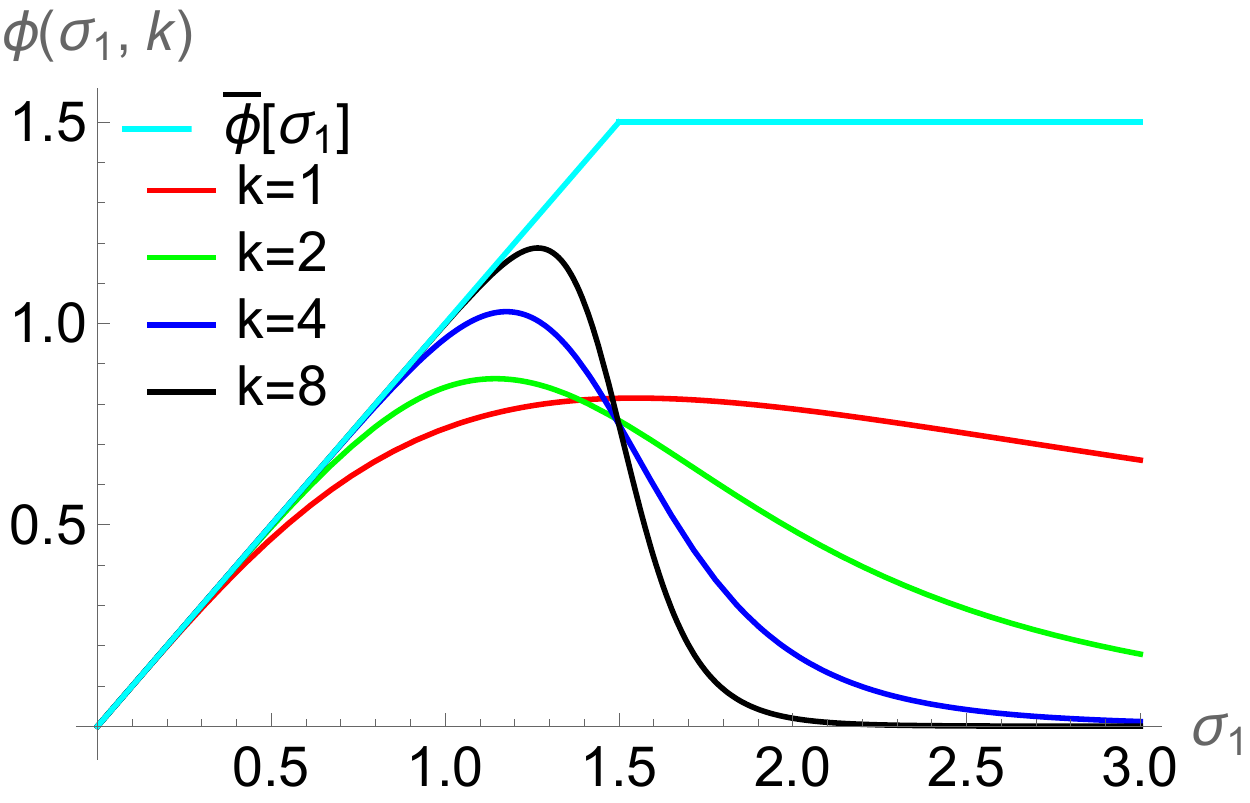}
\caption{Illustration of $\phi(\sigma_1;k)$ \vs $\bar{\phi}(\sigma_1)$.}
\label{fig:bound}
\end{figure}


Notice that the upper bound in Figure \ref{fig:bound} is also known as the so-called AxMin pooling operator \citelatex{s19_sup}.

\subsection{Derivation of MaxExp(F)}
\label{sec:maxexp}

Derivation of MaxExp(F) in Eq. \eqref{eq:maxexp} is based on the following steps. Based on SVD, we define $\mathbf{H}=\mathbf{U}\mathbf{\Sigma}\mathbf{V}^\top$, where $\mathbf{U}$ and $\mathbf{V}$ are left and right singular vectors of $\mathbf{H}$ and $\mathbf{\Sigma}$ are singular values placed along the diagonal. 

\vspace{0.1cm}
\noindent Let $\mathbf{A}^2\!=\!\mathbf{H}^\top\mathbf{H}\!=\!\mathbf{V}\mathbf{\Sigma}^2\mathbf{V}^\top$ and $\mathbf{A}\!=\!\big(\mathbf{H}^\top\mathbf{H})^{0.5}\!=\!\mathbf{V}\mathbf{\Sigma}\mathbf{V}^\top$. Let  $\mathbf{B}^2\!=\!(\mathbf{H}^\top\mathbf{H})^{-1}\!=\!\mathbf{V}\mathbf{\Sigma}^{-2}\mathbf{V}^\top$ and $\mathbf{B}\!=\!\big(\mathbf{H}^\top\mathbf{H}\big)^{-0.5}\!=\!\mathbf{V}\mathbf{\Sigma}^{-1}\mathbf{V}^\top$.

\vspace{0.1cm}
\noindent We apply MaxExp(F) from \citelatex{s17_supp,maxexp_sup,hosvd_sup} to $\mathbf{A}$, that is:
\begin{equation}
\begin{aligned}
\mathbf{C}&=\big(\mathbf{I}-(\mathbf{I}-\mathbf{A}/\text{Tr}(\mathbf{A}))^{\eta+\Delta\eta}\big)\\
&=\mathbf{V}\big(\mathbf{I}-(\mathbf{I}-\mathbf{\Sigma}/\text{Tr}(\mathbf{\Sigma}))^{\eta+\Delta\eta}\big)\mathbf{V}^\top.
\end{aligned}
\end{equation}
Multiplying $\mathbf{C}$ from the left side by $\mathbf{H}\mathbf{B}$, \ie, $\widetilde{\mathbf{H}}=\mathbf{H}\mathbf{B}\mathbf{C}$ yields: 
\begin{equation}
\begin{aligned}
\widetilde{\mathbf{H}}&=\mathbf{U}\mathbf{\Sigma}\mathbf{V}^\top\mathbf{V}\mathbf{\Sigma}^{-1}\mathbf{V}^\top \mathbf{V}\big(\mathbf{I}-(\mathbf{I}-\mathbf{\Sigma}/\text{Tr}(\mathbf{\Sigma}))^{\eta+\Delta\eta}\big)\mathbf{V}^\top\\
&=\mathbf{U}\big(\mathbf{I}-(\mathbf{I}-\mathbf{\Sigma}/\text{Tr}(\mathbf{\Sigma}))^{\eta+\Delta\eta}\big)\mathbf{V}^\top\\
&=\mathbf{H}\mathbf{B}\big(\mathbf{I}-(\mathbf{I}-\mathbf{A}/\text{Tr}(\mathbf{A}))^{\eta+\Delta\eta}\big)\\
&=\mathbf{H}\mathbf{B}\,\text{MaxExp}(\mathbf{A}; \eta+\Delta\eta).
\end{aligned}
\end{equation}

\vspace{0.1cm}
\noindent Matrices $\mathbf{A}=(\mathbf{H}^\top\mathbf{H})^{0.5}$ and $\mathbf{B}=(\mathbf{H}^\top\mathbf{H})^{-0.5}$ are obtained via highly-efficient Newton-Schulz iterations 
\citelatex{higham2008functions_sup,ns_sup} 
in Algorithm \ref{alg:sqrt} given $\zeta=10$ iterations. Figure \ref{fig:v2} shows the push-forward function 
\begin{equation}\text{MaxExp}(\sigma_i; \eta, \Delta\eta)=1-\Big(1-\frac{\sigma_i}{\sum_j\sigma_j}\Big)^{\eta+\Delta\eta},\end{equation} 
whose role is to rebalance the spectrum ($\sigma_i$ are coefficients on the diagonal of $\mathbf{\Sigma}$).

\begin{algorithm}[h]
\caption{Newton-Schulz iterations.}
\label{alg:sqrt}
\textbf{Input}: Symmetric positive-definite matrix $\mathbf{M}=\mathbf{H}^\top\mathbf{H}$, total iterations $\zeta$.\\
\textbf{Output}: Matrices $\mathbf{A}=(\mathbf{H}^\top\mathbf{H})^{0.5}$ and $\mathbf{B}=(\mathbf{H}^\top\mathbf{H})^{-0.5}$.

\begin{algorithmic}[1] 
\STATE $\mathbf{M}'=\mathbf{M}/\text{Tr}(\mathbf{M})$
\STATE $\mathbf{P} = \frac{1}{2}(3\mathbf{I}-\mathbf{M}')$, $\mathbf{Y}_0 = \mathbf{M}'\mathbf{P}$, $\mathbf{Z}_0 = \mathbf{P}$
\FOR{$i=1$ to $\mathrm \zeta$} 
    \STATE $\mathbf{P} = \frac{1}{2}(3\mathbf{I}-\mathbf{Z}_{i-1}\mathbf{Y}_{i-1})$\;
    \STATE $\mathbf{Y}_i = \mathbf{Y}_{i-1}\mathbf{P}$, $\mathbf{Z}_i = \mathbf{P}\mathbf{Z}_{i-1}$\;
\ENDFOR
\STATE $\mathbf{A} = \mathbf{P}\sqrt{\mathrm{Tr}(\mathbf{M})}$ and $\mathbf{B} = \mathbf{Y}_\zeta/\sqrt{\mathrm{Tr}(\mathbf{M})}$. 
\end{algorithmic}
\end{algorithm}

\subsection{Derivation of Grassman}
\label{sec:grass}

Derivation of Grassman feature map in Eq. \eqref{eq:grass} follows the same steps as those described in the \nameref{sec:maxexp} section above. The only difference is that the soft function flattening the spectrum is replaced with its  binarized variant illustrated in Figure \ref{fig:v3} describing a push-forward function:
\begin{equation}
  \text{Flat}(\sigma_i; \kappa, \Delta\boldsymbol{\kappa})=
    \begin{cases}
      1+\Delta\kappa_i & \text{if $i\leq\kappa$}\\
      0 & \text{otherwise},
    \end{cases}
\end{equation}
where $\Delta\kappa_i$ are coefficients of $\Delta\boldsymbol{\kappa}$ random vector drawn from from the Normal distribution (we choose the best augmentation variance) per sample, and $\kappa>0$ controls the spectrum cut-off.
%

\subsection{Derivation of Matrix Square Root.}
\label{sec:pnnoise}
Another approach towards spectrum rebalancing we try is based on derivations in the  \nameref{sec:maxexp} section above. The key difference is that the soft function flattening the spectrum is replaced with a matrix square root push-forward function:
\begin{align}
  &\text{PowerNorm}(\sigma_i; \beta, \Delta\boldsymbol{\beta})\nonumber\\
  &\quad=(1-\beta-\Delta{\beta})\sigma_i^{0.5}+(\beta+\Delta{\beta})\sigma_i^{0.75},
\end{align}
where $0\leq\beta\leq 1$ controls the steepness of the balancing push-forward curve and $\Delta{\beta}$ is 
drawn from the Normal distribution (we choose the best augmentation variance). 
We ensure that $0\leq\beta+\Delta{\beta}\leq 1$. 
In practice, we achieve the above function by Newton-Schulz iterations. Specifically, we have:
%
\begin{align}
\!\!\!\!\widetilde{\mathbf{H}}&=\mathbf{H}\mathbf{B}\big( (1-\beta\!-\!\Delta{\beta})\mathbf{A}^{0.5} +(\beta\!+\!\Delta{\beta})\mathbf{A}^{0.5}\mathbf{A}^{0.25}\big)\nonumber\\
&=\mathbf{H}\mathbf{B}\,\text{PowerNorm}(\mathbf{A}; \beta\!+\!\Delta{\beta}),
\end{align}
%
where $\mathbf{A}^{0.5}$ and $\mathbf{A}^{0.25}=\left(\mathbf{A}^{0.5}\right)^{0.5}$  are obtained by Newton-Schulz iterations.

\section{Dataset Description}
\label{sec:dataset}
Below, we describe the datasets and their statistics in detail:
\begin{itemize}
    \item
    \textbf{Cora, CiteSeer, PubMed.} These are well-known citation network datasets, in which nodes represent publications and edges indicate their citations. All nodes are labeled according to paper subjects~\citelatex{kipf2016semi_sup}.
    \item 
    
    \textbf{WikiCS.} It is a network of  Wikipedia pages related to computer science, with edges showing cross-references. Each article is assigned to one of 10 subfields (classes), with characteristics computed using the content's averaged GloVe embeddings~\citelatex{mernyei2020wiki_sup}.
    \item 
    
    \textbf{Am-Computer, AM-Photo.} Both of these networks are based on Amazon's co-purchase data. Nodes represent products, while edges show how frequently they were purchased together. Each product is described using a Bag-of-Words representation based on the reviews (node features). There are ten node classes (product categories) and eight node classes (product categories), respectively~\citelatex{mcauley2015image_sup}.
    
\end{itemize}
\begin{table}[h]
\caption{The dataset statistics.}
\label{tab:DatasetStats}
\centering
\resizebox{0.48\textwidth}{!}{
\begin{tabular}{l|ccccc}
\toprule
\textbf{Dataset} & \textbf{Type} & \textbf{Edges} & \textbf{Nodes} & \textbf{Attributes} & \textbf{Classes} \\
\midrule
Amazon-Computers & co-purchase & 245,778 & 13,381 & 767   & 10 \\
Amazon-Photo     & co-purchase & 119,043 & 7,487  & 745   & 8 \\
WikiCS           & reference   & 216,123 & 11,701 & 3,703 & 10 \\
Cora             & Citation    & 4,732   & 3,327  & 3,327 & 6 \\
CiteSeer         & Citation    & 5,429   & 2,708  & 1,433 & 7 \\
\bottomrule
\end{tabular}
}
\end{table}

\subsection{Implementation Details\label{sec:impl}}
\vspace{0.1cm}
\noindent \textbf{Image Classification.}
We use ResNet-18 as the backbone for image classification. We run 1000 epochs for CIFAR dataset and 400 epochs for imagenet-100. We follow the hyperparemeter setting in ~\citelatex{JMLR:v23:21-1155_sup}.

\vspace{0.1cm}
\subsection{Hyper-parameter Setting for node classification and clustering}
\label{app:hyperparam}
Below, we describe the hyperparameters we use for the main result. They are shown in Table \ref{tab:ModelAugSetting}.
\begin{table*}[h]
\centering
\resizebox{0.9\textwidth}{!}{  
\begin{tabular}{l|lllllllll}
\toprule\textbf{Dataset}  &\textbf{Hidden dims}. &\textbf{Projection Dims}.& \textbf{Learning rate}& \textbf{Training epochs} &$\mathbf{k}$ & $\mathbf{\tau}$ & \textbf{Acivation func.}\\
\midrule 
Cora         & $256$   & $256$    &$0.0001$ &$1,000$  &$1$ &$0.4$ & ReLU\\
CiteSeer     & $256$   & $256$    &$0.001$  &$500$   &$2$ &$0.9$ & ReLU\\
Am-Computer  & $128$   & $128$    &$0.01$   &$2,000$  &$1$ &$0.2$ &ReELU\\
Am-Photo     & $256$   & $64$     &$0.1$    &$2,000$  &$1$ &$0.3$ &ReLU\\
WikiCS       & $256$   & $256$    &$0.01$   &$3,000$  &$1$
&$0.4$ &PReLU\\
\bottomrule
\end{tabular}
}
\caption{The hyperparameters used for reproducing our results.}
\label{tab:ModelAugSetting}
\end{table*}
\subsection{Baseline Setting}
Table \ref{tab:bsc} shows the augmentation and constrictive objective setting of the major baseline of GSSL. Graph augmentations include: Edge Removing (ER), 
Personalized PageRank (PPR), Feature Masking (FM) and Node shuffling(NS). The contrastive objectives include: Information Noise Contrastive Estimation (InfoNCE),  Jensen-Shannon Divergence (JSD),  the Bootstrapping Latent loss (BL) and Barlow Twins (BT) loss.
\label{app:modelsetting}
\begin{table}[h]
\resizebox{0.45\textwidth}{!}{
\begin{tabular}{l|ccccccc}
\toprule \textbf{Method}  & \textbf{Graph Aug.} & \textbf{Spectral Feature Aug.} & \textbf{Contrastive Loss} \\
\midrule 
DGI    & NS   & $/$   & JSD \\
MVGRL  & PPR   & $/$   & JSD \\
GRACE  & ER+FM & $/$   & InfoNCE \\
GCA    & ER+FM & $/$   & InfoNCE \\
BGRL   & ER+FM & $/$   & BL \\
G-BT   & ER+FM & $/$   & BT \\
\midrule
SFA$_{\text{InfoNCE}}$ & ER+MF & $\checkmark$  & InfoNCE \\
SFA$_{\text{BT}}$  & ER+MF & $\checkmark$  & BT \\
\bottomrule
\end{tabular}
}
\caption{The setting of augmentation and contrastive objective of the major baseline and our approach.}\label{tab:bsc}
\end{table}

\section{Additional Empirical Results}
\label{sec:moreresult}

\noindent\textbf{Node Clustering.}
We also evaluate the proposed method on node clustering on Cora, Citeseer, and Am-computer datasets. We compare our model with t variational GAE (VGAE)~\citelatex{kipf2016variational_sup}, GRACE~\citelatex{DBLP:journals/corr/grace_sup}, and G-BT~\citelatex{bielak2021graph_sup}. We measure the performance by the clustering Normalized Mutual Information (NMI) and Adjusted Rand Index (ARI). We run each method 10 times. 
Table~\ref{tab:MainResultNodeClustering}  shows that our model achieves the best NMI and ARI scores on all benchmarks. To compare with a recently published contrastive model (\ie, SUGLR~\citelatex{mo2022simple_sup} and MERIT~\citelatex{jin2021multi_sup})), we combine SFA with these models and report their accuracy in Table~\ref{tab:node_classification_acc}.

\begin{table*}[!htbp]
\centering
\resizebox{0.7\textwidth}{!}{
\begin{tabular}{lcccccc}
\toprule
 & \multicolumn{2}{c}{ \textbf{Am-Computer} } & \multicolumn{2}{c}{ \textbf{Cora} } & \multicolumn{2}{c}{ \textbf{CiteSeer}} \\
\midrule \textbf{Method} & \textbf{NMI\%} & \textbf{ARI\%} & \textbf{NMI\%} & \textbf{ARI\%} & \textbf{NMI\%} & \textbf{ARI\%}  \\
\midrule 
K-means & $19.2\pm1.9$ &$ 8.6\pm1.3$   &$22.50\pm1.2$  &$22.1\pm2.1$  &$18.04\pm1.4$  &$18.33\pm1.8$\\
GAE     & $44.1\pm0.0$ &$25.8\pm1.0$   &$30.82\pm1.2$  &$25.47\pm0.9$ &$24.05\pm1.4$  &$20.56\pm0.8$\\
VGAE    & $42.6\pm0.1$ &$24.6\pm0.1$   &$32.92\pm1.2$  &$26.47\pm1.1$ &$26.05\pm0.9$  &$23.56\pm0.8$\\
\midrule 
GRACE   & $42.6\pm2.1$ &$24.6\pm1.3$   &$57.00\pm1.2$  &$53.74\pm1.5$ &$46.12\pm1.3$  &$44.50\pm1.1$ \\
G-BT & $ 41.0 \pm  1.9$ & $ 23.4\pm1.4$ & $54.90\pm2.2$ &$55.20\pm1.6$ &$45.00\pm2.3$ &$44.10\pm2.1$ \\
\midrule 
\rowcolor{LightCyan} SFA$_{\text{BT}}$ & $\textcolor{blue}{45.4 \pm 2.1}$ & $\textcolor{blue}{\mathbf{28.3 \pm 2.0}}$  &$\textcolor{blue}{55.91\pm2.3}$ &$\textcolor{blue}{55.36\pm1.7}$ &$\textcolor{blue}{46.46\pm2.3}$ &$\textcolor{blue}{\mathbf{45.37\pm1.8}}$ \\
\rowcolor{LightCyan} SFA$_{\text{InfoNCE}}$ & $\textcolor{blue}{\mathbf{46.2\pm2.2}}$ & $\textcolor{blue}{27.4\pm1.7}$ &$\textcolor{blue}{\mathbf{58.91}\pm1.1}$ &$\textcolor{blue}{\mathbf{56.96\pm1.6}}$ & $\textcolor{blue}{\mathbf{46.96\pm1.5}}$ &$\textcolor{blue}{44.97\pm1.8}$ \\
\bottomrule
\end{tabular}
}
\caption{Node Clustering result on Am-Computer, Cora, CiteSeer. \color{DarkBlue}{(Note that SFA$_{\text{InfoNEC}}$ and SFA$_{\text{BT}}$ can be directly compared with GRACE and G-BT respectively.)}\label{tab:MainResultNodeClustering}}
\end{table*}
   
\begin{table}[t]
    \centering
    \resizebox{0.3\textwidth}{!}{
    \begin{tabular}{l|ccc}
\toprule
  Accuracy            &    AM-Comp.    &       Cora    & Citeseer \\
\midrule
   K-means              &    $37.3$      &      $34.6$    & $38.4$   \\    
   GAE                  &    $54.0$      &      $41.2$    & $41.2$   \\     
   VGAE                 &    $55.4$      &      $55.9$    & $44.3$     \\
   \midrule   
   MERIT                &    $64.2$      &      $72.6$    & $69.8$  \\
   \rowcolor{LightCyan}SFA$_{\text{MERIT}}$ &   $\bf\textcolor{blue}{66.2}$       &      $\bf\textcolor{blue}{74.6}$    & $\bf\textcolor{blue}{70.9}$  \\
   \midrule
   MVGRL                &    $63.1$      &      $73.4$     & $70.1$  \\
   \rowcolor{LightCyan}SFA$_{\text{MVGRL}}$ &   $\bf\textcolor{blue}{66.0}$       &       $\bf\textcolor{blue}{73.9}$    & $\bf\textcolor{blue}{70.7}$  \\
   \midrule
   G-BT                 &    $62.4$      &      $65.6$    & $67.1$    \\
   \rowcolor{LightCyan}SFA$_{\text{BT}}$    &    $\bf\textcolor{blue}{65.6}$      &      $\bf\textcolor{blue}{67.4}$    & $\bf\textcolor{blue}{68.2}$ \\ 
   \midrule
   GRACE                &    $64.4$      &      $66.6$    & $68.1$     \\
   \rowcolor{LightCyan}SFA$_{\text{InfoNCE}}$  & $ \bf\textcolor{blue}{66.3}$     &      $\bf\textcolor{blue}{69.5}$    & $\bf\textcolor{blue}{69.2}$\\
   \midrule
   SUGRL                &   $65.2$       &      $73.2$    &  $70.5$ \\ 
   \rowcolor{LightCyan}SFA$_{\text{SUGRL}}$ &   $\bf\textcolor{blue}{66.7}$    &      $\bf\textcolor{blue}{74.4}$ & $\bf\textcolor{blue}{70.7}$ \\ 
 \bottomrule
    \end{tabular}
    }
    \caption{Node Clustering (accuracy \%).}
    \label{tab:node_classification_acc}
\end{table}

\vspace{0.1cm}
\noindent\textbf{Improved Alignment.}
We show more results of how SFA improves the alignment of two views during training (see Fig.~\ref{fig:more_ima}).
\begin{figure}[h]
\centering
\begin{subfigure}[b]{0.23\textwidth}
\includegraphics[width=\textwidth]{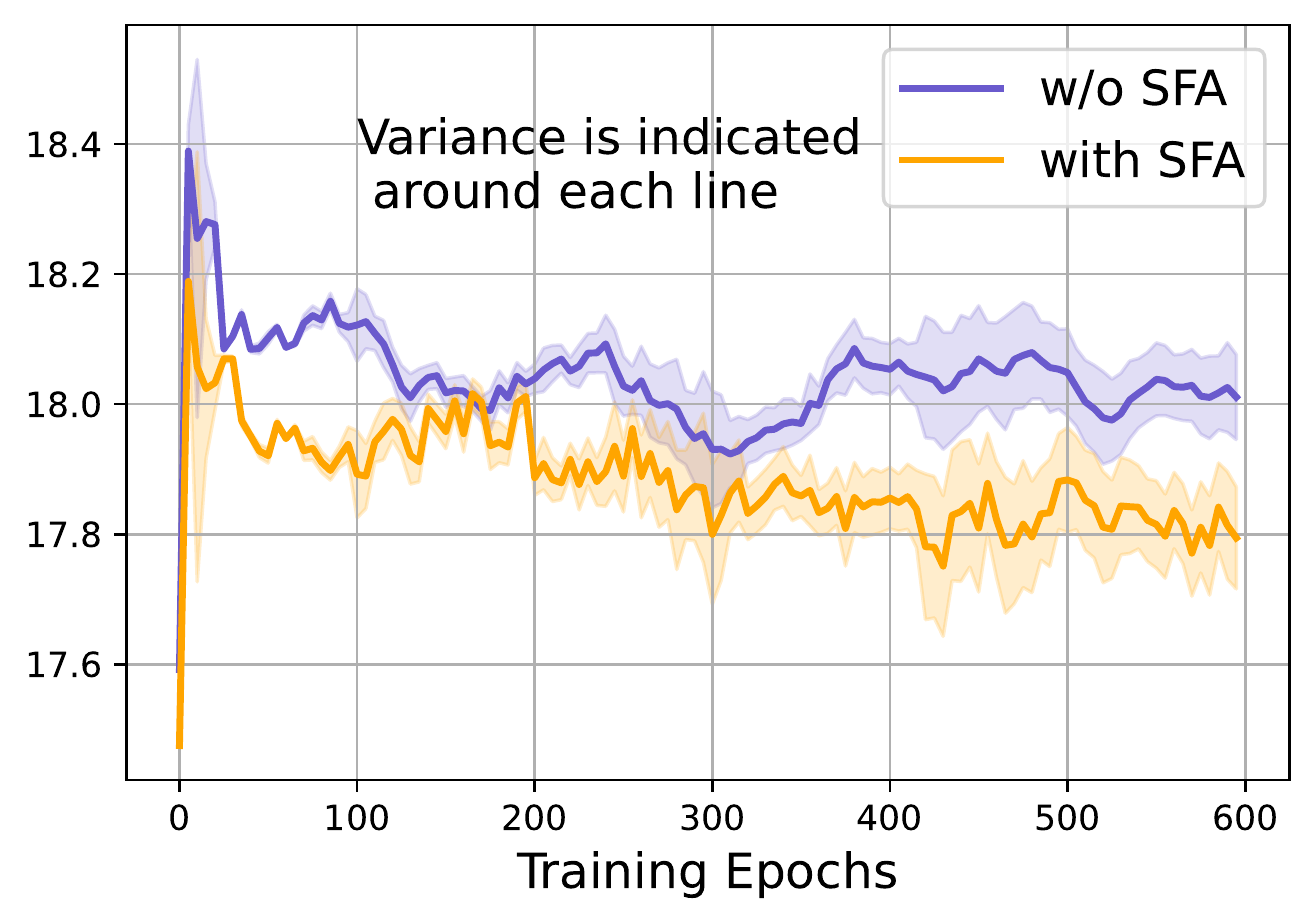}
\caption{CiteSeer}
\end{subfigure}
\hfill
\begin{subfigure}[b]{0.23\textwidth}
\includegraphics[width=\textwidth]{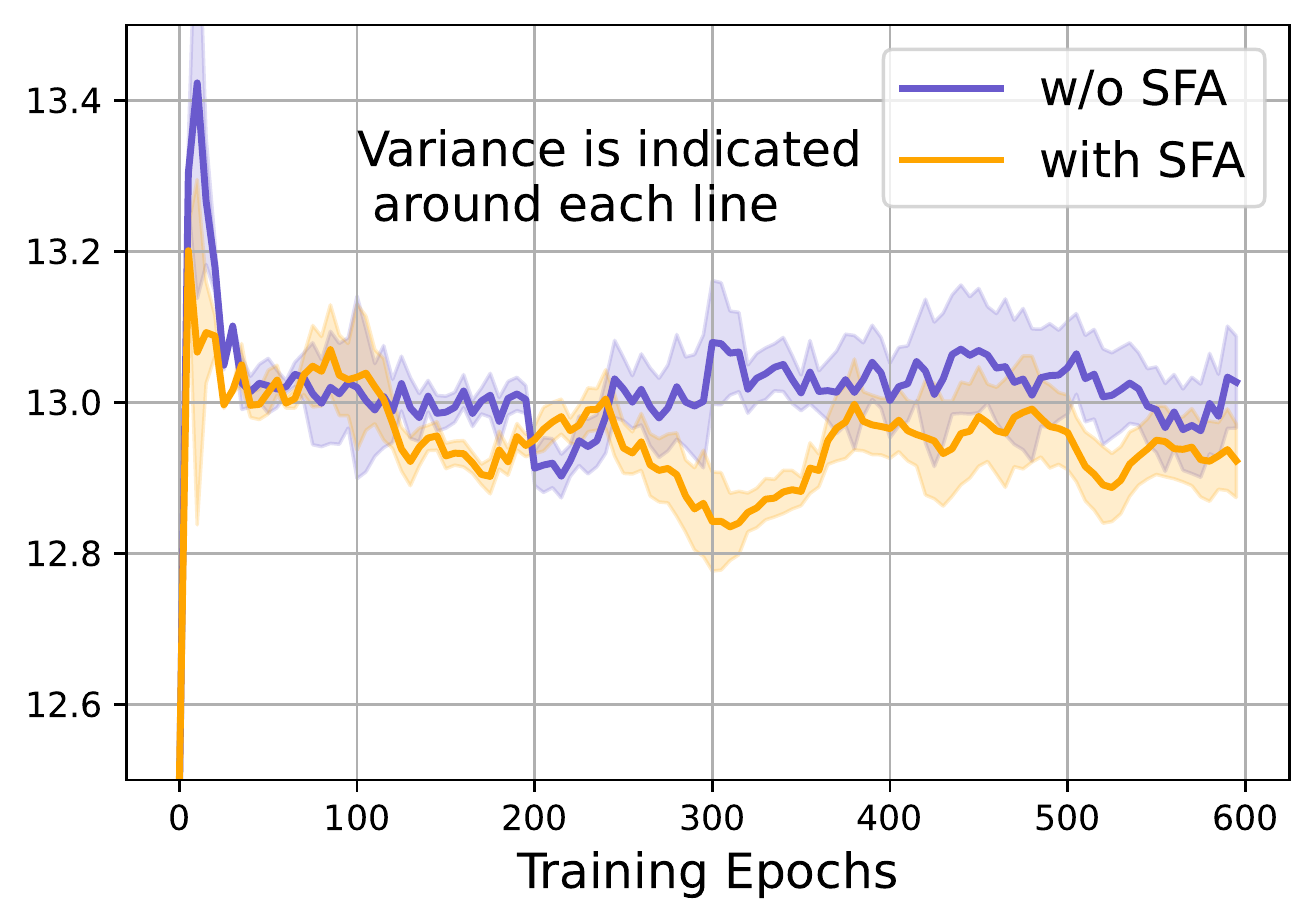}
\caption{WikiCS}
\end{subfigure}
\caption{Additional result on improved alignment on CiteSeer and WikiCS. Y-axis denote the alignment which is computed by $\|\mathbf{H}^\alpha -\mathbf{H}^\beta\|^2_F$ (or $\|\tilde{\mathbf{H}}^\alpha -\tilde{\mathbf{H}}^\beta\|^2_F$). The lower the curve the better.\label{fig:more_ima}}
\end{figure}

\section{Matrix Preconditioning}
\label{sec:mp}




\begin{table}[h]
    \centering
    \resizebox{0.48\textwidth}{!}{
    \begin{tabular}{l|ccc}
    \toprule
           Method                     &   Cora        &  CiteSeer      &      PubMed   \\
      \midrule
         Matrix Precond. (by LU factorization) & $78.1 \pm0.2$       &$67.7\pm0.3$     & $83.4\pm0.1$    \\
      Grassman (w/o noise) ($\kappa=15$)       & $81.1 \pm0.2$       &$72.3\pm0.3$        & $83.2\pm0.2$    \\
      PCA                          & $80.0 \pm0.1$       &$70.5\pm0.5$     &$81.5\pm0.3$     \\    
      \rowcolor{LightCyan} SFA$_\text{infoNCE}$          &$\bf 86.43\pm0.1$    &$\bf 76.1\pm0.3$   &$\bf 86.2\pm0.2$ \\
      \bottomrule
    \end{tabular}
    }
    \caption{\label{tab:mp}Comparison of SFA with Matrix Precond., Grassman feature map, and PCA.}
\end{table}

\vspace{0.1cm}
\noindent\textbf{Matrix Preconditioning (MP).} For the discussion on Matrix Preconditioning (MP) techniques, ``Matrix Preconditioning Techniques and Applications'' book~\citelatex{chen2005matrix_sup} 
shows how to design an effective preconditioner matrix $\mathbf{M}$ in order to obtain a numerical solution for solving a large linear system: 
\begin{equation}
    \mathbf{Ax}=\mathbf{b} \rightarrow \mathbf{MAx}=\mathbf{Mb}
\end{equation}
The goal of Matrix Preconditioning is to make $\tilde{\mathbf{A}} = \mathbf{MA}$ to have small conditional number $p= \sigma_{\text{max}} / \sigma_{\text{min}}$, (\eg,  $\mathbf{M} \approx \mathbf{A^{-1}}$ and $\tilde{\mathbf{A}} \approx \mathbf{I}$) so that solving $\mathbf{MAx}=\mathbf{Mb}$ is much easier, faster and more accurate than solving $\mathbf{Ax}=\mathbf{b}$.

\vspace{0.1cm}
\noindent\textbf{Spectrum rebalancing via MP.} Since the $\tilde{\mathbf{A}}$ has small condition number $p= \sigma_{\text{max}} / \sigma_{\text{min}}$, MP may somehow achieve similar effect on balancing the spectrum. We set ${\bf A=\mathbf{H}^\top \mathbf{H}}$ as ${\bf A}$ is required to be SPD (needs to be invertible) which additionally complicates recovery of the rectangular feature matrix $\tilde{\bf H}$ from $\mathbf{MA}$. MP requires inversions for which backpropagation through SVD is unstable (SVD backpropagation fails under so-called non-simple eigenvalues due to undetermined solution). \\

Below we  provide clear empirical results and compare our method to Matrix Preconditioning, Grassman feature map (without noise injection) and PCA (Table~\ref{tab:mp}). Specifically: 
\begin{itemize}
    \item We recover the augmented feature $\tilde{\mathbf{H}}$ from $\tilde{\mathbf{H}}^\top \tilde{\mathbf{H}} = \mathbf{MH}^\top\mathbf{H}$ where the preconditioner $\mathbf{M} =\mathbf{U}^{-1} \mathbf{L}^{-1}$ (where $\mathbf{L}$, $\mathbf{U}$ is the LU factorization of $\mathbf{H}^\top\mathbf{H}$). 
    \item Judging from the role preconditioner ${\bf M}$ fulfills (fast convergence of the linear system), ${\bf M}$ cannot solve our problem, \ie, it cannot  flatten some $\kappa>0$ leading singular values (where signal most likely resides) while leaving remaining singular values unchanged (our method achieves that as Fig.~\ref{fig:sim2} of the main paper shows).
    \item Grassmann feature map which forms subspace from $\kappa>0$ leading singular values is in fact closer ``in principle'' to SFA than MP methods. Thus, we apply SVD and flatten $\kappa$ leading singular values $\sigma_i$ while nullifying remaining singular values $\sigma_i$. 
    \item We also compare our approach to PCA given the best $\kappa=20$ components.
\end{itemize}

The impact of subspace size $\kappa$ on results with Grassmann feature maps (without noise injection) is shown in Table~\ref{impact_subspace_size}. Number $\kappa=15$ agrees with our observations that leading 12 to 15 singular values should be flattened (Fig~\ref{fig:Eigen1} of the main paper). 

\vspace{0.2cm}
Our SFA performs better as:
\vspace{0.2cm}

\begin{itemize}
    \item It does not completely nullify non-leading singular values (some of them are still useful/carry signal).
    \item It does perform spectral augmentation on the leading singular values.
    \item  SFA does not rely on SVD whose backpropagation step is unstable (\eg, undefined for non-simple singular including zero singular values).
\end{itemize}
    
\begin{table}[h]
      \centering
      \resizebox{0.47\textwidth}{!}{
      \begin{tabular}{c|ccccccccc|c}
      \toprule
  $\kappa$ &2  & 5 & 10 & 15 & 20 & 30 &50 &100 & 250 &SFA$_\text{infoNCE}$ \\
  \midrule
           &$78.7$ &$80.1$ &$80.5$ &$81.1$ &$80.3$ &$80.5$ &$80.4$ &$80.2$ &$79.8$ & $\textcolor{blue}{\mathbf{86.4}}$\\
  \bottomrule
      \end{tabular}
      }
      \caption{\label{impact_subspace_size}Impact of subspace size $\kappa$ in Grassman feature map without the noise injection (Cora).}
\end{table}

It should be noted that most of random MP methods in \citelatex{halko2011finding_sup} need to compute the eigenvalues/eigenvectors which is time-consuming even with random SVD, and this performs badly in practice.  Suppose $k$ is the iteration number of a random approach, it then requires $O(kq)$ to get top $q$ eigenvectors (\eg, the random SVD in~\citelatex{halko2011finding_sup}). Then one must manually set eigenvectors to yield equal contribution (\ie, $\sigma_i=1$ for all $i$) which is what we do exactly for the Grassmann feature map in Table~\ref{impact_subspace_size}.

Other plausible feature balancing models which are outside of the scope of our work include the so-called democratic aggregation \citelatex{democratic_sup,Lin_2018_ECCV_sup}.

\section{Why does the Incomplete Iteration of SFA Work?~\label{sec:why_it_works}}

Let $\mathbf{M}=\mathbf{H}^\top\mathbf{H}$ and let the power iteration be expressed as $\mathbf{M}^k\mathbf{r}^{(0)}$ where $\mathbf{r}^{(0)}\sim\mathcal{N}(0;\mathbf{I})$, as in Algorithm \ref{algo:1}. Let $\mathbf{M}=\mathbf{V}\mathbf{\Sigma}^2\mathbf{V}^\top$ be an SVD of $\mathbf{M}$, then the eigenvalue decomposition admits $\mathbf{M}^{k}\mathbf{V}=\mathbf{V}\mathbf{\Sigma}^2$. Let $\mathbf{r}^{(0)}=\mathbf{V}\widetilde{\mathbf{r}}^{(0)}$ then $\mathbf{M}^{k}\mathbf{r}^{(0)}=\mathbf{M}^{k}\mathbf{V}\widetilde{\mathbf{r}}^{(0)}=\mathbf{V}\mathbf{\Sigma}^{2k}\widetilde{\mathbf{r}}^{(0)}$. We notice that
\begin{equation}
\begin{aligned}
    \mathbf{M}^{k}\mathbf{r}^{(0)}&=\sigma_1^{2k}\sum_i \mathbf{v}_i\Big(\frac{\sigma_i}{\sigma_1}\Big)^{2k}\widetilde{{r}}_i^{(0)}\\
  &=\sigma_1^{2k}\sum_i\mathbf{v}_i\Big(\frac{\sigma_i}{\sigma_1}\Big)^{2k} \langle\mathbf{v}_i^\top, \mathbf{r}^{(0)}\rangle\\
  &=\sum_i\gamma_i\mathbf{v}_i,
    \end{aligned}
\end{equation}
where $\gamma_i=\sigma_1^{2k}\big(\frac{\sigma_i}{\sigma_1}\big)^{2k} \langle\mathbf{v}_i^\top,\mathbf{r}^{(0)}\rangle$ which clearly shows that $\sum_i\gamma_i\mathbf{v}_i$ is a linear combination of more than one $\mathbf{v}_i$ if $k\ll\infty$. Notice also that as the smaller $\sigma_i$ is compared to the leading singular value $\sigma_1$, the quicker the ratio  $\Big(\frac{\sigma_i}{\sigma_1}\Big)^{2k}$ declines towards zero as $k$ grows (it follows the  power law non-linearity in Figure \ref{fig:mr}). Therefore, leading singular values make a significantly stronger contribution to the linear combination $\sum_i\gamma_i\mathbf{v}_i$ compared to non-leading singular values.
Thus, for small $k$, matrix
\begin{equation}
\mathbf{M}_{\text{PowerIter}}=\frac{\mathbf{M}^{k}\mathbf{r}^{(0)}{\mathbf{r}^{(0)}}^\top\mathbf{M}^{k}}{\lVert \mathbf{M}^{k}\mathbf{r}^{(0)}\rVert_2^2}=\frac{\big(\sum_i\rho_i\mathbf{v}_i\big)\big(\sum_j\rho_j\mathbf{v}_j^\top\big)}{\lVert\sum_l\rho_l\mathbf{v}_l\rVert_2^2},
\end{equation}
is a low-rank matrix ``focusing'' on leading singular vectors of $\mathbf{M}$ more according to the power law non-linearity in Figure \ref{fig:mr}. Intuitively, for that very reason, as a typical matrix spectrum of $\mathbf{H}$ (and $\mathbf{M}$) also follows the power law non-linearity, $\mathbf{H}$ is balanced by the inverted power law non-linearity of $\mathbf{I}-\mathbf{M}_{\text{PowerIter}}$.
In the above equation, $\rho_i=\big(\frac{\sigma_i}{\sigma_1}\big)^{2k} \langle\mathbf{v}_i^\top,\mathbf{r}^{(0)}\rangle$.

\begin{figure}
    \centering
    \includegraphics[width=0.35\textwidth]{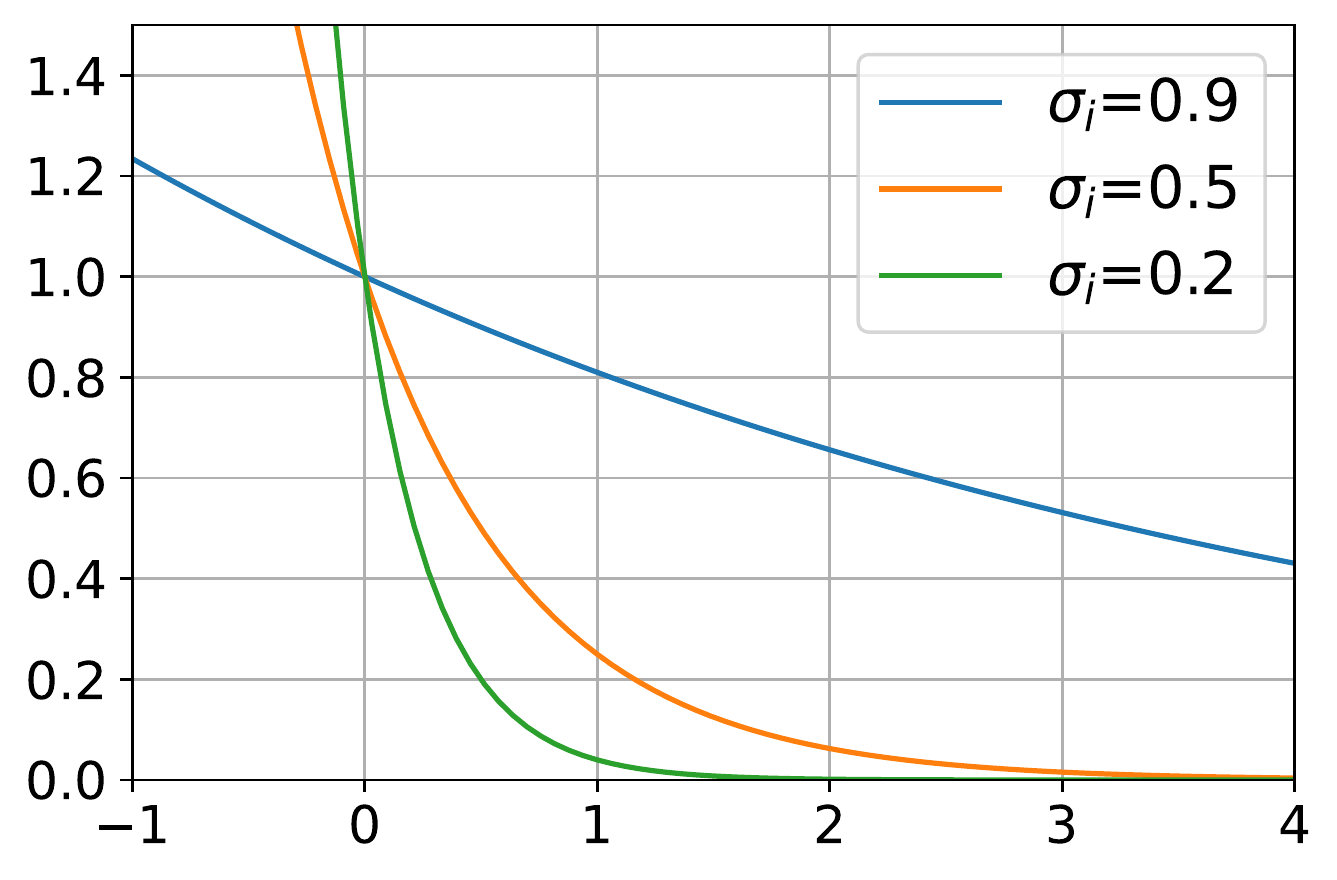}
    \caption{Plot of $\big(\frac{\sigma_i}{\sigma_1}\big)^{2k}$ given $\sigma_1=1$ exhibits the power law non-linearity.\label{fig:mr}}
\end{figure}

\section{Broader Impact and Limitations}
Empowering deep learning with the ability of reasoning and making predictions on the graph-structured data is of broad interest, GCL may be applied in many applications such as recommendation systems, neural architecture search, and drug discovery. The proposed graph contrastive learning framework with spectral feature augmentations is a general framework that can improve the effectiveness and efficiency of graph neural networks through model pre-training. It can also inspire further studies on the augmentation design from a new perspective. Notably, our proposed SFA is a plug-and-play layer which can be easily integrated with existing systems (\eg, recommendation system) at a negligible runtime cost. SFA facilitates training of high-quality embeddings for users and items to resolve the cold-start problem in on-line shopping (and other graph-based applications). One limitation of our method is that our work mainly serves as a plug-in for existing machine learning models and thus it does not model any specific fairness prior. Thus, the GCL model with SFA is still required to prevent the bias of the model (\eg, gender bias, ethnicity bias, \etc), as the provided data itself may be strongly biased during the processes of the data collection, graph construction, \etc. Exploring the augmentation with some fairness prior may address such a limitation. We assume is reasonable to let the GCL model tackle the fairness issue. 

\bibliographystylelatex{plainnat}
\bibliographylatex{aaai23sub}
\end{document}